\begin{document}

\twocolumn[
  \aistatstitle{Provable Mutual Benefits from Federated Learning in
  Privacy-Sensitive Domains}
  \aistatsauthor{Nikita Tsoy \And Anna Mihalkova \And Teodora Todorova \And
  Nikola Konstantinov}
  \aistatsaddress{INSAIT, Sofia University\\ Sofia, Bulgaria \And ETH Zurich \\ Zurich, Switzerland \And High School of
  Mathematics\\ Burgas, Bulgaria \And INSAIT, Sofia University\\ Sofia, Bulgaria}]

\begin{abstract}
Cross-silo federated learning (FL) allows data owners to train accurate machine learning models by benefiting from each others private datasets. 
Unfortunately, the model accuracy benefits of collaboration are often undermined by privacy defenses. Therefore, to incentivize client participation in privacy-sensitive domains, a FL protocol should strike a delicate balance between privacy guarantees and end-model accuracy.
In this paper, we study the question of when and how a server could design a FL protocol provably beneficial for all participants. First, we provide necessary and sufficient conditions for the existence of mutually beneficial protocols in the context of mean estimation and convex stochastic optimization. We also derive protocols that maximize the total clients' utility, given symmetric privacy preferences. Finally, we design protocols maximizing end-model accuracy and demonstrate their benefits in synthetic experiments.
\end{abstract}

\section{INTRODUCTION}
The fundamental reliance of modern machine learning algorithms on diverse, high-quality
data has recently led to a surge of interest in collaborative learning
techniques. Among these, federated learning (FL) stands out because it enables
collaborative training in a fully distributed manner by exchanging gradient
updates via a central server \citep{m17c, k16f, k21a}. However, despite the distributed nature of these protocols, recent work has shown that the
communicated gradients are often enough to reconstruct sensitive information
about the participants \citep{g20i, h22v, z19d}.

These vulnerabilities emphasize the importance of privacy protection techniques
in FL, e.g., in terms of differential privacy (DP) \citep{d14a}. Such defenses
usually add a certain level of noise to the communicated messages, with higher
noise levels leading to stronger privacy guarantees. However, this noise often
harms the end-model performance, potentially undermining all benefits of
collaboration \citep{b14p, a16d}.

Such a delicate balance between model accuracy and data privacy brings into
question the viability of FL, especially for entities managing sensitive data.
While such entities can improve their services using FL, they also seek to provide
strong privacy guarantees to their clients. Therefore, they may be unwilling to
participate in a FL protocol that does not match their preferences in terms of
the accuracy-privacy trade-off.

\paragraph{Contributions} We address the question of whether and how FL can be
made \textit{provably mutually beneficial to all its participants}. To this
end, we consider a formal framework that models the clients' privacy and
accuracy preferences and the server's objectives. We provide
necessary and sufficient conditions for the existence of mutually beneficial FL
protocols in the context of two classic learning tasks: mean estimation and
strongly convex stochastic optimization. Our results cover both DP and a
privacy notion based on a local data reconstruction loss.

We also study the question of how the server can maximize two natural
objectives, the total client utility or the end-model accuracy, using mutually
beneficial protocols. For the first objective, in the case of symmetric client
utilities, we provide rich theoretical descriptions of the optimal noise levels
and the corresponding utility benefits from collaboration. For the second
objective, we provide simulations on synthetic data, which demonstrate the
benefits of tailoring the FL protocol to the client incentives.

\section{RELATED WORK}
\paragraph{Accuracy-Privacy Trade-off in FL} Privacy is a major concern in FL
\citep{k21a} and differential privacy mechanisms are popular tools for
addressing this problem \citep{a16d, p23d}. However, often privacy protection
decreases the resulting accuracy of the model. Thus, numerous works study the
accuracy-privacy trade-off. In particular, \citet{b19p} and \citet{f20p} study
the optimal rates for stochastic optimization given a fixed level of privacy
protection, and \citet{a22o} and \citet{c21c} derive the min-max optimal rates
for private mean estimation. 

Our work diverges from these studies in three significant ways. First, our work
incorporates the participation constraints of the clients
(\cref{sec:feasibility}). Second, we provide a theoretically grounded
methodology to pick up points from the accuracy-privacy frontier (Sections
\ref{sec:framework}, \ref{sec:util-max-sym}, and \ref{sec:acc-max}). Third, our
approach accounts for differences in privacy preferences and allows the server
to personalize privacy protections as long as all clients benefit in utility
(\cref{sec:framework}).

\paragraph{Privacy-Related Incentives in FL} Several recent works consider
privacy-related incentives in federated learning. In particular, \citet{s23s}
consider how to move closer to the accuracy-privacy Pareto front under
statistical client heterogeneity. However, they do not model heterogeneity in
the client privacy preferences and do not provide tools to find the right level
of privacy protection. \citet{h22i} study ways of incentivizing clients to
contribute data to an FL protocol. However, they do not study how to construct
an optimal protocol for the accuracy-privacy trade-off, focusing on
incentivization schemes given a fixed learning protocol. Similarly to our work,
\citet{g11s} and \citet{a23s} also study how to construct a privacy protocol
beneficial for the server given participation constraints. However, they only
consider monetary payments as a way to compensate the clients, while we
consider compensations based on end-model accuracy, which seems more fitting
for the case of collaborative learning.

\paragraph{Other Incentives in Federated Learning} Other incentives, apart from
privacy, have also been considered in the context of FL. In particular, recent
work studies incentives to free-ride and corresponding ways to design fair
payment allocation schemes for the participants \citep{l20c, d21, b21o, k22m}.
Another lines of work studied incentives stemming from downstream competition
\citep{t24s, d23i}, as well as ML games for delegated data collection
\cite{s24d, a23d} and fine-tuning foundation models \cite{l23f}.

Also related is the field of data valuation \citep{w20p,p20s,r22s}. A key
conceptual difference to our work is that such studies seek to quantify the
usefulness of data relative to the other players, e.g., by evaluating the
amount of informativeness or redundancy of a data source given the others’
data. In contrast, we study the accuracy improvements with all clients’ data
and how it is affected by privacy defenses.

\section{GENERAL FRAMEWORK}
\label{sec:framework}

Our work analyzes a fundamental accuracy-privacy trade-off in federated
learning from a multi-objective perspective. We consider a standard federated
learning setup, where $N$ potential participants with local datasets
$(D_i)_{i=1}^N$ may train a model together. While collaboration may improve
end-model accuracy, it also increases privacy risk. Therefore, the clients will
join the protocol if and only if they expect a substantial increase in model's
accuracy from collaboration. To study this trade-off, we propose a general
framework that consists of four parts: the federated learning protocol, the
clients' evaluations of the protocol, the clients' utility functions, and the
server's objective.

\subsection{Federated Learning Protocol}

FL protocol $p$ describes all details of the server-client interactions,
including the distributed learning algorithm and privacy protection mechanism.
It details all training parameters, such as optimization step count, batch
size, and learning rate, and determines (possibly client-specific) privacy
protection parameters, such as privacy-preserving noise levels, $\vec{\alpha} =
(\alpha_1, \alpha_2, \ldots, \alpha_N)$. Our work focuses on adjusting
privacy-preserving noise levels, $\vec{\alpha}$, since they are the most
important in the accuracy-privacy trade-off. Given the noise levels, we
optimize the remaining parameters for the client utilities. We denote the
resulting protocol as $p_{\vec{\alpha}}$.

\subsection{Clients' Evaluations}

We assume that the clients could reason about the proposed protocol by
estimating the end model error and training procedure privacy risk. Formally,
we introduce two sets of functions: $(\err_i)_{i=1}^N$ and
$(\leak_i)_{i=1}^N$.

\paragraph{$\err_i\colon P \to [0, \infty)$} describes client $i$'s
evaluation of the end-model error given protocol $p$. Since final error is
unknown before training, the clients may use theoretical guarantees to reason
about it, such as \emph{expected} values or \emph{high probability upper
bounds}. For example, in the case of collaborative mean estimation, the clients
might use the expected mean squared error as an $\err$ function.

\paragraph{$\leak_i\colon P \to [0, \infty)$} describes client $i$'s
evaluation of the privacy violation given protocol $p$. Since privacy leaks are
hard to predict, the client will again reason via theoretical guarantees. For
example, the clients might quantify privacy protection using the popular notion
of \emph{differential privacy} \citep{d14a}.

\begin{definition}
  \label{defn:dp}
  A randomized algorithm $A\colon \mathcal{D} \to \mathcal{S}$ is
  $(\varepsilon, \delta)$-differentially private if, for all neighbouring
  datasets $D$ and $D'$ (i.e. datasets differing in one point),
  \begin{equation*}
    \forall S \subseteq \mathcal{S} \: \Pr(A(D) \in S) \le \me^\varepsilon
    \Pr(A(D') \in S) + \delta.
  \end{equation*}
\end{definition}

DP requires that the outputs of an algorithm are almost indistinguishable,
regardless of whether a certain individual data point included or not into the
dataset. Intuitively, $\epsilon$ measures how many bits of information the
algorithm's outputs reveal about the input, while $\delta$ represents the
probability of such guarantees failing to hold. Therefore, it is a common
practice to seek $\delta \ll \nicefrac{1}{\abs{D}}$ \citep{p23d}. In our work,
we consider algorithms that ensure $\delta = \O(\nicefrac{1}{\abs{D}^2})$.

\subsection{Clients' Utilities and Rationality}

We further assume that the clients use the evaluations from the previous
section to reason about the benefits of a protocol. Formally, client $i$ gets
utility $u_i(\err, \leak)$ from a protocol with estimated accuracy loss $\err$
and privacy loss $\leak$. To derive quantitative results about the
accuracy-privacy trade-off, we introduce a specific form of this utility
function. We also present results about general utility functions in Section
\ref{sec:general_existence}.

\paragraph{Parametric Utility Form} In our quantitative analysis, we assume the
following form of utility:
\[
  u_i(\err, \leak) = u^{p, \lambda_i}_i(\err, \leak) \defeq -\leak^p -
  \lambda_i \err^p,
\]
which corresponds to a negative weighted $l_p$-norm of the vector $(\err,
\leak)$. Here, parameter $\lambda_i$ measures the importance of accuracy for
participant $i$ and is responsible for the correct rescaling of $\err$ and
$\leak$ units. Notice that this utility is monotonically decreasing and
continuous in both $\err$ and $\leak$, which reflects the preferences of the
clients towards better accuracy and privacy guarantees.

To motivate this parametric form, we use the classic result from the
multi-objective optimization literature. Since the clients benefit from the
protocols with low accuracy error and privacy leakage, we can treat the
minimization of these objectives as a multi-objective optimization problem. To
describe its Pareto front, one may analyze the linear combinations of the
objectives, $F^\text{lin} \defeq \sum_{i=1}^N w_i^{\leak} \leak_i^p +
w_i^{\err} \err_i^p$, since the minimizers of $F^\text{lin}$ lie on the Pareto
front \citep{e18t}. However, since
\begin{multline*}
  F^\text{lin} = \sum_{i=1}^N w_i^{\leak} \Par*{\leak_i^p +
  \lambda_i \err_i^p} = \\
  \sum_{i=1}^N -w_i^{\leak} u_i^{p, \lambda_i}(\err_i,
  \leak_i), \text{ where } \lambda_i \defeq \frac{w_i^{\err}}{w_i^{\leak}},
\end{multline*}
the problem of finding a Pareto frontier protocol will naturally lead to the
problem of optimizing the utility functions of our parametric family.

In our work, we will focus on the family $u^{2, \lambda_i}_i$, as one the most
natural choices among $l_p$-norms. To evaluate the robustness of our results to
the choice of utility, we also discuss a different utility function in
Appendix \ref{sec:other_utilities}.

\paragraph{Participation Constraints}
Based on the privacy, accuracy and utility analysis of the protocol, the clients can reason about the benefitis of joining it, compared to the an outside options for training (such as training a model alone). This creates contraints on what protocols are feasible in the first place. In general, additional contraints may reflect not only the individual desires of clients, but also possible legal or fairness requirements imposed by other entities (e.g. governments or the server itself).

In this work, we focus on constraints coming from the rationality of the clients. We therefore require protocols to be \emph{mutually beneficial} as a natural notion of feasibility.
\begin{definition}
\label{defn:mutually_beneficial}
  A protocol $p_{\alpha}$ is called mutually beneficial if for any
  player $i$, $u_i(\err_i(p), \leak_i(p)) \ge u^0_i$, where $u^0_i$ is the
  utility from local training.
\end{definition}
Mutually beneficial protocol ensures that all clients will benefit from
collaborative learning compared to individual learning. Therefore, it is rational for them to participate. We usually consider
$u^0_i = u^i(\err_i^0, 0)$, where $\err_i^0$ is the accuracy loss of local
model. We study feasibility of this constraint in \cref{sec:feasibility}.

\subsection{Server's Objectives}

Since there might be several protocols that will satisfy participation
constraints, the server may seek additional desirable properties. In our paper,
we will study two possible objectives of the server. The first one is
\textit{maximization of the total utility} of all participants
(\cref{sec:util-max-sym}). We study this objective for the case of symmetric
client utilities and provide a precise theoretical description the optimal
protocols. The second objective is the \textit{maximization of the server model
accuracy} (\cref{sec:acc-max}). We study this objective empirically and show
how personalization of privacy protection can improve the accuracy of the
global model.

\section{FEASIBILITY OF COLLABORATION}
\label{sec:feasibility}

We now focus on the existence of mutually-beneficial protocols. We
quantitatively analyze two canonical learning tasks: mean estimation (Section
\ref{sec:dp-mean-setting}) and strongly convex stochastic optimization (Section
\ref{sec:dp-sgd-existance}). First, we use DP as a privacy notion in both
setups. To evaluate the robustness of our findings to the choice of privacy
notion, we study also mean estimation with a privacy notion based on
reconstruction loss in Section \ref{sec:b-mean-setting}. Finally, we generalize
our findings in the limit case as the number of clients grows to infinity to
general utility functions in Section \ref{sec:general_existence}.

\subsection{Feasibility of DP Mean Estimation}
\label{sec:dp-mean-setting}

\paragraph{Setup} We begin the quantitative exploration of our framework with
the problem of DP mean estimation.

Formally, we assume that the clients are interested in parameter $\mu$ and
have noisy observations of this parameter. Entity $i$'s noisy observations of
this parameter, $(x^j_i)_{j=1}^n$, satisfy
\[
  \E(x^j_i) = \mu, \: \Var(x^j_i) = \sigma^2, \: \supp x^j_i \subseteq \mu +
  \Br*{-B/2, B/2}.
\]
Let $\bar{x}_i \defeq \nicefrac{\sum_{j=1}^n x^j_i}{n}$ be a local average of
all samples. The server will choose among the protocols $p_{\vec{\alpha}}$ in
which entity $i$ reveals the following noisy message to other participants $m_i
= \bar{x}_i + \epsilon_i$, where $\epsilon_i \sim \N(0, \alpha_i^2)$.

\paragraph{Accuracy Loss} The entities' accuracy loss will be the root of the
mean squared error
\[
  \err_i = \sqrt{\E_{\{D_i\}_{i=1}^N, \{\epsilon_i\}_{i=1}^N}((\hat{\mu}_i -
  \mu)^2)},
\]
where $\hat{\mu}_i$ is the best unbiased linear predictor of $\mu$ that can be
constructed from $D_i \cup \{m_1, \dots, m_N\}$. 
\begin{theorem}[Proof in \cref{sec:dp-mean-predictor-proof}]
  \label{thm:dp-mean-predictor}
  $\hat{\mu}_i$ has the following property
  \[
    \E((\hat{\mu}_i - \mu)^2) = \frac{1}{\gamma_i + \rho},
  \]
  where $\rho \defeq \frac{n}{\sigma^2}$, $\beta_i \defeq
  \frac{1}{\frac{1}{\rho} + \alpha_i^2}$, $\gamma_i \defeq
  \sum_{k \neq i} \beta_k$.
\end{theorem}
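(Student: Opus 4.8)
The plan is to recognize $\hat{\mu}_i$ as the classical precision‑weighted (inverse‑variance) combination of a handful of independent unbiased estimators of $\mu$, and then to evaluate the resulting variance. The first, and main, step is to argue that for the purpose of building a \emph{linear unbiased} predictor of $\mu$, all the data available to client $i$ collapses to the statistics $\bar{x}_i$ together with $\{m_k : k\neq i\}$. Concretely: (i) among linear combinations of the points of $D_i$ with a prescribed total coefficient, the equally weighted one has minimal variance (the samples being uncorrelated with common variance $\sigma^2$), so $D_i$ enters only through $\bar{x}_i$, which is unbiased for $\mu$ with $\Var(\bar{x}_i) = \sigma^2/n = 1/\rho$; (ii) the message $m_i = \bar{x}_i + \epsilon_i$ is redundant once $\bar{x}_i$ is available, since its only new ingredient $\epsilon_i$ is zero‑mean noise independent of everything else and can only inflate the variance, so its optimal coefficient is $0$; (iii) for $k\neq i$, the $m_k$ are independent of $D_i$ and of one another (local datasets and privacy noises being independent), each unbiased for $\mu$ with variance $\Var(\bar{x}_k) + \alpha_k^2 = 1/\rho + \alpha_k^2 = 1/\beta_k$.

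Given this reduction, the problem becomes finite‑dimensional. Writing $\hat{\mu}_i = w_0 \bar{x}_i + \sum_{k\neq i} w_k m_k$, unbiasedness is equivalent to $w_0 + \sum_{k\neq i} w_k = 1$, and by independence $\E((\hat{\mu}_i - \mu)^2) = \Var(\hat{\mu}_i) = w_0^2/\rho + \sum_{k\neq i} w_k^2/\beta_k$. Minimizing this convex objective over the affine constraint — by Lagrange multipliers, or directly by Cauchy–Schwarz applied to $1 = w_0 + \sum_{k\neq i} w_k$ — yields the minimum value $\bigl(\rho + \sum_{k\neq i}\beta_k\bigr)^{-1} = (\rho + \gamma_i)^{-1}$, attained at $w_0 \propto \rho$ and $w_k \propto \beta_k$. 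This is precisely the claimed identity.

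I expect the genuine work to lie entirely in justifying the reduction (i)–(iii): namely, that the best linear unbiased predictor over all of $D_i \cup \{m_1,\dots,m_N\}$ is unchanged when one restricts to the reduced collection of statistics. Once that is pinned down, the remaining optimization is the textbook precision‑weighting computation and is routine. A secondary point worth stating explicitly is the role of the moment assumptions on the $x_i^j$ (common mean $\mu$, common variance $\sigma^2$, samples uncorrelated, datasets independent across clients): these are exactly what make $\bar{x}_i$ unbiased with variance $1/\rho$ and make the summary statistics pairwise uncorrelated, so that the variance of a linear combination is the weighted sum of the individual variances.
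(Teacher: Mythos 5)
Your proposal is correct and follows essentially the same route as the paper: the paper's proof jumps straight to minimizing $\sum_{k\neq i} a_k^2(\alpha_k^2 + 1/\rho) + a_i^2/\rho$ subject to $\sum_k a_k = 1$ via Lagrange multipliers, obtaining $a_k \propto \beta_k$, $a_i \propto \rho$ and the value $(\gamma_i+\rho)^{-1}$, exactly your precision-weighting computation. The only difference is that you explicitly justify the reduction of $D_i \cup \{m_1,\dots,m_N\}$ to the statistics $\bar{x}_i, \{m_k\}_{k\neq i}$ (your steps (i)--(iii)), which the paper takes for granted by posing the optimization directly over those statistics.
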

As we can see, the accuracy for client $i$ will depend on the
``informativeness'' of the messages of other clients, $\beta_k, k \neq i$.
``Informativeness'' $\beta_k$ is decreasing in noise $\alpha_k$ and is bounded
by the ``informativeness'' of the noiseless message, $\rho$.

\paragraph{Privacy Loss} We will use the standard result for DP of the Gaussian
mechanism \citep[][see Theorem \ref{thm:dp-mean-privacy} in
\cref{sec:background}]{d14a}, which gives the following bound for DP budget:
\[
  \varepsilon_i \le \frac{\kappa}{\alpha_i}, \text{ where } \kappa =
  \frac{\sqrt{2 \ln(1.25 n^2)} B}{n}.
\]
Here, $\kappa$ describes the strength of privacy concerns. This strength is
small when the observations of each client are similar (small $B$) or when
their message is close to the true mean of the distribution (big $n$).

The entities will use this upper bound to reason about their privacy loss,
$\leak_i = \nicefrac{\kappa}{\alpha_i}$. We assume that the clients will use
this upper bound even if $\varepsilon_i \ge 1$ and \cref{thm:dp-mean-privacy}
does not hold.

\paragraph{Participation Incentives} These choices result in utility function
\[
  u_i = -\leak_i^2 - \lambda_i \err_i^2 = -\frac{\kappa^2}{\alpha_i^2} -
  \frac{\lambda_i}{\gamma_i + \rho}.
\]
To find the utility of training alone, we consider limit $\forall k \:
\alpha_k \to \infty$ when clients use only their local data. This limit gives
$u^0_i = -\nicefrac{\lambda_i}{\rho}$. Therefore, the protocol is mutually
beneficial when
\begin{equation}
  \label{eq:dp-mean-gen-system}
  \forall i \: u_i \ge u^0_i \iff -\frac{\kappa^2 \rho
  \beta_i}{\rho - \beta_i} + \frac{\lambda_i \gamma_i}{\rho (\gamma_i + \rho)}
  \ge 0.
\end{equation}

Now, we provide a necessary and sufficient condition for the existence of
parameters $\beta_k$ (and corresponding $\alpha_k$) that yield a mutually
beneficial protocol.
\begin{theorem}[Proof in \cref{sec:dp-mean-existence-proof}]
  \label{thm:dp-mean-existence}
  \cref{eq:dp-mean-gen-system} has solutions if and only if
  \[
    \sum_{i=1}^N \zeta_i > 1, \text{ where } \zeta_i \defeq
    \frac{\lambda_i}{\lambda_i + \kappa^2 \rho^2}.
  \]

  Moreover, if the system has solutions, it has a series of solutions in
  this form,
  \[
    \beta_i = \zeta_i b, b \in \Br[\bigg]{0, \Par[\bigg]{1 -
    \frac{1}{\sum_{i=1}^N \zeta_i}} \rho}.
  \]
\end{theorem}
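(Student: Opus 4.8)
\emph{Reformulation.} The plan is to first put the system \cref{eq:dp-mean-gen-system} into a form in which the ansatz $\beta_i = \zeta_i b$ becomes transparent. Set $c \defeq \kappa^2 \rho^2$, so that $\zeta_i = \frac{\lambda_i}{\lambda_i + c}$ and, crucially, $c \zeta_i = \lambda_i (1 - \zeta_i)$. Writing $S \defeq \sum_{k=1}^N \beta_k$, $\gamma_i = S - \beta_i$, and $Z \defeq \sum_{i=1}^N \zeta_i$, and multiplying \cref{eq:dp-mean-gen-system} by $\rho > 0$, each client's constraint becomes
\[
  \frac{c\, \beta_i}{\rho - \beta_i} \;\le\; \frac{\lambda_i \gamma_i}{\gamma_i + \rho}, \qquad \beta_i \in [0, \rho),
\]
where $\beta_i < \rho$ encodes $\alpha_i > 0$ (finite privacy leakage) and $\beta_i = 0$ corresponds to client $i$ training alone. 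I read ``has solutions'' as ``has a solution with not all $\beta_i = 0$'', since $\beta = 0$ is always a degenerate, non-collaborative solution.

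\emph{Sufficiency and the explicit family.} Assuming $Z > 1$, I would verify directly that $\beta_i = \zeta_i b$ solves the system for every $b \in [0, (1 - 1/Z)\rho]$. Then $S = bZ$ and $\gamma_i = b(Z - \zeta_i)$; substituting $\lambda_i = \frac{c \zeta_i}{1 - \zeta_i}$ into the displayed inequality and cancelling the common positive factor $c \zeta_i b$ (the case $b = 0$ being trivial), a short manipulation collapses it to $b(Z - \zeta_i) \le \rho(Z - 1)$. This holds because $b \le (1 - 1/Z)\rho = \frac{\rho(Z-1)}{Z} \le \frac{\rho(Z-1)}{Z - \zeta_i}$. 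Finally $\beta_i = \zeta_i b \in [0, \rho)$ since $\zeta_i < 1$, so the protocol is well defined; this gives the $\Leftarrow$ direction together with the stated one-parameter family.

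\emph{Necessity.} Conversely, given any solution with some $\beta_i > 0$, I would treat each client's inequality separately: clear denominators, collect the $\beta_i \gamma_i$ terms, divide by $c + \lambda_i$, and use $\frac{c}{c + \lambda_i} = 1 - \zeta_i$ to obtain $\beta_i [\gamma_i + \rho(1 - \zeta_i)] \le \rho \gamma_i \zeta_i$. Substituting $\gamma_i = S - \beta_i$ makes the $\rho \beta_i \zeta_i$ terms cancel, leaving the clean bound $\zeta_i \ge \frac{\beta_i(\gamma_i + \rho)}{\rho S}$ (also trivially true when $\beta_i = 0$). Summing over $i$ and using $\sum_i \beta_i = S$,
\[
  Z \;=\; \sum_{i=1}^N \zeta_i \;\ge\; 1 + \frac{1}{\rho S} \sum_{i=1}^N \beta_i \gamma_i \;=\; 1 + \frac{S^2 - \sum_i \beta_i^2}{\rho S}.
\]
To conclude $Z > 1$ it remains to show $\sum_i \beta_i^2 < S^2$; this fails only if at most one $\beta_i$ is nonzero, but a single positive $\beta_i$ forces $\gamma_i = 0$, contradicting that client's inequality. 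Hence $Z > 1$, and combining with the previous step gives the iff.

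\emph{Main obstacle.} The constructive half is essentially verification once the reparametrization is in place, so the crux is the necessity direction: turning $N$ coupled inequalities into the single scalar condition $\sum_i \zeta_i > 1$. The step that makes this work is noticing that, after substituting $\gamma_i = S - \beta_i$, the cross term $\rho \beta_i \zeta_i$ cancels and one is left with a bound linear in $\zeta_i$ that sums to $1$ plus a manifestly nonnegative remainder. A secondary subtlety is the degenerate case (solutions with a single nonzero $\beta_i$, and the trivial $\beta = 0$), which must be excluded to obtain the strict inequality.
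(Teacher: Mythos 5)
Your proof is correct and follows essentially the same route as the paper: reduce each client's constraint to an inequality linear in $\zeta_i$ (your $\zeta_i \ge \beta_i(\gamma_i+\rho)/(\rho S)$ is the paper's quadratic form divided by $\rho\varGamma$), sum for necessity, and verify the ansatz $\beta_i = \zeta_i b$ for sufficiency. You are in fact slightly more careful than the paper on the degenerate cases (the all-zero solution and the strictness argument ruling out a single nonzero $\beta_i$), which the paper's step from $\sum_i \beta_i^2$ to $\varGamma^2$ glosses over.
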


To interpret these results, we make two key observations. First,
\cref{eq:dp-mean-gen-system} implies $\rho \alpha_i^2 \ge \frac{\kappa^2
\rho^2}{\lambda_i}$. Thus, ratio $\nicefrac{\rho^2 \kappa^2}{\lambda_i}$
describes the minimally acceptable level of DP noise for client $i$. This level
of noise ``minimally'' trades off the privacy concerns (described by $\kappa$)
for accuracy gains (bounded by $\nicefrac{1}{\rho}$). The existence condition
in Theorem \ref{thm:dp-mean-existence} requires that preference towards
accuracy $\lambda_i$ is sufficiently large so that this minimal noise level
$\nicefrac{\rho^2 \kappa^2}{\lambda_i}$ is small enough to allow accuracy
benefits for everyone. Second, if all other parameters are fixed and
$\lambda_i$ are bounded from below, the sum in Theorem
\ref{thm:dp-mean-existence} diverges: collaboration becomes mutually
beneficial. This property shows that the accuracy benefits will outweigh the
potential privacy concerns as the number of players grows.

\subsection{Feasibility of DP Stochastic Optimization}
\label{sec:dp-sgd-existance}

\paragraph{Setup} Now, we analyze DP stochastic optimization.

Formally, we assume that the clients are interested in parameters $\w$ that
minimize strongly convex objective function $f$ on the closed and convex
set $W$, such that $f^* \defeq \min_{\w \in \mathbb{R}^d} f(\w) = \min_{\w \in
W} f(\w)$ and $\diam(W) \le D$. Each entity $i$ has access to gradient oracle
$\vec{g}_i$, which allows it to use its data points to get the unbiased
estimate of the gradient of the objective
\begin{align*}
  & \forall \w \: \nabla f(\w) = \E_{x^j_i}(\vec{g}_i(\w, x^j_i)),\\
  & \E(\norm{\nabla f(\w) - \vec{g}_i(\w, x^j_i)}^2) \le \sigma^2,\\
  & \forall \w \: \supp \vec{g}_i(\w, \cdot) \subseteq \nabla f(\w) +
  \mathrm{B}\Par*{B / 2},
\end{align*}
where $\mathrm{B}(R) \defeq \Bc*{\vec{y} \given \norm{\vec{y}} \le R}$. We
assume that the objective function is $L$-smooth on $\mathbb{R}^d$ and
$\mu$-strongly convex on $W$ (see \cref{sec:background} for the definitions of
smoothness and strong convexity).

The server will communicate with the clients for $m$ rounds. At each round, the
server gets the batch mean of local gradient estimates of the clients. The
clients use $b = \Floor*{\nicefrac{n}{m}}$ points per batch selected
randomly without replacement. The server then optimally aggregates
the gradient estimates and updates the global parameters
(\cref{alg:dp-sgd-algorithm}). In this algorithm, we do not consider the
additional randomization over the choice of clients because we assume that the
server will be able to determine who participated in each round of
computation. Moreover, we will pass through the data only once because even one
pass through the data guarantees the optimal asymptotic behavior of the
training objective.

\begin{algorithm}
  \caption{Collaborative Learning Protocol}
  \label{alg:dp-sgd-algorithm}
  \begin{algorithmic}
    \STATE {\bfseries Input:} protocol $p_{\vec{\alpha}}$
    \STATE Server randomly initializes $\w^0 \in W$
    \FOR{$i=1$ {\bfseries to} $N$}
      \STATE Client $i$ randomly chooses $\pi_i\colon [m] \to [m]$
    \ENDFOR
    \FOR{$t=1$ {\bfseries to} $m$}
      \FOR{$i=1$ {\bfseries to} $N$}
        \STATE Client $i$ samples $\vec{\xi}^t_i \sim \N(\vec{0},
        \alpha_i^2 \I)$
        \STATE Client $i$ calculates \[
          \vec{m}^t_i = \vec{\xi}^t_i + \frac{1}{b} \sum_{j=1}^b
          \vec{g}_i\Par*{\w^{t-1}, x^{(\pi_i(t) - 1) b + j}_i}
        \]
        \STATE Client $i$ sends $\vec{m}^t_i$ to the server
      \ENDFOR
      \STATE Server computes $\vec{g}^t = \sum_{i=1}^N a_i \vec{m}^t_i$
      \STATE Server updates $\vec{w}^t = \mat{\varPi}_W (\vec{w}^{t-1} -
      \eta^t \vec{g}^t)$
    \ENDFOR
  \end{algorithmic}
\end{algorithm}

\paragraph{Accuracy Loss} For fixed noise levels $\vec{\alpha}$, one can show
the following upper bound on the distance to optimal parameter $\w^*$. Here,
aggregation weights $(a^*_i)_{i=1}^N$ and step sizes $(\eta^t_*)_{t=1}^m$ are
chosen to minimize the bound.

\begin{theorem}[Proof in \cref{sec:dp-sgd-final-acc-proof}]
  \label{thm:dp-sgd-final-acc}
  Optimization outcome $\Delta \w^m \defeq \E(\norm{\w^m - \w^*}^2)$
  satisfies
  \[
    \Delta \w^m \le
    \begin{cases}
      \frac{1}{\Par*{1 + \frac{\chi}{2 - \chi} (m - T)} L \mu \varGamma}, &
      m \ge T,\\
      \Par*{1 - \frac{\chi}{2}}^m \frac{y^0_*}{L \mu \varGamma}, & m < T,
    \end{cases}
    \eqdef \Delta \w^{m, \text{ub}},
  \]
  where $T \defeq \Ceil[\Big]{-\frac{\ln(y^0_*)}{\ln\Par*{1 -
  \frac{\chi}{2}}}}$, $\chi \defeq \frac{\mu}{L}$, $y^0_* \defeq \frac{L \mu n
  N D^2}{\sigma^2}$, $\varGamma \defeq \sum_{i=1}^N \beta_i$, $\beta_i =
  \frac{1}{\nicefrac{1}{\rho} + d \alpha_i^2}$, $\rho \defeq
  \nicefrac{b}{\sigma^2}$.
\end{theorem}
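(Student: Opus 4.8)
The plan is to run the standard convergence analysis of projected SGD for an $L$-smooth, $\mu$-strongly convex objective, with the per-round gradient taken to be the server's optimally aggregated message $\vec{g}^t = \sum_i a^*_i \vec{m}^t_i$, and to optimize the step sizes $\eta^t$ inside the resulting one-step bound. First I would pin down the conditional first two moments of $\vec{g}^t$: since the protocol passes through the data only once, the $b$ samples client $i$ uses in round $t$ are disjoint from those that produced $\w^{t-1}$, so $\E[\vec{m}^t_i \mid \w^{t-1}] = \nabla f(\w^{t-1})$, and because the Gaussian term adds variance $d\alpha_i^2$ while the batch mean of $b$ gradient estimates has variance at most $\sigma^2/b = 1/\rho$, we get $\E[\norm{\vec{m}^t_i - \nabla f(\w^{t-1})}^2 \mid \w^{t-1}] \le 1/\beta_i$. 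Messages of distinct clients are independent, so with $\sum_i a_i = 1$ the aggregate is conditionally unbiased with conditional variance at most $\sum_i a_i^2/\beta_i$; minimizing this under $\sum_i a_i = 1$ (Cauchy--Schwarz, with equality at $a^*_i = \beta_i/\varGamma$) leaves conditional variance at most $V \defeq 1/\varGamma$.

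Next I would derive the one-step recursion for $\Delta\w^t \defeq \E\norm{\w^t - \w^*}^2$. Expanding the projected update and using that the projection onto $W$ is non-expansive and $\w^* \in W$, $\Delta\w^t \le \Delta\w^{t-1} - 2\eta^t\,\E\langle\nabla f(\w^{t-1}),\w^{t-1}-\w^*\rangle + (\eta^t)^2(\E\norm{\nabla f(\w^{t-1})}^2 + V)$. The assumptions make $\w^*$ the unconstrained minimizer, so $\nabla f(\w^*) = 0$, and I would lower-bound the inner product by the convex combination, with weight $\theta = \eta^t L/2$, of co-coercivity of $\nabla f$ and $\mu$-strong convexity on $W$; this exactly cancels the $\norm{\nabla f(\w^{t-1})}^2$ term and gives $\Delta\w^t \le (1 - 2\eta^t\mu + (\eta^t)^2\mu L)\Delta\w^{t-1} + (\eta^t)^2 V$ for any $\eta^t \le 2/L$. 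Minimizing in $\eta^t$ yields the feasible $\eta^t_* = \mu\Delta\w^{t-1}/(\mu L\Delta\w^{t-1} + V) \le 1/L$ and the clean recursion $\Delta\w^t \le \Delta\w^{t-1} - \mu^2(\Delta\w^{t-1})^2/(\mu L\Delta\w^{t-1} + V)$. (In the protocol $\eta^t$ is fixed in advance from the recursively propagated upper bound on $\Delta\w^{t-1}$; since the one-step map is monotone in $\Delta\w^{t-1}$, a short induction shows the true iterates obey the same bound and the greedy choice minimizes the final bound $\Delta\w^m$.)

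Finally I would solve this recursion after rescaling $y^t \defeq L\mu\varGamma\,\Delta\w^t$, which (with $V = 1/\varGamma$, $\chi = \mu/L$) becomes $y^t \le y^{t-1} - \chi(y^{t-1})^2/(1 + y^{t-1}) = y^{t-1}(1 + (1-\chi)y^{t-1})/(1 + y^{t-1})$. Two elementary facts close it: if $y^{t-1} \ge 1$ then $y^t \le (1 - \chi/2)y^{t-1}$; and if $y^{t-1} \le 1$ then, writing $v \defeq 1/y^{t-1} \ge 1$ and using the identity $v^2 + v = (v + 1 - \chi)(v + \chi) - \chi(1-\chi)$, one gets $1/y^t \ge 1/y^{t-1} + \chi/(2-\chi)$. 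Since $\Delta\w^0 \le D^2$ and $\varGamma \le Nn/\sigma^2$ (each $\beta_i \le \rho \le n/\sigma^2$), the initialization obeys $y^0 \le L\mu\varGamma D^2 \le y^0_*$, and $(y^t)$ is non-increasing. Iterating the first fact gives $y^m \le (1 - \chi/2)^m y^0_*$, and this quantity is $> 1$ precisely for $m < T$; in particular $y^T \le 1$, and iterating the second fact gives $1/y^m \ge 1 + \frac{\chi}{2-\chi}(m - T)$ for $m \ge T$. Dividing by $L\mu\varGamma$ recovers the two branches of $\Delta\w^{m,\text{ub}}$; monotonicity of $(y^t)$ makes the case split clean, and the degenerate case $y^0_* \le 1$ (i.e. $T \le 0$) is covered by the first branch applied from $m = 0$.

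I expect the main difficulty to be keeping the constants exactly right in the last two steps. The interpolation weight must be taken as $\theta = \eta^t L/2$ precisely so that the gradient-norm term vanishes and the contraction factor is $1 - 2\eta^t\mu + (\eta^t)^2\mu L$ (rather than a cruder $1 - \eta^t\mu$, or $1 - 2\eta^t\mu + (\eta^t)^2 L^2$ which would only give a rate scaling with $\chi^2$), and the partial-fraction identity above is exactly what upgrades the naive constant $\chi/2$ to the sharp $\chi/(2-\chi)$ in the $\O(1/m)$ phase. The rest --- the variance minimization, the monotonicity argument justifying the greedy step choice, and the bookkeeping around the crossover time $T$ and the ceiling in its definition --- is routine.
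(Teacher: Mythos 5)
Your proposal is correct and follows essentially the same route as the paper: optimal aggregation weights yielding gradient variance $1/\varGamma$, the one-step recursion $\Delta\w^{t+1} \le (1 - 2\eta\mu + \eta^2\mu L)\Delta\w^t + \eta^2/\varGamma$, greedy step-size choice giving $y^{t+1} = y^t - \chi (y^t)^2/(1+y^t)$ after rescaling, and the two-phase analysis (geometric contraction while $y\ge 1$, then $\O(1/t)$ decay). The only differences are cosmetic: you obtain the one-step bound via an interpolation of co-coercivity and strong convexity rather than the lemma $\norm{\nabla f(\w)}^2 \le 2L(f(\w)-f^*)$, and you handle the second phase by showing $1/y^t$ increases by at least $\chi/(2-\chi)$ per step instead of comparing against the explicit sequence $z^t = 1/(1+\frac{\chi}{2-\chi}t)$ — both yield the same constants.
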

As we can see, we get a similar dependence to \cref{thm:dp-mean-predictor}.
However, in this case, the optimization outcome for client $i$ will
also deteriorate in their noise contribution $\alpha_i$. Similarly to the
previous case, we assume $\err_i = \sqrt{\Delta \w^{m, \text{ub}}}$.

\paragraph{Privacy Loss} We use the result of \citet{f22h}, applied to the
standard Gaussian mechanism \citep{d14a}.

\begin{theorem}[Proof in \cref{sec:dp-sgd-privacy-proof}]
  \label{thm:dp-sgd-privacy}
  If $m \ge 2$ and the server uses $\cref{alg:dp-sgd-algorithm}$ and chooses
  $\varepsilon_{0, i} \in \Par[\big]{0, \min\Par[\big]{1,
  \ln\Par[\big]{\frac{n}{16 \ln(\nicefrac{2}{\delta})}}}}$, $\delta_0 =
  \nicefrac{1}{m n^2}$, and $\delta = \nicefrac{1}{n^2}$, the client will have
  the local $\Par[\big]{\varepsilon_i, \frac{1 + (\me^{\varepsilon_i} + 1) (1 +
  \me^{-\varepsilon_{0, i} / 2})}{n^2}}$-DP guarantee, where $\varepsilon_i$
  has the following property:
  \[
    \label{eq:dp-sgd-final-privacy}
    \varepsilon_i \le \frac{16 \sqrt{2 \me \ln(1.25 m n^2) \ln(4 n^2)}
    B}{\sqrt{m} b \alpha_i} \eqdef \frac{\kappa}{\alpha_i}.
  \]
\end{theorem}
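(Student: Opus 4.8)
The plan is to view one run of \cref{alg:dp-sgd-algorithm}, from any single client's perspective, as a shuffled single‑pass mechanism, and then combine two ingredients: the classical Gaussian mechanism bound of \citet{d14a} for the privacy of each individual round, and the amplification‑by‑shuffling bound of \citet{f22h} to exploit that each local sample is used in exactly one, randomly chosen, round.

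First I would bound the per‑round privacy. Fixing a client $i$ and conditioning on the public transcript through round $t-1$ (so that the iterate $\w^{t-1}$ is determined), the message $\vec m^t_i$ is the output of the Gaussian mechanism applied to the block average $q_t = \frac1b\sum_{j=1}^b \vec{g}_i(\w^{t-1}, x_i^{(\pi_i(t)-1)b+j})$. Because $\supp \vec{g}_i(\w,\cdot)\subseteq \nabla f(\w)+\mathrm{B}(B/2)$, replacing one sample of client $i$ moves a single summand of $q_t$ by a vector of Euclidean norm at most $B$, so $q_t$ has $\ell_2$‑sensitivity at most $B/b$; adding $\N(\vec{0},\alpha_i^2\I)$ then makes the map from block $t$ to $\vec m^t_i$ satisfy $(\varepsilon_{0,i},\delta_0)$‑DP with $\varepsilon_{0,i}\le \sqrt{2\ln(1.25/\delta_0)}\,B/(b\alpha_i)$ by the classical Gaussian mechanism bound of \citet{d14a} (legitimate since $\varepsilon_{0,i}\le 1$ by hypothesis), which with $\delta_0 = 1/(mn^2)$ reads $\varepsilon_{0,i}\le \sqrt{2\ln(1.25mn^2)}\,B/(b\alpha_i)$.

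Next I would recast the protocol so that \citet{f22h} applies. Since the permutation $\pi_i$ is internal to client $i$ and not revealed, from the adversary's viewpoint client $i$ shuffles its $n$ samples into $m$ consecutive blocks of size $b$ and emits one Gaussian‑noised block average per round, each round depending on the past only through the public iterate $\w^{t-1}$. A fixed sample thus enters exactly one, uniformly random, block; conditioned on the transcript so far, every round whose block excludes that sample uses only fresh noise and untouched data, hence is post‑processing, so all of the sample's privacy loss is carried by the single block containing it, placed at a uniformly random position among the $m$ rounds. This is exactly the shuffle‑then‑process‑in‑blocks setting covered by \citet{f22h}, with the per‑block $(\varepsilon_{0,i},\delta_0)$‑DP Gaussian mechanism as the local randomizer and $m$ controlling the amplification. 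Their bound, valid in the stated range of $\varepsilon_{0,i}$, converts this into an $(\varepsilon_i,\delta)$‑DP guarantee for the whole transcript with $\varepsilon_i\le 16\sqrt{\me\ln(4/\delta)/m}\,\varepsilon_{0,i}$ once the monotone $\varepsilon_{0,i}$‑dependent factors are bounded via $\varepsilon_{0,i}\le 1$ (the source of the $\sqrt{\me}$), while the failure probability accumulates $m$ copies of $\delta_0$ together with the approximate‑DP correction $(\me^{\varepsilon_i}+1)(1+\me^{-\varepsilon_{0,i}/2})$. Taking $\delta = 1/n^2$, so that $m\delta_0 = 1/n^2$, reproduces the stated failure probability, and substituting $\delta = 1/n^2$ and the first‑step bound on $\varepsilon_{0,i}$ yields $\varepsilon_i \le 16\sqrt{2\me\ln(1.25mn^2)\ln(4n^2)}\,B/(\sqrt{m}\,b\,\alpha_i) = \kappa/\alpha_i$.

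The hard part will be the reduction in the third step: \cref{alg:dp-sgd-algorithm} is an adaptive, interactive protocol in which a data point influences not only its own round but every later round through the shared iterates $\w^t$, whereas \citet{f22h} is stated for a static shuffle‑model mechanism. Making this rigorous requires an adaptive‑composition and post‑processing argument establishing that, conditioned on the public transcript, the only data‑dependent release is the one block containing the point, so that the amplification result applies with $m$ as the number of shuffled contributions; one must also verify that the per‑block Gaussian mechanism satisfies the single‑sample DP property and the ``clone'' conditions required by \citet{f22h}. Once that is in place, the remaining steps (the sensitivity computation, the Gaussian mechanism constants, the $m\delta_0$ accounting, and the $\varepsilon_{0,i}\le 1$ simplifications) are routine.
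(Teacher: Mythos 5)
Your proposal is correct and follows essentially the same route as the paper: bound the per-round leakage via the Gaussian mechanism with sensitivity $B/b$ and $\delta_0 = \nicefrac{1}{mn^2}$, then apply the amplification-by-shuffling bound of \citet{f22h} with the $m$ batches as the shuffled elements, simplify the resulting expression using $\varepsilon_{0,i}\le 1$ and $\ln(1+x)\le x$, and account for $\delta + m\delta_0(\me^{\varepsilon_i}+1)(1+\me^{-\varepsilon_{0,i}/2})$. The only remark is that the reduction you flag as the ``hard part'' is less delicate than you suggest, since the version of the shuffling theorem the paper invokes (Theorem 3.8 of \citealt{f22h}) is already stated for adaptive local randomizers $R^t(z_{1:t-1},\cdot)$, so the main modeling step is simply to treat each size-$b$ batch as a single element of the shuffled dataset and each round's noisy gradient computation as the local randomizer.
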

Here, $\kappa$ behaves similarly to the mean estimation case but has an
additional multiplier, $\nicefrac{1}{\sqrt{m}}$, which comes from the
shuffling of the data. Similarly to the previous section, we assume the clients
use this bound to reason about privacy leak $\leak_i =
\nicefrac{\kappa}{\alpha_i}$. (Here, we will again assume that the clients will
ignore the restrictions on $\varepsilon_{0, i}$ and $m$.)

\paragraph{Participation Incentives} To choose the optimal batch size $b$ for
training, we consider the behavior of $\err_i$ and $\leak_i$ when $m \to
\infty$, $b \to \infty$, and $\alpha_i \approx 0$. In this limit, we get
\[
  \err_i^2 \propto \frac{1}{m \varGamma} \propto \frac{1}{n N}, \:
  \leak_i^2 \propto \frac{\ln(n)^2}{b^2 m}
\]
We see that the increase in batch size does not greatly affect the accuracy of
estimates but strongly decreases the privacy leak. Thus, our analysis considers
$m = T$, resulting in utility function
\[
  u_i = -\frac{\kappa^2}{\alpha_i^2} - \frac{\lambda_i}{L \mu \varGamma},
\]
where $\kappa \defeq \frac{16 \sqrt{2 e \ln(1.25 T n^2) \ln(4 n^2) T} B}{n}$.
We also ignore that $T$ is an integer and assume that $T =
\max\Par[\Big]{-\frac{\ln(y^0_*)}{\ln\Par*{1 - \frac{\chi}{2}}}, 1}$, $y^0_* =
\frac{L \mu n N D^2}{\sigma^2}$, $b = \frac{n}{T}$.

To model the utility of local training, $u^0_i$, we consider limit
$\alpha_i \to 0$ and $\forall k \neq i \: \alpha_k \to \infty$ for the accuracy
error and $\leak_i \to 0$ for privacy leakage. This limit gives $u^0_i =
-\nicefrac{\lambda_i}{L \mu \rho}$. Therefore, a protocol is mutually
beneficial when
\begin{equation}
  \label{eq:dp-sgd-gen-system}
  \forall i \: u_i \ge u^0_i \iff \psi_i \Par*{1 - \frac{\rho}{\varGamma}}
  \ge \frac{\beta_i}{\rho - \beta_i},
\end{equation}
where $\psi_i \defeq \frac{\lambda_i}{L \mu d \kappa^2 \rho^2}$. The
corresponding necessary and sufficient condition for the existence of a
mutually beneficial protocol is the following.

\begin{theorem}[Proof in \cref{sec:dp-sgd-existence-proof}]
  \label{thm:dp-sgd-existence}
  \cref{eq:dp-sgd-gen-system} has a solution if and only if the following
  inequality has a solution
  \[
    \sum_{i=1}^N \frac{\psi_i x}{(\psi_i + 1) x + 1} \ge x + 1, \: x \ge 0.
  \]
\end{theorem}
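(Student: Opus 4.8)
The plan is to reduce the system \eqref{eq:dp-sgd-gen-system} to a single scalar inequality by exploiting the same kind of scaling structure that appeared in Theorem~\ref{thm:dp-mean-existence}. First I would fix the quantity $\varGamma = \sum_k \beta_k$ and treat it as a free parameter; write $c \defeq 1 - \nicefrac{\rho}{\varGamma}$, which is the common factor on the left side of every constraint. Since each $\beta_i$ lives in $(0,\rho)$, the map $\beta_i \mapsto \nicefrac{\beta_i}{(\rho - \beta_i)}$ is an increasing bijection from $(0,\rho)$ onto $(0,\infty)$; so for a fixed $c$ the $i$-th constraint $\psi_i c \ge \nicefrac{\beta_i}{(\rho-\beta_i)}$ is satisfied exactly by $\beta_i$ in an interval $(0, \beta_i^{\max}(c)]$ with $\beta_i^{\max}(c) = \nicefrac{\rho\,\psi_i c}{(1 + \psi_i c)}$, provided $c \ge 0$ (if $c<0$, i.e. $\varGamma<\rho$, no constraint can hold, so we may assume $\varGamma \ge \rho$, equivalently $c\in[0,1)$).

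Next I would impose consistency: a solution $(\beta_i)$ with the prescribed $\varGamma$ exists iff we can choose $\beta_i \in (0,\beta_i^{\max}(c)]$ summing to $\varGamma$. Because the $\beta_i$ can be taken arbitrarily small, the only real constraint is $\sum_i \beta_i^{\max}(c) \ge \varGamma$; and since $\varGamma = \nicefrac{\rho}{(1-c)}$, this becomes
\[
  \sum_{i=1}^N \frac{\rho\,\psi_i c}{1 + \psi_i c} \ \ge\ \frac{\rho}{1-c}.
\]
Dividing by $\rho$ and substituting $x \defeq \nicefrac{c}{(1-c)} \ge 0$ (a bijection between $c \in [0,1)$ and $x \ge 0$, with $c = \nicefrac{x}{(x+1)}$) turns $\nicefrac{\psi_i c}{(1+\psi_i c)}$ into $\nicefrac{\psi_i x}{((\psi_i+1)x + 1)}$ and the right-hand side into $x+1$, yielding exactly the stated inequality $\sum_i \nicefrac{\psi_i x}{((\psi_i+1)x+1)} \ge x+1$. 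I would then argue the equivalence in both directions: given a solution $x$ of the scalar inequality, set $c, \varGamma$ accordingly, pick any $\beta_i \le \beta_i^{\max}(c)$ with $\sum_i \beta_i = \varGamma$ (possible by the inequality plus continuity/intermediate-value on the simplex), and recover $\alpha_i$ from $\beta_i = \nicefrac{1}{(\nicefrac{1}{\rho} + d\alpha_i^2)}$; conversely any feasible $(\beta_i)$ gives $\varGamma \ge \rho$, hence a valid $x$ satisfying the inequality by summing the individual constraints.

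The main obstacle is the bookkeeping around the endpoint cases and the direction of monotonicity: one must check that $\varGamma \ge \rho$ is forced (so $c \ge 0$ and the substitution $x = \nicefrac{c}{(1-c)}$ is legitimate), that strict versus non-strict inequalities line up with Definition~\ref{defn:mutually_beneficial} (which uses $\ge$), and that the "fill the simplex" step is valid — i.e. that $\sum_i \beta_i^{\max}(c) \ge \varGamma$ together with each $\beta_i^{\max}(c) > 0$ really does let us hit the target sum $\varGamma$ with every $\beta_i$ in its allowed range. The last point is immediate since we only need an upper bound on each $\beta_i$ and the lower bound is $0$, but it is worth stating cleanly. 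Everything else is the routine algebra of rewriting $\nicefrac{\beta}{(\rho-\beta)} \le \psi c$ as $\beta \le \nicefrac{\rho\psi c}{(1+\psi c)}$ and composing the two substitutions $c = 1 - \nicefrac{\rho}{\varGamma}$ and $x = \nicefrac{c}{(1-c)}$.
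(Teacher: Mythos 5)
Your proposal is correct and follows essentially the same route as the paper: the same substitution $x = \varGamma/\rho - 1$ (your $x = c/(1-c)$ with $c = 1 - \rho/\varGamma$), the same rewriting of each constraint as $\beta_i \le \rho\psi_i x / ((\psi_i+1)x+1)$, summation for necessity, and a choice of $\beta_i$ at or below these maxima for sufficiency. The only cosmetic difference is that you fix $\varGamma$ first and scale the $\beta_i$ to hit it exactly, whereas the paper sets $\beta_i$ equal to the maxima and uses monotonicity in $c$ to absorb the resulting larger $\varGamma$; both are valid.
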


\begin{corollary}[Proof in \cref{sec:dp-sgd-existence-simple-proof}]
  \label{cor:dp-sgd-existence-simple}
  \cref{eq:dp-sgd-gen-system} has a solution if
  $\sum_{i=1}^N \frac{\psi_i}{\psi_i + 2} \ge 2$ and only if $\sum_{i=1}^N
  \frac{\psi_i}{\sqrt{\psi_i + 1}} \ge 4$.
\end{corollary}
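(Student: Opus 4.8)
The plan is to reduce to the scalar inequality from \cref{thm:dp-sgd-existence} and analyze the single-variable function $h(x) \defeq \sum_{i=1}^N \frac{\psi_i x}{(\psi_i + 1) x + 1} - (x + 1)$ on $x \ge 0$. By \cref{thm:dp-sgd-existence}, \cref{eq:dp-sgd-gen-system} is solvable if and only if $h(x) \ge 0$ for some $x \ge 0$. Since $h(0) = -1 < 0$, any such witness must satisfy $x > 0$. The ``if'' part will then amount to exhibiting an explicit witness, and the ``only if'' part to bounding every possible witness from above.

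For the sufficient condition, I would simply test the symmetric point $x = 1$. There each summand equals $\frac{\psi_i}{\psi_i + 2}$, so $h(1) = \sum_{i=1}^N \frac{\psi_i}{\psi_i + 2} - 2$. Hence $\sum_{i=1}^N \frac{\psi_i}{\psi_i + 2} \ge 2$ immediately gives $h(1) \ge 0$, i.e. $x = 1$ witnesses solvability, so \cref{eq:dp-sgd-gen-system} has a solution.

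For the necessary condition, suppose $h(x_0) \ge 0$ for some $x_0 > 0$. I would bound each term by applying AM--GM to its denominator, $(\psi_i + 1) x_0 + 1 \ge 2\sqrt{(\psi_i + 1) x_0}$, which yields $\frac{\psi_i x_0}{(\psi_i + 1) x_0 + 1} \le \frac{\psi_i \sqrt{x_0}}{2 \sqrt{\psi_i + 1}}$. Summing and using $\sum_{i=1}^N \frac{\psi_i x_0}{(\psi_i+1)x_0 + 1} \ge x_0 + 1$ gives $x_0 + 1 \le \frac{\sqrt{x_0}}{2} \sum_{i=1}^N \frac{\psi_i}{\sqrt{\psi_i + 1}}$, i.e. $\sum_{i=1}^N \frac{\psi_i}{\sqrt{\psi_i + 1}} \ge \frac{2(x_0 + 1)}{\sqrt{x_0}} = 2\Par*{\sqrt{x_0} + \frac{1}{\sqrt{x_0}}} \ge 4$, the last step being AM--GM again (equality at $x_0 = 1$). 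This is the claimed necessary condition.

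I do not anticipate a genuine obstacle here: once $h$ is introduced, the only real ideas are to probe the point $x = 1$ for sufficiency and to invoke AM--GM twice --- once inside each summand and once in $x_0$ --- for necessity. The minor care points are to note that a witness cannot be $x_0 = 0$ (so that $\sqrt{x_0}$ in the denominator is legitimate), and to observe that both estimates are tight at $x = 1$, which explains why the sufficient and necessary thresholds differ only through replacing the denominator $\psi_i + 2$ with $2\sqrt{\psi_i + 1}$.
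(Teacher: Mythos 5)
Your proposal is correct and follows essentially the same route as the paper: sufficiency by testing the witness $x=1$, and necessity via the AM--GM bound $(\psi_i+1)x+1 \ge 2\sqrt{(\psi_i+1)x}$ followed by minimizing $2(\sqrt{x}+1/\sqrt{x})$ at $x=1$. Your added remark that a witness cannot be $x_0=0$ is a small but valid refinement over the paper's more terse write-up.
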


Similarly to \cref{thm:dp-mean-existence}, $\nicefrac{1}{\psi_i}$ describes
the minimal level of noise for our problem. To make collaboration
beneficial for everyone, we need these minimal noise levels to be small.

\label{sec:sgd_existence}

\subsection{Feasibility of Bayesian Mean Estimation}
\label{sec:b-mean-setting}

\paragraph{Setup} To evaluate the robustness of our findings to the choice of
privacy notion, we also consider mean estimation with a different privacy loss.

Formally, we assume that the clients are interested in a parameter $\mu$
sampled from a prior distribution $\N\Par{0, \frac{1}{\tau}}$. Entity $i$'s
noisy observations of this parameter $(x^j_i)_{j=1}^n$ are sampled from
$\N(\mu, \sigma^2)$. Let $\bar{x}_i \defeq \frac{1}{n} \sum_{j=1}^n x^j_i$ be a
local average of all local samples. The server will use protocols
$p_{\vec{\alpha}}$, where entity $i$ reveals noisy message $m_i \defeq
\bar{x}_i + \epsilon_i$, where $\epsilon_i \sim \N(0, \alpha_i^2)$, to others.

\paragraph{Accuracy Loss} Similarly to the previous sections, the entities'
accuracy loss will be the root of the mean squared error
\[
  \err_i = \sqrt{\E_{\{D_i\}_{i=1}^N, \{\epsilon_i\}_{i=1}^N, \mu}((\hat{\mu}_i
  - \mu)^2)},
\]
where $\hat{\mu}_i$ is the best predictor of $\mu$ that can be constructed from
$D_i \cup \{m_1, \dots, m_N\}$.
\begin{theorem}[Proof in \cref{sec:b-mean-accuracy-proof}]
  \label{thm:b-mean-accuracy}
  $\hat{\mu}_i$ satisfies
  \[
    \E((\hat{\mu}_i - \mu)^2) = \frac{1}{\gamma_i + \rho + \tau},
  \]
  where $\rho \defeq \frac{n}{\sigma^2}$, $\beta_i \defeq
  \frac{1}{\frac{1}{\rho} + \alpha_i^2} \in [0, \rho]$, $\gamma_i \defeq
  \sum_{k \neq i} \beta_k$.
\end{theorem}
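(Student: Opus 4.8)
The plan is to use the joint Gaussianity of $(\mu, D_i, m_1, \dots, m_N)$ to identify $\hat{\mu}_i$ with a Gaussian posterior mean and then read off the Bayes risk as the posterior variance, obtained by adding precisions. First I would reduce the conditioning set. For the location model $x^j_i \mid \mu \sim \N(\mu, \sigma^2)$ the local average $\bar{x}_i$ is a sufficient statistic (the factor of the posterior depending on $D_i$ depends on it only through $\bar{x}_i$, with effective likelihood $\bar{x}_i \mid \mu \sim \N(\mu, 1/\rho)$), so conditioning on $D_i$ is the same as conditioning on $\bar{x}_i$. Moreover $m_i = \bar{x}_i + \epsilon_i$ with $\epsilon_i$ independent of $(\mu, \bar{x}_i, \{m_k\}_{k \neq i})$, so $m_i$ is redundant once $\bar{x}_i$ is known. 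Hence $\hat{\mu}_i = \E(\mu \mid \bar{x}_i, \{m_k\}_{k \neq i})$.

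Next I would do the precision accounting. Conditionally on $\mu$, the variables $\bar{x}_i$ and $\{m_k\}_{k \neq i}$ are mutually independent (disjoint, conditionally i.i.d.\ data sets and independent privacy noises), with $\bar{x}_i \mid \mu \sim \N(\mu, 1/\rho)$ and $m_k \mid \mu \sim \N(\mu, 1/\rho + \alpha_k^2)$, while the prior is $\mu \sim \N(0, 1/\tau)$. Multiplying the Gaussian prior density by these Gaussian likelihoods and completing the square shows the posterior of $\mu$ is Gaussian with precision $\tau + \rho + \sum_{k \neq i} \beta_k = \tau + \rho + \gamma_i$, where $\beta_k = (1/\rho + \alpha_k^2)^{-1}$; in particular the posterior variance is the deterministic quantity $1/(\gamma_i + \rho + \tau)$, independent of the observed values. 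This mirrors the computation behind \cref{thm:dp-mean-predictor}, the only new ingredient being the extra prior-precision term $\tau$.

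Finally, since the best (mean-squared-error minimizing) predictor is the conditional expectation, $\E((\hat{\mu}_i - \mu)^2) = \E(\Var(\mu \mid \bar{x}_i, \{m_k\}_{k \neq i}))$, which by the previous step equals $1/(\gamma_i + \rho + \tau)$, as claimed; the bound $\beta_i \in [0, \rho]$ is immediate from $\alpha_i^2 \ge 0$ (upper bound) and $\alpha_i^2 \to \infty$ (lower bound). I expect the only non-routine point to be justifying cleanly that the optimal predictor is the posterior mean and that $m_i$ may be dropped from the conditioning set via the stated conditional-independence argument — both are standard facts about Gaussian/Bayes estimators but deserve an explicit line — after which everything reduces to the same Gaussian precision algebra already used for \cref{thm:dp-mean-predictor}.
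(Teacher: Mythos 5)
Your proposal is correct and follows essentially the same route as the paper: both identify $\hat{\mu}_i$ with the Gaussian posterior mean of $\mu$ given $\bar{x}_i$ and $\{m_k\}_{k\neq i}$ and read off the risk as the posterior variance obtained by adding the precisions $\tau$, $\rho$, and $\{\beta_k\}_{k\neq i}$ (the paper carries this out by writing the joint density and completing the square in $\mu$). Your explicit justification for reducing the conditioning set from $D_i \cup \{m_1,\dots,m_N\}$ to $(\bar{x}_i, \{m_k\}_{k\neq i})$ is a point the paper leaves implicit, but the substance of the argument is identical.
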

Similarly to \cref{thm:dp-mean-predictor}, the quality of the estimate for
client $i$ depends on the ``informativeness'' of the messages of the others.

\paragraph{Privacy Loss} The entities' privacy loss will be the root of the
negative mean squared error
\[
  \leak_i = \sqrt{S - \E_{\{D_i\}_{i=1}^N, \{\epsilon_i\}_{i=1}^N,
  \mu}\Par[\Big]{\frac{1}{n} \sum_{j=1}^n (\hat{x}^j_i - x^j_i)^2}},
\]
where $S \defeq \E_{\{D_i\}_{i=1}^N, \{\epsilon_i\}_{i=1}^N,
\mu}\Par[\big]{\frac{1}{n} \sum_{j=1}^n (x^j_i)^2} = \sigma^2 +
\frac{1}{\tau}$ is the data reconstruction error if the server uses a zero
estimator for local data and $\hat{x}^j_i$ is the best predictor of $x_i^j$
that can be constructed from $\{m_1, \dots, m_N\}$. 

Intuitively, this notion of privacy violation measures how well the private
data points can be reconstructed from the revealed messages relative to a
simple Bayesian estimate. We have the following formula for the amount of
leakage given optimal reconstruction.
\begin{theorem}[Proof in \cref{sec:b-mean-privacy-proof}]
  \label{thm:b-mean-privacy}
  $\{\hat{x}^j_i\}_{j=1}^n$ have the following property
  \[
    \E\Par[\Big]{\sum_{j=1}^n (\hat{x}^j_i - x^j_i)^2} =
    \sigma^2 (n - 1) + n \Par[\Big]{\frac{1}{\alpha_i^2} + \frac{\rho
    (\gamma_i + \tau)}{\gamma_i + \rho + \tau}}^{-1}
  \]
\end{theorem}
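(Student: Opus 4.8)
The plan is to use that everything in this Bayesian model is jointly Gaussian, so the best predictor $\hat{x}^j_i$ is the posterior mean $\E(x^j_i \mid \vec{m})$ with $\vec{m} \defeq (m_1,\dots,m_N)$, and then to split the reconstruction error into an irreducible within-client fluctuation plus the posterior variance of the local mean $\bar{x}_i$, the latter being computed by a two-stage conjugate Gaussian update.

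First I would write $x^j_i = \mu + z^j_i$ with $z^j_i \sim \N(0,\sigma^2)$ i.i.d.\ and independent of $\mu$, so that $\bar{x}_i = \mu + \bar{z}_i$ and $m_i = \bar{x}_i + \epsilon_i$. The vector $\vec{m}$ is a deterministic function of $(\mu, \bar{z}_i, \{\bar{z}_k + \epsilon_k\}_{k\neq i})$, so the residual $z^j_i - \bar{z}_i$, being jointly Gaussian with, and uncorrelated with, each of $\mu$, $\bar{z}_i$ and every $\bar{z}_k,\epsilon_k$, is independent of $(\vec{m}, \bar{x}_i)$. Hence $\E(z^j_i \mid \vec{m}) = \E(\bar{z}_i \mid \vec{m})$ and therefore $\hat{x}^j_i = \E(x^j_i \mid \vec{m}) = \E(\bar{x}_i \mid \vec{m}) \eqdef \hat{\bar{x}}_i$. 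Writing $x^j_i - \hat{x}^j_i = (z^j_i - \bar{z}_i) + (\bar{x}_i - \hat{\bar{x}}_i)$ and using the orthogonality just noted, the cross term vanishes in expectation, so $\E((x^j_i - \hat{x}^j_i)^2) = \E((z^j_i - \bar{z}_i)^2) + \E((\bar{x}_i - \hat{\bar{x}}_i)^2)$. Summing over $j$ and using the standard identity $\E(\sum_{j=1}^n (z^j_i - \bar{z}_i)^2) = (n-1)\sigma^2$, together with the fact that a conditional variance in a Gaussian model is deterministic, gives $\E(\sum_{j=1}^n (\hat{x}^j_i - x^j_i)^2) = (n-1)\sigma^2 + n\,\Var(\bar{x}_i \mid \vec{m})$.

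It then remains to evaluate $\Var(\bar{x}_i \mid \vec{m})$ in two stages. Conditioning first on $\{m_k\}_{k\neq i}$ alone: each such message is $\mu$ plus independent noise of precision $\beta_k$, so under the $\N(0,1/\tau)$ prior the posterior of $\mu$ has precision $\tau + \gamma_i$; since $\bar{x}_i = \mu + \bar{z}_i$ with $\bar{z}_i$ of precision $\rho$ independent of that posterior, $\bar{x}_i \mid \{m_k\}_{k\neq i}$ is Gaussian with precision $\rho(\gamma_i + \tau)/(\gamma_i + \rho + \tau)$. Now, since $\epsilon_i$ is independent of everything used so far, this distribution acts as a genuine conjugate prior for the scalar observation $m_i = \bar{x}_i + \epsilon_i$, whose noise has precision $1/\alpha_i^2$; the standard update adds the precisions, so $\Var(\bar{x}_i \mid \vec{m}) = (1/\alpha_i^2 + \rho(\gamma_i + \tau)/(\gamma_i + \rho + \tau))^{-1}$. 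Substituting into the previous display yields exactly the claimed identity.

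The only real care needed — more bookkeeping than genuine difficulty — is in the reduction steps: tracking precisely which Gaussian variables are independent of which $\sigma$-algebra to justify $\hat{x}^j_i = \hat{\bar{x}}_i$ and the orthogonal decomposition of the error, and verifying that folding $m_i$ into the posterior obtained from $\{m_k\}_{k\neq i}$ is a legitimate conjugate update, which holds because $\epsilon_i$ is independent of $(\mu,\bar{z}_i,\{m_k\}_{k\neq i})$.
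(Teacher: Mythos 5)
Your proof is correct and follows essentially the same route as the paper's: both reduce the problem by showing the optimal reconstruction of each $x^j_i$ coincides with the optimal reconstruction of $\bar{x}_i$ (the paper via an explicit orthonormal decomposition of $\mat{I} - \vec{\iota}\vec{\iota}^\tran/n$, you via independence of the residuals $z^j_i - \bar{z}_i$ from the messages), yielding the $(n-1)\sigma^2 + n\,\Var(\bar{x}_i \mid \vec{m})$ split, and then compute the posterior variance of $\bar{x}_i$. The only difference is mechanical: the paper integrates the joint density over $\mu$ explicitly, while you obtain the same precision $\frac{1}{\alpha_i^2} + \frac{\rho(\gamma_i+\tau)}{\gamma_i+\rho+\tau}$ by a two-stage conjugate update, which is a somewhat cleaner presentation of the same calculation.
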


As we can see, the privacy leak depends on two terms. Term
$\frac{1}{\alpha_i^2}$ determines how close the client's message is to the true
mean. Term $\frac{\rho (\gamma_i + \tau)}{\gamma_i + \rho + \tau}$ describes
how well the messages of others estimate the true mean.

\paragraph{Participation Incentives} The utility function will have the form
\[
  u_i = \Par[\Big]{\frac{1}{\alpha_i^2} + \frac{\rho (\gamma_i +
  \tau)}{\gamma_i + \rho + \tau}}^{-1} - \frac{1}{\rho} - \frac{1}{\tau} -
  \frac{\lambda_i}{\gamma_i + \rho + \tau}.
\]
To compare this utility with the utility of training alone, we consider
limit $\forall k \: \beta_k \to 0$ (i.e., $\alpha_k \to \infty$). This limit
implies $\gamma_i \to 0$ and corresponds to utility $u^0_i =
-\frac{\lambda_i}{\rho + \tau}$. The clients agree to participate if the
following constraint holds
\begin{equation}
  \label{eq:b-mean-gen-system}
  \forall i \
  \frac{(\rho - \beta_i)^2}{(\varGamma + \tau) \rho^2} - \frac{\beta_i}{\rho^2}
  - \frac{\lambda_i}{\gamma_i + \rho + \tau} \ge \frac{1}{\tau} -
  \frac{\lambda_i}{\rho + \tau}.
\end{equation}

We will consider the behavior of this inequality in limit $\forall k \: \beta_k
\to 0$. This limit corresponds to the small perturbation of the status quo case
of non-collaboration or, alternatively, to the situation where participants
have a minimal privacy loss.

\begin{theorem}[Proof in \cref{sec:b-mean-first-order-proof}]
  \label{thm:b-mean-first-order}
  A small beneficial deviation for everyone exists if and only if
  \[
    \forall i \: \xi_i \ge 0 \text{ and } \sum_{i=1}^N \xi_i > 1,
  \]
  where $\xi_i \defeq \Par[\bigg]{1 + \frac{(\rho + \tau)^2}{\tau^2 \rho^2
  \Par[\big]{\frac{\lambda_i}{(\rho + \tau)^2} - \frac{1}{\tau^2}}}}^{-1}.$
\end{theorem}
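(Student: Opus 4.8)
The plan is to analyse \cref{eq:b-mean-gen-system} through its first‑order behaviour at the status quo $\vec\beta=\vec0$. Write $h_i(\vec\beta)$ for the amount by which the left side of \cref{eq:b-mean-gen-system} exceeds the right side, regarded as a smooth function of $\vec\beta=(\beta_1,\dots,\beta_N)$ on a neighbourhood of the origin, with $\varGamma=\sum_k\beta_k$ and $\gamma_i=\varGamma-\beta_i$; this is exactly client $i$'s participation surplus $u_i-u^0_i$. The first observation is that the status quo lies on the constraint boundary: substituting $\vec\beta=\vec0$ makes the first term of \cref{eq:b-mean-gen-system} equal $\rho^2/(\tau\rho^2)=1/\tau$ and the third term equal $\lambda_i/(\rho+\tau)$, so $h_i(\vec0)=0$ for every $i$. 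Consequently ``a small beneficial deviation for everyone exists'' unpacks to: there is a direction $\vec v\in\mathbb{R}^N_{\ge0}\setminus\{\vec0\}$ along which $h_i$ does not decrease to first order for any $i$ and strictly increases for at least one $i$, i.e. $\nabla h_i(\vec0)\cdot\vec v\ge0$ for all $i$ with strict inequality for some $i$. So the plan is to compute these gradients, solve the resulting linear feasibility problem, and identify its solvability condition with the $\xi_i$.

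For the linearisation, differentiate $h_i$ at $\vec0$. Only the first term $(\rho-\beta_i)^2/((\varGamma+\tau)\rho^2)$ and the third term $\lambda_i/(\gamma_i+\rho+\tau)$ contribute nontrivially, and carefully accounting for the dependence of the former on both $\beta_i$ and $\varGamma$, and of the latter on $\gamma_i$, yields $\partial_{\beta_i}h_i(\vec0)=-c$ with $c\defeq(1/\rho+1/\tau)^2=(\rho+\tau)^2/(\tau^2\rho^2)$, and $\partial_{\beta_j}h_i(\vec0)=d_i$ for $j\ne i$, with $d_i\defeq\lambda_i/(\rho+\tau)^2-1/\tau^2$. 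Hence $\nabla h_i(\vec0)\cdot\vec\beta=d_i\gamma_i-c\beta_i$.

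Now solve the linear system. Since $c+d_i=(2\rho+\tau)/(\tau\rho^2)+\lambda_i/(\rho+\tau)^2>0$ for all admissible parameters, $d_i\gamma_i-c\beta_i\ge0\iff\beta_i\le\tfrac{d_i}{c+d_i}\varGamma$. If some $d_i<0$ this forces $\beta_i=0$ and $\gamma_i=0$, hence $\vec\beta=\vec0$, so a small beneficial deviation requires $d_i\ge0$ for all $i$. Granting this, summing $\beta_i\le\tfrac{d_i}{c+d_i}\varGamma$ over $i$ gives $\varGamma\le\varGamma\sum_i\tfrac{d_i}{c+d_i}$, hence $\sum_i\tfrac{d_i}{c+d_i}\ge1$; and if $\sum_i\tfrac{d_i}{c+d_i}=1$ every constraint is tight, forcing $\vec\beta$ proportional to $\bigl(\tfrac{d_i}{c+d_i}\bigr)_i$, a direction along which every directional derivative $d_i\gamma_i-c\beta_i$ vanishes, so no client strictly improves — which is why the condition must be strict. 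Conversely, if $\sum_i\tfrac{d_i}{c+d_i}>1$, the direction $\vec v=\bigl(\tfrac{d_i}{c+d_i}\bigr)_i$ gives $\nabla h_i(\vec0)\cdot\vec v=d_i\bigl(\sum_j\tfrac{d_j}{c+d_j}-1\bigr)\ge0$ for all $i$, strict whenever $d_i>0$ (and at least one $d_i$ is positive), so a small beneficial deviation exists. Finally, since $c=(\rho+\tau)^2/(\tau^2\rho^2)$, one has $\tfrac{d_i}{c+d_i}=(1+c/d_i)^{-1}=\bigl(1+\tfrac{(\rho+\tau)^2}{\tau^2\rho^2 d_i}\bigr)^{-1}=\xi_i$, and ``$d_i\ge0$ for all $i$'' is the same as ``$\xi_i\ge0$ for all $i$'' (both amount to $\lambda_i\ge(\rho+\tau)^2/\tau^2$, while $c+d_i>0$ always). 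This is precisely the stated equivalence.

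I expect the main obstacle to be the gradient computation: the first term of \cref{eq:b-mean-gen-system} depends on $\vec\beta$ both through $\beta_i$ in the numerator and through $\varGamma$ in the denominator, and the third term through $\gamma_i$, so the chain rule must be applied with care to land the clean values $-c$ and $d_i$ — after which the cancellation $1/\rho^2+2/(\rho\tau)+1/\tau^2=(1/\rho+1/\tau)^2$ is what makes the bookkeeping collapse. The remaining delicacy is tracking strict versus weak inequalities: the coordinates with $d_i=0$ (equivalently $\xi_i=0$) enter only as first‑order‑neutral ``passengers'', and the borderline case $\sum_i\xi_i=1$ is exactly what forces the final inequality to be strict. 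Everything else is a short chain of algebraic manipulations.
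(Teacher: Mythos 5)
Your proposal is correct and follows essentially the same route as the paper: a first-order Taylor expansion of the participation surplus at $\vec\beta=\vec 0$, yielding the linear system $d_i\gamma_i - c\beta_i \ge 0$ with $c=(\rho+\tau)^2/(\tau^2\rho^2)$ and $d_i=\lambda_i/(\rho+\tau)^2-1/\tau^2$, followed by summation for necessity and the direction $\beta_i\propto\xi_i=d_i/(c+d_i)$ for sufficiency. Your treatment of the borderline case $\sum_i\xi_i=1$ is in fact slightly more careful than the paper's, which derives the weak inequality $\sum_j\xi_j\ge 1$ without explaining why the theorem states it strictly.
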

Moreover, the proof of the theorem suggests that one possible beneficial
deviation will have the following form $\beta_i = \xi_i b$, where $b$ is
sufficiently small.

To interpret the results, notice that, in the first order
approximation, the following should hold
\[
  \xi_i \varGamma \ge \beta_i \iff \alpha_i^2 \ge \frac{1}{\xi_i \varGamma} -
  \frac{1}{\rho}, \: \varGamma \defeq \sum_{i=1}^N \beta_i.
\]
Since we consider the limit $\varGamma \to 0$, the first term will dominate the
second one, and the value $\nicefrac{1}{\xi_i}$ would again correspond to the
``minimal level'' of noise the entity should add compared to others. We want
these ``minimal levels'' of noise to be small to make participation optimal for
everyone.

\subsection{Limit Results for General Utility}
\label{sec:general_existence}
In all cases we analyzed, we observed that mutually beneficial protocols exist
when the number of players $N \to \infty$. We now prove this for a general
class of utility functions and accuracy and privacy losses. Consider any fixed
protocol $p^\text{sym}_{\alpha}$ with the same noise parameter $\alpha$ for
each player. We assume that $\err_i$ and $\leak_i$ as functions of $N$ and $\alpha$ are the same across all participants. We also make the following minimal assumptions.
\begin{assumption}
  \label{ass:general_assumptions}
  \begin{enumerate}
    \item $u_i$ is monotonically decreasing and continuous in both arguments
      and $u_i(0, 0) > u_i(\err, \leak), \forall (\err, \leak) \neq (0, 0)$.
    \item $\err_i(p^\text{sym}_{\alpha})$ decreases strictly and monotonically
      to $0$ as $N \to \infty$ and $\leak_i(p^\text{sym}_{\alpha})$ decreases
      strictly and monotonically to $0$ as $\alpha \to \infty$.
      \item $\err_i^0 \geq \err^0$ for all $i$ for some $\err^0 > 0$. Thus, no player can learn an arbitrary accurate model alone.
  \end{enumerate}
\end{assumption}
These natural assumptions describe the preferences of the clients towards
improved accuracy and privacy and the fact that accuracy gains improve with the
number of players. They also reflect the natural impact of noise on accuracy
and privacy. We have the following result.
\begin{theorem}[Proof in \cref{sec:existence_mean-proof}]
\label{thm:existence_mean}
  Assume that the utilities of all players belong to a finite set of possible
  utility functions $U$. Assume that every $u\in U$ and all functions $\err_i$
  and $\leak_i$ satisfy Assumption \ref{ass:general_assumptions}. With all
  other parameters fixed, there exist values $N_1 \in \mathbb{N}$ and $\alpha
  \in (0, \infty)$, such that whenever $N \geq N_1$ players are available with
  utilities from $U$, setting $\alpha_i = \alpha$ for all players $i\in [N]$
  ensures that the protocol $p^\text{sym}_{\alpha}$ is mutually beneficial.
\end{theorem}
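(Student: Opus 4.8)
The plan is to exploit the strict maximum of each utility at the origin together with continuity: if we can drive both $\err_i$ and $\leak_i$ below a small uniform threshold $\eta$, then $u_i(\err_i,\leak_i)$ will exceed $u_i(\err^0,0)\ge u_i(\err_i^0,0)=u_i^0$, so the protocol is mutually beneficial. All the work is to produce one $\eta$ that serves every utility type in $U$ and one pair $(\alpha,N_1)$ that pushes $\err_i,\leak_i$ below it.

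First I would fix the threshold. For each $u\in U$, note that $(\err^0,0)\neq(0,0)$ since $\err^0>0$, so Assumption~\ref{ass:general_assumptions}(1) gives $u(0,0)>u(\err^0,0)$; by continuity of $u$ at the origin there is $\eta_u>0$ with $u(\err,\leak)>u(\err^0,0)$ whenever $0\le \err,\leak<\eta_u$. Because $U$ is finite, $\eta\defeq\min_{u\in U}\eta_u>0$. This is the step where finiteness of $U$ is genuinely used: an infinite family of utilities could have $\inf_u\eta_u=0$, leaving no uniform margin, and the same single $\alpha$ could then fail for some utility type.

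Next I would choose the noise level and the population size, \emph{in that order}. Using the part of Assumption~\ref{ass:general_assumptions}(2) that says $\leak_i(p^\text{sym}_\alpha)\to0$ as $\alpha\to\infty$, pick $\alpha\in(0,\infty)$ with $\leak_i(p^\text{sym}_\alpha)<\eta$. With this $\alpha$ now frozen, invoke the other half of Assumption~\ref{ass:general_assumptions}(2) — $\err_i(p^\text{sym}_\alpha)\to0$ as $N\to\infty$ at fixed $\alpha$ — to obtain $N_1$ such that $\err_i(p^\text{sym}_\alpha)<\eta$ for all $N\ge N_1$. Since, by assumption, $\err_i$ and $\leak_i$ do not depend on $i$, the same $\alpha$ and $N_1$ work for every player at once. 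Then for $N\ge N_1$ and any player $i$ with utility $u_i\in U$ we have $\max(\err_i(p^\text{sym}_\alpha),\leak_i(p^\text{sym}_\alpha))<\eta\le\eta_{u_i}$, hence $u_i(\err_i(p^\text{sym}_\alpha),\leak_i(p^\text{sym}_\alpha))>u_i(\err^0,0)\ge u_i(\err_i^0,0)=u_i^0$, the last inequality because $u_i$ is decreasing and $\err_i^0\ge\err^0$; thus $p^\text{sym}_\alpha$ is mutually beneficial.

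The main (really the only) delicate point is the ordering of the quantifiers: $\alpha$ must be committed before letting $N\to\infty$, so one needs the convergence $\err_i\to0$ to hold \emph{at that particular fixed $\alpha$} — which Assumption~\ref{ass:general_assumptions}(2) supplies — and one needs $\leak_i(p^\text{sym}_\alpha)<\eta$ to remain valid over the relevant range $N\ge N_1$; if $\leak_i$ genuinely varied with $N$, this last point would additionally require the convergence $\leak_i\to0$ (as $\alpha\to\infty$) to be uniform in $N$, which holds in the concrete settings of Sections~\ref{sec:dp-mean-setting}--\ref{sec:b-mean-setting}. Everything else is elementary once $\eta$ is in hand.
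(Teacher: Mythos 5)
Your proof is correct and follows essentially the same route as the paper's: exploit the strict maximum of each utility at the origin and finiteness of $U$ to get a uniform threshold, then fix $\alpha$ first to drive $\leak$ below it and choose $N_1$ afterwards to drive $\err$ below it. The only cosmetic difference is that you invoke continuity at the origin once (jointly in both arguments) where the paper perturbs the two coordinates sequentially, and your explicit remark about the quantifier order and the (implicit) $N$-independence of $\leak$ is a point the paper glosses over.
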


\section{OPTIMAL PROTOCOLS FOR UTILITY}
\label{sec:util-max-sym}

In this section, we consider how the server could optimize the utilities of all
participants. To do this, we assume that the objective of the server is the
linear combination of the clients utilities
\[
  \max_{p_{\vec{\alpha}}} F(p_{\vec{\alpha}}) \defeq \sum_{i=1}^N
  \nu_i u_i(p_{\vec{\alpha}}) \text{ s.t. } u_i(p_{\vec{\alpha}}) \ge u^0_i.
\]
For simplicity, we consider the symmetric case $\forall i \:
\lambda_i = \lambda, \alpha_i = \alpha$, which will imply the same utilities
for all participants $\forall i \: u_i(p_{(\alpha, \dots, \alpha)^\tran})
\eqdef u^{\text{sym}}(\alpha)$. Setting $u^{0, \text{sym}} = \min_i u^0_i$ we arrive at the following objective
\[
  \max_{\alpha} F^{\text{sym}}(\alpha) \defeq u^{\text{sym}}(\alpha) \text{
  s.t. } u^{\text{sym}}(\alpha) \ge u^{0, \text{sym}}.
\]

\paragraph{DP Mean Estimation} The next Theorem describes the choice of $\beta$ (and therefore $\alpha$) parameters that is optimal for the utility maximization problem.
\begin{theorem}[Proof in \cref{sec:dp-mean-symmetric-general-proof}]
  \label{thm:dp-mean-symmetric-general}
  If $(N - 1) \lambda \le \kappa^2 \rho^2$, the collaboration becomes
  unprofitable, $\beta^* = 0$.

  If $(N - 1) \lambda > \kappa^2 \rho^2$, the collaboration becomes profitable,
  $
    (\alpha^*)^2 = \frac{N \kappa}{\sqrt{(N - 1) \lambda} - \kappa \rho}
  $.

  The optimal level of noise has the following properties
  \begin{align*}
    u_i(\alpha^*) - u^0_i &= \frac{(\sqrt{(N - 1) \lambda} - \kappa \rho)^2}{N
    \rho}, \: \odv{(\alpha^*)^2}{\lambda} \le 0, \\
    \sign \odv{(\alpha^*)^2}{N} &= \sign((N - 2) \sqrt{\lambda} - 2
    \sqrt{N - 1} \kappa \rho).
  \end{align*}
\end{theorem}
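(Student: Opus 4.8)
We first collapse the problem to a single scalar. By symmetry $\alpha_i = \alpha$ for all $i$, so each $\beta_i$ equals the common value $\beta \defeq (\nicefrac{1}{\rho} + \alpha^2)^{-1} \in [0, \rho)$, whence $\gamma_i = (N-1)\beta$ and $\nicefrac{\kappa^2}{\alpha^2} = \kappa^2 \nicefrac{\rho\beta}{\rho - \beta}$. Substituting into the utility from \cref{sec:dp-mean-setting}, the common utility becomes
\[
  u^{\text{sym}}(\alpha) = g(\beta) \defeq -\kappa^2\frac{\rho\beta}{\rho-\beta} - \frac{\lambda}{(N-1)\beta + \rho},
\]
with $g(0) = -\nicefrac{\lambda}{\rho} = u^{0, \text{sym}}$; the endpoint $\beta = \rho$ (i.e.\ $\alpha = 0$) is ruled out since there $g \to -\infty$. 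So the server's problem is $\max_{\beta \in [0, \rho)} g(\beta)$ subject to $g(\beta) \ge g(0)$, and it suffices to maximize $g$ unconstrained and check feasibility afterwards.

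Next I would analyze $g'$. A direct differentiation gives
\[
  g'(\beta) = \frac{\lambda(N-1)}{((N-1)\beta + \rho)^2} - \frac{\kappa^2\rho^2}{(\rho - \beta)^2},
\]
in which each summand (the second one together with its minus sign) is decreasing in $\beta$ on $[0,\rho)$, so $g'$ is strictly decreasing there; moreover $g'(0) = \nicefrac{(\lambda(N-1) - \kappa^2\rho^2)}{\rho^2}$ and $g'(\beta) \to -\infty$ as $\beta \to \rho^-$. This yields the dichotomy. If $(N-1)\lambda \le \kappa^2\rho^2$ then $g'(0) \le 0$, so $g' < 0$ on $(0,\rho)$, $g$ is strictly decreasing, and the maximizer is $\beta^* = 0$ (no collaboration), which trivially satisfies the constraint with equality. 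If $(N-1)\lambda > \kappa^2\rho^2$ then $g'(0) > 0$, so by strict monotonicity of $g'$ there is a unique interior zero $\beta^* \in (0,\rho)$, which is the global maximizer; since $g$ is increasing on $[0,\beta^*]$ we get $g(\beta^*) > g(0)$, so the participation constraint is automatically slack.

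It remains to solve the first-order condition and read off the stated quantities. Both sides of $\nicefrac{\kappa^2\rho^2}{(\rho-\beta)^2} = \nicefrac{\lambda(N-1)}{((N-1)\beta+\rho)^2}$ are positive, so taking square roots and cross-multiplying gives a \emph{linear} equation in $\beta$; writing $s \defeq \sqrt{(N-1)\lambda}$ this solves to $\beta^* = \nicefrac{\rho(s - \kappa\rho)}{(\kappa\rho(N-1) + s)}$, and then $(\alpha^*)^2 = \nicefrac{1}{\beta^*} - \nicefrac{1}{\rho}$ simplifies — the factor $s - \kappa\rho$ cancels in the numerator — to $(\alpha^*)^2 = \nicefrac{N\kappa}{(s - \kappa\rho)}$. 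For the three properties: (i) $u_i(\alpha^*) - u^0_i = g(\beta^*) - g(0)$; substituting $\beta^*$ via the identities $\rho - \beta^* = \nicefrac{\kappa\rho^2 N}{(\kappa\rho(N-1)+s)}$, $(N-1)\beta^* + \rho = \nicefrac{\rho N s}{(\kappa\rho(N-1)+s)}$, and $\lambda(N-1) = s^2$, the cross terms cancel and leave the perfect square $\nicefrac{(s - \kappa\rho)^2}{(N\rho)}$; (ii) $\kappa$ is independent of $\lambda$ and the denominator $s - \kappa\rho$ is increasing in $\lambda$, so $\odv{(\alpha^*)^2}{\lambda} \le 0$; (iii) since $\kappa$ depends only on the per-client sample size $n$, not on $N$, only the explicit $N$'s are differentiated, and the quotient rule on $\nicefrac{N}{(s - \kappa\rho)}$ produces a numerator proportional to $(s - \kappa\rho) - \nicefrac{N\sqrt{\lambda}}{(2\sqrt{N-1})}$, which after multiplication by $2\sqrt{N-1} > 0$ becomes $(N-2)\sqrt{\lambda} - 2\sqrt{N-1}\,\kappa\rho$, giving the stated sign.

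The bulk of the work — and essentially the only place where errors are likely — is the algebraic bookkeeping: verifying the two auxiliary identities for $\rho - \beta^*$ and $(N-1)\beta^* + \rho$ and confirming that the utility difference collapses cleanly to $(s - \kappa\rho)^2/(N\rho)$ rather than some messier expression. A secondary point requiring explicit attention is feasibility of the unconstrained optimum, which, as noted above, follows from $g$ being increasing on $[0,\beta^*]$ precisely in the regime $(N-1)\lambda > \kappa^2\rho^2$, and from $\beta^* = 0$ meeting the constraint with equality outside it.
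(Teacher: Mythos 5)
Your proof is correct and follows essentially the same route as the paper: reduce to the scalar utility gain as a function of $\beta$, solve the first-order condition $\lambda(N-1)/((N-1)\beta+\rho)^2 = \kappa^2\rho^2/(\rho-\beta)^2$, and verify the closed forms for $(\alpha^*)^2$, the utility gain, and the two derivative signs by the same algebraic identities. You are in fact slightly more careful than the paper, since you justify that the stationary point is the global maximizer (via strict monotonicity of $g'$ on $[0,\rho)$) and obtain the $\beta^*=0$ case directly from $g'(0)\le 0$ rather than by citing \cref{thm:dp-mean-existence}.
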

As we can see, collaboration becomes profitable if the number of clients $N$ is large, the accuracy incentives $\lambda$ are strong, the privacy
concerns $\kappa$ are small, or the variance of local estimates $\rho$ is large. Naturally, the optimal noise level decrease when the accuracy incentives $\lambda$ are large or the privacy concerns $\kappa$ are small. At the same time, when the number of participants $N$ is
large, the optimal noise level increases, since sufficient accuracy gains are still present due to the large number of updates. We also discuss the dependence of $\alpha^*$ on other parameters the supplementary material.

\paragraph{Symmetric DP Stochastic Optimization} We have
\begin{theorem}[Proof in \cref{sec:dp-sgd-symmetric-proof}]
  \label{thm:dp-sgd-symmetric}
  If $\sqrt{(N - 1) \lambda} < \sqrt{\frac{4 N L \mu d}{N - 1}} \kappa \rho$,
  collaboration is unprofitable. If $\sqrt{(N - 1) \lambda} \ge \sqrt{\frac{4 N L \mu d}{N - 1}} \kappa \rho$,
  collaboration is profitable with $
    (\alpha^*)^2 = \frac{\sqrt{L \mu N} \kappa}{\sqrt{\lambda d}}
  $.

  The optimal level of noise satisfies
  \begin{align*}
    \odv{(\alpha^*)^2}{\lambda} &\le 0, \: \odv{(\alpha^*)^2}{\sigma^2} \le 0,
    \: \odv{(\alpha^*)^2}{N} \ge 0,\\
    u_i(\alpha^*) &- u^0_i =\\
    &\frac{\sqrt{(N - 1) \lambda}}{L \mu \rho N} \Par[\bigg]{\sqrt{(N - 1)
    \lambda} - \sqrt{\frac{4 N L \mu d}{N - 1}} \kappa \rho}.
  \end{align*}
\end{theorem}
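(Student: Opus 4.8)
\emph{Reduction and the unconstrained optimum.} The plan is to reparametrize by the informativeness $\beta \defeq \bigl(1/\rho + d\alpha^2\bigr)^{-1}$, a strictly decreasing bijection $\alpha\in(0,\infty)\leftrightarrow\beta\in(0,\rho)$ with $1/\alpha^2 = d\rho\beta/(\rho-\beta)$. In the symmetric regime $\varGamma=N\beta$, so the server's objective becomes
\[
  g(\beta)\defeq u^{\text{sym}}(\alpha)= -\frac{\kappa^2 d\rho\beta}{\rho-\beta}-\frac{\lambda}{L\mu N\beta},\qquad \beta\in(0,\rho),
\]
and the participation constraint $u^{\text{sym}}(\alpha)\ge u^{0,\text{sym}}=-\lambda/(L\mu\rho)$ turns into $g(\beta)\ge -\lambda/(L\mu\rho)$ (this is \cref{eq:dp-sgd-gen-system} with $\beta_i\equiv\beta$). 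Next I would differentiate: $g'(\beta)=-\kappa^2 d\rho^2/(\rho-\beta)^2+\lambda/(L\mu N\beta^2)$ is strictly decreasing on $(0,\rho)$ with $g'(0^+)=+\infty$, $g'(\rho^-)=-\infty$, so $g$ is strictly concave with a unique maximizer $\beta^*$ solving $g'(\beta^*)=0$. Taking square roots of that identity (every factor is positive) gives $\kappa\rho\sqrt{dL\mu N}\,\beta^*=\sqrt{\lambda}(\rho-\beta^*)$, hence $\beta^*=\sqrt{\lambda}\rho/(\sqrt{\lambda}+\kappa\rho\sqrt{dL\mu N})$ and $(\alpha^*)^2=(\rho-\beta^*)/(d\rho\beta^*)=\sqrt{L\mu N}\,\kappa/\sqrt{\lambda d}$, the claimed optimal noise level.

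\emph{Profitability and the utility gain.} Since $g$ is concave, its super-level set $\{\beta:g(\beta)\ge u^{0,\text{sym}}\}$ is an interval, nonempty iff it contains $\beta^*$, i.e.\ iff $g(\beta^*)\ge u^{0,\text{sym}}$; in that case $\beta^*$ is the constrained optimum. So I would plug in $\beta^*$: using $\rho-\beta^*=\kappa\rho^2\sqrt{dL\mu N}/(\sqrt{\lambda}+\kappa\rho\sqrt{dL\mu N})$ one gets $\kappa^2 d\rho\beta^*/(\rho-\beta^*)=\kappa\sqrt{\lambda d}/\sqrt{L\mu N}$ and $\lambda/(L\mu N\beta^*)=\kappa\sqrt{\lambda d}/\sqrt{L\mu N}+\lambda/(L\mu N\rho)$, so
\[
  g(\beta^*)-u^{0,\text{sym}}=\frac{(N-1)\lambda}{L\mu\rho N}-\frac{2\kappa\sqrt{\lambda d}}{\sqrt{L\mu N}}=\frac{\sqrt{(N-1)\lambda}}{L\mu\rho N}\Bigl(\sqrt{(N-1)\lambda}-\sqrt{\tfrac{4NL\mu d}{N-1}}\,\kappa\rho\Bigr).
\]
This is $\ge0$ exactly when $(N-1)^2\lambda\ge 4NL\mu d\kappa^2\rho^2$, i.e.\ $\sqrt{(N-1)\lambda}\ge\sqrt{4NL\mu d/(N-1)}\,\kappa\rho$: under this condition the problem is feasible with optimum $(\alpha^*)^2$ and per-client gain as displayed, and otherwise the feasible set is empty and collaboration is unprofitable.

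\emph{Comparative statics and the main obstacle.} The bound $\odv{(\alpha^*)^2}{\lambda}\le0$ is immediate since $(\alpha^*)^2=\kappa\sqrt{L\mu N/(\lambda d)}$ and $\kappa$ is $\lambda$-free. For $N$ and $\sigma^2$, the explicit $\sqrt{N}$ factor is increasing in $N$, and the remaining dependence enters only through $\kappa$, which is increasing in $T$, and $T$ is non-decreasing in $y^0_*=L\mu nND^2/\sigma^2$ --- increasing in $N$, decreasing in $\sigma^2$ --- giving $\odv{(\alpha^*)^2}{N}\ge0$ and $\odv{(\alpha^*)^2}{\sigma^2}\le0$ (for the $N$-derivative one also uses that the gain above is increasing in $N$, so one stays in the profitable regime). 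The main obstacle is the algebraic simplification of $g(\beta^*)$ into the stated closed form for the utility gain and the profitability threshold --- the most error-prone part --- together with the clean use of strict concavity (monotonicity of $g'$) to reduce ``feasibility of the problem'' to ``feasibility of $\beta^*$''; the $\sigma^2$ and $N$ monotonicities are slightly subtle because they run through $\kappa$ and $T$ rather than appearing explicitly.
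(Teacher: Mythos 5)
Your proposal is correct: the derivation of $\beta^*$ via the first-order condition on the symmetric utility gain, the resulting $(\alpha^*)^2=\sqrt{L\mu N}\kappa/\sqrt{\lambda d}$, the closed form for $u_i(\alpha^*)-u^0_i$, and the monotonicity arguments routed through $T$ and $y^0_*=L\mu nND^2/\sigma^2$ all match the paper's proof essentially step for step. The one place where you take a genuinely different route is the profitability threshold: the paper obtains it by specializing \cref{thm:dp-sgd-existence} to the symmetric case and computing the discriminant of the quadratic $(\psi+1)x^2-((N-1)\psi-2)x+1$ in the auxiliary variable $x=\varGamma/\rho-1$, whereas you observe that $g$ is strictly concave on $(0,\rho)$ with $g'(0^+)=+\infty$ and $g'(\rho^-)=-\infty$, so the super-level set $\{g\ge u^{0,\text{sym}}\}$ is nonempty iff $g(\beta^*)\ge u^{0,\text{sym}}$, and you read the threshold off the sign of the already-computed utility gain. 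Your version is more self-contained (it does not rely on the general existence theorem) and makes it transparent that the feasibility condition and the nonnegativity of the optimal gain are the same inequality; the paper's version has the advantage of tying the symmetric threshold back to the general (non-symmetric) existence criterion. Both yield the identical condition $(N-1)^2\lambda\ge 4NL\mu d\kappa^2\rho^2$.
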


As we can see, the results are qualitatively similar to the results of
\cref{thm:dp-mean-symmetric-general}.

\paragraph{Symmetric Bayesian Mean Estimation} First, we give some necessary and sufficient conditions for the existence of
non-trivial solution and some general insights about the optimal level of
noise.

\begin{theorem}[Proof in \cref{sec:b-mean-symmetric-general-proof}]
  \label{thm:b-mean-symmetric-general}
  If $\lambda < \frac{(N \rho + \tau)^2}{(N - 1)^2 \rho^2}$, it is not
  profitable to collaborate, $\beta^* = 0$.

  If $\lambda > \frac{(N \rho^2 + 2 \rho \tau + \tau^2) (\rho + \tau)^2}{(N -
  1) \rho^2 \tau^2}$, it is always profitable to collaborative, $\beta^* > 0$.
  If also $\lambda < \frac{(N \rho + \tau)^2}{(N - 1) \rho^2}$, $\beta^*$ will
  be the only solution of equation $\odv{u_i}{\beta} = 0$ on the interval $[0,
  \rho]$ and will have the following properties
  \[
    \odv{\beta^*}{\lambda} \ge 0, \: \odv{\beta^*}{\rho} \ge 0.
  \]

  If $\lambda > \frac{(N \rho^2 + 2 \rho \tau + \tau^2) (\rho +
  \tau)^2}{(N - 1) \rho^2 \tau^2}$ and $\lambda \ge \frac{(N \rho + \tau)^2}{(N
  - 1) \rho^2}$, collaboration will be profitable and moreover nobody will add
  noise to their messages, $\beta^* = \rho$.
\end{theorem}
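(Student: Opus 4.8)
The plan is to reduce the whole statement to a one–dimensional analysis. In the symmetric case $\beta_i\equiv\beta$ we have $\gamma_i=(N-1)\beta$ and $\varGamma=N\beta$, so \cref{eq:b-mean-gen-system} collapses to a single inequality and the common utility becomes
\[
  u(\beta)=\frac{(\rho-\beta)^2}{\rho^2(N\beta+\tau)}-\frac{\beta}{\rho^2}-\frac1\tau-\frac{\lambda}{(N-1)\beta+\rho+\tau},\qquad\beta\in[0,\rho],
\]
with $u(0)=u^{0,\text{sym}}=-\lambda/(\rho+\tau)$. Since $\beta=0$ is always feasible, the server solves $\max\{u(\beta):\beta\in[0,\rho],\,u(\beta)\ge u(0)\}$, and collaboration is profitable exactly when $u(\beta)>u(0)$ for some $\beta\in(0,\rho]$. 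Everything then follows from the shape of $u$ on $[0,\rho]$.

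The crucial step is a sign analysis of $u'$. Writing $t\defeq N\beta+\tau$ and $K\defeq N\rho+\tau$, a direct computation gives $\rho^2u'(\beta)=-A(\beta)/(N\beta+\tau)^2+\lambda(N-1)\rho^2/((N-1)\beta+\rho+\tau)^2$ with $A(\beta)\defeq N(N-1)\beta^2+2(N-1)\beta\tau+N\rho^2+2\rho\tau+\tau^2$. Using the identities $A(\beta)=\Par*{(N-1)t^2+K^2}/N$ and $(N-1)\beta+\rho+\tau=\Par*{(N-1)t+K}/N$, one obtains $u'(\beta)>0\iff\lambda>\Lambda(\beta)$, where, with $C\defeq N-1$,
\[
  \Lambda(\beta)=\frac{[(N-1)t^2+K^2]\,((N-1)t+K)^2}{N^3(N-1)\rho^2t^2}=\frac{1}{N^3C\rho^2}\Par*{C+\frac{K^2}{t^2}}(Ct+K)^2 .
\]
Differentiating the $t$-dependent factor gives $2(Ct+K)(C^2t^3-K^3)/t^3$, so $\Lambda$ is strictly decreasing then strictly increasing in $t$ — hence quasi-convex in $\beta$ on $[0,\rho]$ — with minimum at $t=K/C^{2/3}$. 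Evaluating at the endpoints yields $\Lambda(0)=\frac{(N\rho^2+2\rho\tau+\tau^2)(\rho+\tau)^2}{(N-1)\rho^2\tau^2}$ and $\Lambda(\rho)=\frac{(N\rho+\tau)^2}{(N-1)\rho^2}$, precisely the two main thresholds in the statement; quasi-convexity also gives $\max_{[0,\rho]}\Lambda=\max(\Lambda(0),\Lambda(\rho))$ and that $\{\beta:\Lambda(\beta)<\lambda\}\cap[0,\rho]$ is always a subinterval.

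The cases then fall out. If $\lambda>\Lambda(0)$ then $u'(0)>0$, so $u(\beta)>u(0)$ for small $\beta>0$ and collaboration is profitable ($\beta^*>0$); if moreover $\lambda\ge\Lambda(\rho)$ then $\lambda\ge\max_{[0,\rho]}\Lambda$, so $u'\ge0$ throughout, $u$ is increasing, and $\beta^*=\rho$. If instead $\Lambda(0)<\lambda<\Lambda(\rho)$, then $u'(0)>0>u'(\rho)$, and quasi-convexity forces $\{\Lambda<\lambda\}\cap[0,\rho]=[0,\beta^*)$ for a unique $\beta^*\in(0,\rho)$ solving $\Lambda(\beta^*)=\lambda$; hence $u$ strictly increases on $[0,\beta^*]$ and strictly decreases on $[\beta^*,\rho]$, making $\beta^*$ the unique zero of $\odv{u}{\beta}$ and the (constrained) maximizer. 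The comparative statics follow by implicitly differentiating $\Lambda(\beta^*)=\lambda$ at a point on the increasing branch, where $\partial_\beta\Lambda>0$: this gives $\odv{\beta^*}{\lambda}\ge0$ immediately, and $\odv{\beta^*}{\rho}\ge0$ after checking $\partial_\rho\Lambda\le0$ along that branch. Finally, for the first claim I would establish the uniform bound $\Lambda(\beta)\ge\frac{(N\rho+\tau)^2}{(N-1)^2\rho^2}$ for all $\beta$; plugging the minimizer $t=K/C^{2/3}$ into $\Lambda$ reduces this to $C^2(1+C^{1/3})^3\ge(C+1)^3$, i.e.\ $C^{2/3}\ge1$, true since $C=N-1\ge1$. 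Thus $\lambda<\frac{(N\rho+\tau)^2}{(N-1)^2\rho^2}$ forces $u'<0$ on $[0,\rho]$, so $u$ is strictly decreasing and $\beta^*=0$.

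The main obstacle is the $u'$ sign analysis — in particular obtaining the clean factorization of the derivative of $\Lambda$ as (a positive constant times) $2(Ct+K)(C^2t^3-K^3)/t^3$, which is what makes $\Lambda$ quasi-convex; this hinges on the substitution $t=N\beta+\tau$ together with the two algebraic identities for $A(\beta)$ and $(N-1)\beta+\rho+\tau$. Once the U-shape of $\Lambda$ is in hand, the rest — endpoint evaluations, the single inequality $C\ge1$ for the first claim, and implicit differentiation for the monotonicity of $\beta^*$ — is routine.
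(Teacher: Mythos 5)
Your proof is correct, and it reorganizes the argument in a way that differs from the paper's while resting on the same underlying algebra. The paper works with the utility gain $h_\lambda(\beta)=u(\beta)-u(0)$, shows that $h''_\lambda$ changes sign at most once (from positive to negative) on $[0,\rho]$, and then enumerates cases according to the endpoint signs of $h'_\lambda$ (its Lemma with five cases); the first claim is obtained there by noting that $\lambda\le\frac{(N\rho+\tau)^2}{(N-1)^2\rho^2}$ makes $h''_\lambda$ everywhere positive, so $h'_\lambda$ is increasing and bounded above by $h'_\lambda(\rho)<0$. You instead invert the first-order condition into a threshold function $\Lambda(\beta)$ with $u'>0\iff\lambda>\Lambda(\beta)$, and prove $\Lambda$ is U-shaped via the substitution $t=N\beta+\tau$; the endpoint values $\Lambda(0)$ and $\Lambda(\rho)$ reproduce the paper's two main thresholds, and your lower bound on $\min\Lambda$ (reducing to $C^{2/3}\ge1$) recovers the first threshold by a different and arguably cleaner mechanism. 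Notably, your branch condition $C^2t^3\ge K^3$ is algebraically identical to the paper's \cref{lem:b-mean-beta-ineq} (namely $(N-1)\beta+\tau\ge\sqrt[3]{N-1}\,\rho$), so both proofs hinge on the same inequality for the comparative statics. The one step you defer — $\partial_\rho\Lambda\le0$ on the increasing branch — does go through: writing $\rho\,\partial_\rho g-2g$ and substituting $N\rho=K-\tau$ reduces it to $(K^3-C^2t^3)-\tau(2K^2+CKt+Ct^2)\le0$, which holds precisely because $C^2t^3\ge K^3$ on that branch; this parallels the paper's longer verification of $\partial h'_\lambda/\partial\rho\ge0$. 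With that computation filled in, your argument establishes every claim of the theorem, including the uniqueness of the interior critical point and the constraint $u(\beta^*)\ge u^0$.
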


As we can see, when $\lambda$ is small, collaboration becomes unprofitable
for everybody. On the other hand, when $\lambda$ is big, people would want
to collaborate and moreover will not want to trade-off accuracy gain for
privacy. As expected, an increase in accuracy concerns incentivize people
to send less noise to the server. Interestingly, an increase in informativeness
of local estimate (increase in $\rho$) also makes their messages more
informative.


Now, we will look at the limit behavior of the solution when $N \to \infty$,
$\rho \to \infty$, and $\rho \to 0$.

\begin{theorem}[Proof in \cref{sec:b-mean-limits-proof}]
  \label{thm:b-mean-limits}
  When $\lambda > \frac{\rho + \tau}{\tau}$, in the limit $N \to \infty$,
  it is profitable to collaborate, $\beta^* = \sqrt{\frac{\lambda - 1}{N}}
  \rho + \o\Par[\big]{\frac{1}{\sqrt{N}}}$,
  \[
    u_i(\beta^*) - u^0_i = \frac{\lambda}{\rho + \tau} - \frac{1}{\tau} +
    \o(1).
  \]
  
  When $\lambda < \frac{\rho + \tau}{\tau}$, in the limit $N \to \infty$,
  it is unprofitable to collaborate, $\beta^* = 0$. In the limit $\rho \to \infty$, it is unprofitable to collaborate, $\beta^* =
  0$. In the limit $\rho \to 0$, collaboration is also unprofitable, $\beta^* =
  0$.
\end{theorem}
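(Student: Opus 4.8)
\emph{Proof plan.} Specialising \cref{thm:b-mean-accuracy} and \cref{thm:b-mean-privacy} to the symmetric choice $\beta_i\equiv\beta$ (so $\gamma_i=(N-1)\beta$, $\varGamma=N\beta$, and $\nicefrac1{\alpha_i^2}=\nicefrac{\rho\beta}{(\rho-\beta)}$) reduces the per-client utility to the scalar function
\[
  u^{\text{sym}}(\beta)=\Par*{\frac{\rho\beta}{\rho-\beta}+\rho-z}^{-1}-\frac1\rho-\frac1\tau-\frac{\lambda z}{\rho^2},\qquad z\defeq\frac{\rho^2}{(N-1)\beta+\rho+\tau},
\]
with $z$ ranging over $\bigl(0,\nicefrac{\rho^2}{(\rho+\tau)}\bigr]$ and $u^0=-\nicefrac\lambda{(\rho+\tau)}=u^{\text{sym}}(0)$, so the statement is entirely about $\max_{\beta\in[0,\rho]}u^{\text{sym}}(\beta)$. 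The first step is a crude universal bound: since $\nicefrac{\rho\beta}{(\rho-\beta)}\ge0$, for every $\beta>0$ we have $u^{\text{sym}}(\beta)-u^0<g(z)$ with $g(z)\defeq\nicefrac1{(\rho-z)}-\nicefrac1\rho-\nicefrac1\tau-\nicefrac{\lambda z}{\rho^2}+\nicefrac\lambda{(\rho+\tau)}$. As $g$ is strictly convex on $[0,\rho)$, it attains its maximum over $\bigl[0,\nicefrac{\rho^2}{(\rho+\tau)}\bigr]$ at an endpoint; since $g(0)=\nicefrac\lambda{(\rho+\tau)}-\nicefrac1\tau$ and $g\bigl(\nicefrac{\rho^2}{(\rho+\tau)}\bigr)=0$, this gives $u^{\text{sym}}(\beta)-u^0<\max\bigl(\nicefrac\lambda{(\rho+\tau)}-\nicefrac1\tau,\,0\bigr)$ for all $N$ and all $\beta>0$.

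This already settles three of the four cases. If $\lambda<\nicefrac{(\rho+\tau)}\tau$ the bound is $0$, so $u^{\text{sym}}(\beta)<u^0$ for every $\beta>0$, whence $\beta^*=0$ (in fact for all $N$). The same holds as $\rho\to\infty$ with the other parameters fixed, since then $\nicefrac\lambda{(\rho+\tau)}-\nicefrac1\tau\to-\nicefrac1\tau<0$ and the bound is eventually negative. For $\rho\to0$ the feasible interval $[0,\rho]$ shrinks to $\{0\}$, and more formally the threshold $\frac{(N\rho+\tau)^2}{(N-1)^2\rho^2}$ of \cref{thm:b-mean-symmetric-general} tends to $+\infty$, so eventually $\lambda$ falls below it and that theorem yields $\beta^*=0$.

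The substantive case is $N\to\infty$ with $\lambda>\nicefrac{(\rho+\tau)}\tau$, where the upper bound is the positive constant $\nicefrac\lambda{(\rho+\tau)}-\nicefrac1\tau$, so we need a matching lower bound and the location of the maximiser. I would insert $\beta=\nicefrac t{\sqrt N}$ and Taylor-expand: using $z=\nicefrac{\rho^2}{(t\sqrt N)}\bigl(1+\O(N^{-1/2})\bigr)$ and $\nicefrac{\rho\beta}{(\rho-\beta)}=\nicefrac t{\sqrt N}\bigl(1+\O(N^{-1/2})\bigr)$, one obtains, uniformly for $t$ in compact subsets of $(0,\infty)$,
\[
  u^{\text{sym}}\bigl(\nicefrac t{\sqrt N}\bigr)-u^0=\frac\lambda{\rho+\tau}-\frac1\tau-\frac{\phi(t)}{\sqrt N}+\O\bigl(\tfrac1N\bigr),\qquad\phi(t)\defeq\frac t{\rho^2}+\frac{\lambda-1}t.
\]
Since $\lambda>1$, $\phi$ is minimised at $t^\star=\rho\sqrt{\lambda-1}$ with $\phi(t^\star)=\nicefrac{2\sqrt{\lambda-1}}\rho$, so evaluating at $t^\star$ matches the ceiling up to $\O(N^{-1/2})$; hence $u_i(\beta^*)-u^0_i=\nicefrac\lambda{(\rho+\tau)}-\nicefrac1\tau+\o(1)$ and collaboration is profitable.

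Finally, to identify $\beta^*=\sqrt{\nicefrac{(\lambda-1)}N}\,\rho+\o(\nicefrac1{\sqrt N})$, I would first rule out other scales: any $\beta$ bounded away from $0$ gives $u^{\text{sym}}(\beta)-u^0\to\nicefrac\lambda{(\rho+\tau)}-\nicefrac1\tau-\nicefrac\beta{\rho^2}$, strictly below the optimum, while any $\beta\to0$ with $\beta\sqrt N\to\infty$ satisfies $u^{\text{sym}}(\beta)-u^0\le\nicefrac\lambda{(\rho+\tau)}-\nicefrac1\tau-\nicefrac\beta{(2\rho^2)}$ for large $N$ (expanding $u^{\text{sym}}$ for small $\beta$ and $z$), which is beaten by the $\nicefrac{t^\star}{\sqrt N}$ test point — so $\beta^*=\O(\nicefrac1{\sqrt N})$. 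Then $\sqrt N\bigl(\nicefrac\lambda{(\rho+\tau)}-\nicefrac1\tau-(u^{\text{sym}}(\nicefrac t{\sqrt N})-u^0)\bigr)\to\phi(t)$ locally uniformly, so the rescaled maximiser $\beta^*\sqrt N$ converges to the unique minimiser $t^\star$ of $\phi$, giving the claim. The main obstacle is precisely making this $\nicefrac1{\sqrt N}$-scale expansion uniform enough to transfer $\arg\min\phi$ to the true $\beta^*$ while excluding competing scales; a partial shortcut, valid once $\lambda$ exceeds the ``always-profitable'' threshold of \cref{thm:b-mean-symmetric-general} (which tends to $\nicefrac{(\rho+\tau)^2}{\tau^2}$), is that that theorem already makes $\beta^*$ the unique critical point of $u^{\text{sym}}$ in $[0,\rho]$, leaving only the asymptotic solution of $\odv{u^{\text{sym}}}{\beta}=0$; but it does not cover $\lambda\in\bigl(\nicefrac{(\rho+\tau)}\tau,\nicefrac{(\rho+\tau)^2}{\tau^2}\bigr]$, where the self-contained argument above is still needed.
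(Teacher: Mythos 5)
Your proposal is correct in substance but reaches the conclusion by a genuinely different route. The paper works through the critical-point structure of the symmetric gain $h_\lambda$: \cref{lem:b-mean-cases} restricts $\beta^*$ to $\{0\}$ or a root of $h'_\lambda$ with $h''_\lambda\le 0$, \cref{lem:b-mean-beta-ineq} forces any such root to satisfy $\beta=\Omega(N^{-2/3})$, and the asymptotic solution of $h'_\lambda(\beta)=0$ then yields $\beta^*=\sqrt{(\lambda-1)/N}\,\rho+\o(1/\sqrt N)$ and the gain $\lambda/(\rho+\tau)-1/\tau+\o(1)$, whose sign decides profitability; the $\rho\to\infty$ case is handled by evaluating $h_\lambda$ at the candidate maximisers and $\rho\to 0$ via the threshold of \cref{thm:b-mean-symmetric-general}. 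You instead establish the ceiling $u^{\mathrm{sym}}(\beta)-u^0<\max\bigl(\lambda/(\rho+\tau)-1/\tau,\,0\bigr)$ directly from convexity of $g$ in the variable $z$, which is a clean argument the paper does not have: it immediately gives $\beta^*=0$ for $\lambda<(\rho+\tau)/\tau$ \emph{for every} $N$ (stronger than the paper's limit statement) and disposes of $\rho\to\infty$; your $\rho\to 0$ argument coincides with the paper's. For the profitable case your test-point expansion at $\beta=t/\sqrt N$ correctly recovers the deficit $\phi(t)/\sqrt N$ with $\phi(t)=t/\rho^2+(\lambda-1)/t$ minimised at $t^\star=\rho\sqrt{\lambda-1}$, matching the claimed $\beta^*$ and utility gain. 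The one place where your argument is a plan rather than a proof — and you flag it yourself — is pinning down the maximiser: the ceiling is approached at \emph{every} scale $\beta$ with $1/N\ll\beta\ll 1$, so identifying $\beta^*\sqrt N\to t^\star$ requires a two-sided, locally uniform expansion of the deficit (including lower bounds that blow up relative to $1/\sqrt N$ as $t\to 0$ or $t\to\infty$); this is doable with the decomposition you already have, but it is exactly the work that the paper's route avoids by first confining $\beta^*$ to critical points via \cref{lem:b-mean-cases} and \cref{lem:b-mean-beta-ineq}. In short: your approach buys a sharper and more transparent treatment of the unprofitable cases and of the optimal value, at the cost of a more delicate localisation argument for the optimal noise level itself.
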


Similarly to \cref{thm:dp-mean-symmetric-general}, when the number of
participants is big enough, people will want to join the collaborative learning
procedure and shrink their contributions with the number of participants.
The second property shows that when people have very good local estimates they
do not want to participate in collaborative learning: the potential gain in
accuracy is small, while the privacy concerns become very big.
The last property shows that, when potential gain from learning is very small,
$\rho \to 0$, privacy concerns start to dominate learning concerns.

\section{OPTIMAL PROTOCOLS FOR ACCURACY}
\label{sec:acc-max}

\begin{figure}[t]
  \centering 
  \input{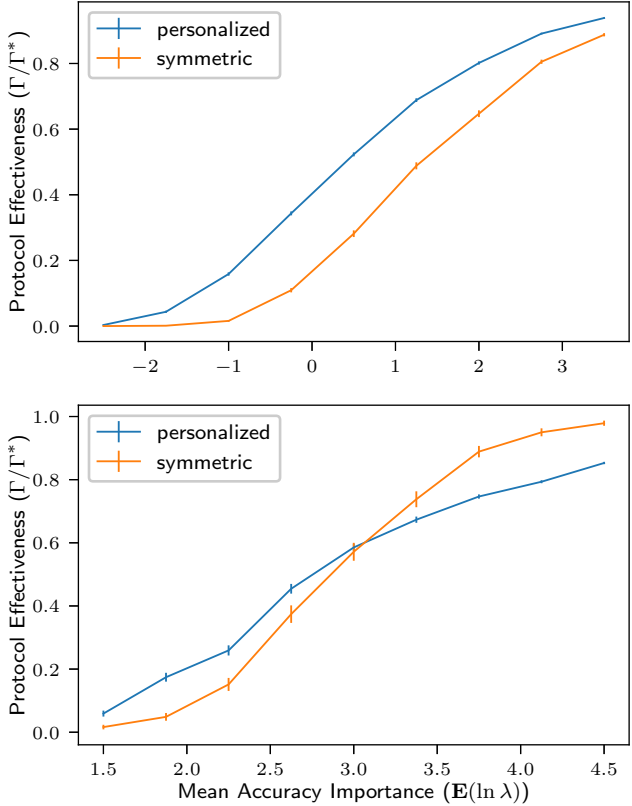}
  \caption{Personalized vs symmetric protocols. Error bars depict the standard
  deviation of the average effectiveness.}
  \label{fig:sym-vs-pers}
\end{figure}

Finally, we empirically find the optimal protocols for the server's
end-model accuracy. The server's accuracy in all cases depends only
on the value of $\varGamma = \sum \beta_i$ (essentially the inverse of the amount of noise) that the clients use. Thus, we will be
interested in maximizing $\varGamma$ given the participation constraints.

We focus on the mean estimation case and compare two families of
solutions. The first family are the symmetric solutions $\beta_i = b, b \in
[0, \rho]$, which are standard in practice. The second family are the solutions from our existence proofs, Theorems
\ref{thm:dp-mean-existence} and \ref{thm:b-mean-first-order}. In all cases, we seek a protocol that maximizes $\varGamma = \sum_{i=1}^N \beta_i$, by optimizing the parameter $\beta$ via a grid search. We focus on a setup with $N = 5$, $n = 100$, and $\sigma = 10$.
In each experiment, we sample the preferences for accuracy from the
log-normal distribution $\lambda_i \sim \exp(\omega)$, where $\omega \sim \N(m,
1)$. We repeat the experiment $1000$ for each value of $m$ and look at
the average ratio of the resulting $\varGamma$ over the optimal value of
$\varGamma^* = N \rho$.

\textbf{DP Mean Estimation} We compare the symmetric solutions to the
solutions found in \cref{thm:dp-mean-existence}, $\beta_i = \zeta_i b, b \in
[0, \nicefrac{\rho}{\max(\zeta_i)}]$. We set $B =
20$. We depict our results on \cref{fig:sym-vs-pers} (upper half). As we can
see, the personalized version of the protocol is always more beneficial than
the symmetric version, especially for small $\lambda$.

\textbf{Bayesian Mean Estimation} We compare the symmetric solutions to
the solutions inspired by \cref{thm:b-mean-first-order}, $\beta_i = \max(\xi_i,
0) b, b \in [0, \nicefrac{\rho}{\max(\xi_i)}]$. We set
$\tau = 1$. We depict our results on \cref{fig:sym-vs-pers} (lower half).  The personalized version of the protocol is more efficient than the
symmetric version of the protocol for small $\lambda$. However, for large $\lambda$ the trend is reversed, as the first-order approximation used for
\cref{thm:b-mean-first-order} is less accurate for large $\beta$.

\section{DISCUSSION}
This work studied the accuracy-privacy trade-off in federated learning as a
multi-objective utility maximization problem. First, we presented a general
framework that formalizes our multi-objective problem and our notion of
mutually beneficial protocols. Then, we derived necessary and sufficient
conditions for the existence of mutually beneficial protocols in the context of
mean estimation and strongly convex optimization. In the case of symmetric
privacy preferences, we found mutually beneficial protocols that are optimal
for total client utility. Finally, we studied optimal mutually beneficial
protocols for the end-model accuracy via a synthetic data experiment. Our
findings demonstrate that analyzing federated learning through the lens of
utility-based analysis can guide the development of better FL protocols.

\paragraph{Future work} We see further practical exploration of our framework
as a promising direction for future work.

A key practical challenge is to apply our analysis to other privacy notions
\citep[e.g., Renyi DP][]{m17r} and non-convex problems. In particular, for
large-scale applications, the known theoretical upper bounds for accuracy are
often loose or inapplicable, complicating the $\err$-function choice. A
practical solution might involve running small-scale tests to get an empirical
scaling law for error. We deem the exploration of such practical approaches to
our multi-objective problem a promising direction for future work.

Another practical challenge is to study the variations of the classic Fed-SGD
algorithm. In particular, it would be interesting to study weight communication
\citep{m17c, l20f, l21f, k21f} and co-training methods \citep{b20d, a23p},
which have been developed to decrease privacy risks in FL. Even in
such setups, privacy remains a concern \citep{g21t,d22d,z23s}, highlighting
the fundamental nature of the privacy-accuracy trade-off and the need for the
instruments for its analysis.

Finally, our results assume that the agents know their privacy preferences in
terms of $\lambda$ parameter (e.g., due to innate preferences or personality
traits). However, in cross-silo settings, clients’ privacy preferences
might instead come from profit maximization incentives (e.g., little privacy
protection may result in the loss of clients and, therefore, revenue). We see
the development of such micro-foundation models for privacy preferences as an
important direction for future work.

\section*{Acknowledgments}

This research has received funding by the Ministry of Education and Science of Bulgaria (support for
INSAIT, part of the Bulgarian National Roadmap for Research Infrastructure). Anna Mihalkova and Teodora Todorova conducted most of their work on this paper while on an internship at INSAIT, Sofia University. The authors would
like to thank Florian Dorner and Dimitar I. Dimitrov for their helpful
feedback and discussions in relation to this work. 

\bibliography{bibliography}

\begin{thebibliography}{43}
\providecommand{\natexlab}[1]{#1}
\providecommand{\url}[1]{\texttt{#1}}
\expandafter\ifx\csname urlstyle\endcsname\relax
  \providecommand{\doi}[1]{doi: #1}\else
  \providecommand{\doi}{doi: \begingroup \urlstyle{rm}\Url}\fi

\bibitem[Abadi et~al.(2016)Abadi, Chu, Goodfellow, McMahan, Mironov, Talwar,
  and Zhang]{a16d}
Martin Abadi, Andy Chu, Ian Goodfellow, H.~Brendan McMahan, Ilya Mironov, Kunal
  Talwar, and Li~Zhang.
\newblock {Deep Learning with Differential Privacy}.
\newblock In \emph{Proceedings of the 2016 ACM SIGSAC Conference on Computer
  and Communications Security}, CCS '16, pages 308--318, New York, NY, USA,
  2016. Association for Computing Machinery.
\newblock ISBN 9781450341394.
\newblock \doi{10.1145/2976749.2978318}.

\bibitem[Abourayya et~al.(2024)Abourayya, Kleesiek, Rao, Ayday, Rao, Webb, and
  Kamp]{a23p}
Amr Abourayya, Jens Kleesiek, Kanishka Rao, Erman Ayday, Bharat Rao, Geoff
  Webb, and Michael Kamp.
\newblock {Protecting Sensitive Data through Federated Co-Training}, 2024.
\newblock arXiv:2310.05696 [cs.LG].

\bibitem[Ananthakrishnan et~al.(2024)Ananthakrishnan, Bates, Jordan, and
  Haghtalab]{a23d}
Nivasini Ananthakrishnan, Stephen Bates, Michael Jordan, and Nika Haghtalab.
\newblock {Delegating Data Collection in Decentralized Machine Learning}.
\newblock In Sanjoy Dasgupta, Stephan Mandt, and Yingzhen Li, editors,
  \emph{Proceedings of The 27th International Conference on Artificial
  Intelligence and Statistics}, volume 238 of \emph{Proceedings of Machine
  Learning Research}, pages 478--486. PMLR, 02--04 May 2024.

\bibitem[Anjarlekar et~al.(2023)Anjarlekar, Etesami, and Srikant]{a23s}
Ameya Anjarlekar, Rasoul Etesami, and R.~Srikant.
\newblock {Striking a Balance: An Optimal Mechanism Design for Heterogenous
  Differentially Private Data Acquisition for Logistic Regression}, 2023.
\newblock arXiv:2309.10340 [cs.LG].

\bibitem[Asi et~al.(2022)Asi, Feldman, and Talwar]{a22o}
Hilal Asi, Vitaly Feldman, and Kunal Talwar.
\newblock {Optimal Algorithms for Mean Estimation under Local Differential
  Privacy}.
\newblock In Kamalika Chaudhuri, Stefanie Jegelka, Le~Song, Csaba Szepesvari,
  Gang Niu, and Sivan Sabato, editors, \emph{Proceedings of the 39th
  International Conference on Machine Learning}, volume 162 of
  \emph{Proceedings of Machine Learning Research}, pages 1046--1056. PMLR,
  17--23 Jul 2022.

\bibitem[Bassily et~al.(2014)Bassily, Smith, and Thakurta]{b14p}
Raef Bassily, Adam Smith, and Abhradeep Thakurta.
\newblock {Private Empirical Risk Minimization: Efficient Algorithms and Tight
  Error Bounds}.
\newblock In \emph{2014 IEEE 55th Annual Symposium on Foundations of Computer
  Science}, pages 464--473, 2014.
\newblock \doi{10.1109/FOCS.2014.56}.

\bibitem[Bassily et~al.(2019)Bassily, Feldman, Talwar, and Guha~Thakurta]{b19p}
Raef Bassily, Vitaly Feldman, Kunal Talwar, and Abhradeep Guha~Thakurta.
\newblock {Private Stochastic Convex Optimization with Optimal Rates}.
\newblock In H.~Wallach, H.~Larochelle, A.~Beygelzimer, F.~d\textquotesingle
  Alch\'{e}-Buc, E.~Fox, and R.~Garnett, editors, \emph{Advances in Neural
  Information Processing Systems}, volume~32. Curran Associates, Inc., 2019.

\bibitem[Bistritz et~al.(2020)Bistritz, Mann, and Bambos]{b20d}
Ilai Bistritz, Ariana Mann, and Nicholas Bambos.
\newblock {Distributed Distillation for On-Device Learning}.
\newblock In H.~Larochelle, M.~Ranzato, R.~Hadsell, M.F. Balcan, and H.~Lin,
  editors, \emph{Advances in Neural Information Processing Systems}, volume~33,
  pages 22593--22604. Curran Associates, Inc., 2020.

\bibitem[Blum et~al.(2021)Blum, Haghtalab, Phillips, and Shao]{b21o}
Avrim Blum, Nika Haghtalab, Richard~Lanas Phillips, and Han Shao.
\newblock {One for One, or All for All: Equilibria and Optimality of
  Collaboration in Federated Learning}.
\newblock In Marina Meila and Tong Zhang, editors, \emph{Proceedings of the
  38th International Conference on Machine Learning}, volume 139 of
  \emph{Proceedings of Machine Learning Research}, pages 1005--1014. PMLR,
  18--24 Jul 2021.

\bibitem[Bottou et~al.(2018)Bottou, Curtis, and Nocedal]{b18o}
L\'{e}on Bottou, Frank~E. Curtis, and Jorge Nocedal.
\newblock {Optimization Methods for Large-Scale Machine Learning}.
\newblock \emph{SIAM Review}, 60\penalty0 (2):\penalty0 223--311, 2018.
\newblock \doi{10.1137/16M1080173}.

\bibitem[Cai et~al.(2021)Cai, Wang, and Zhang]{c21c}
T.~Tony Cai, Yichen Wang, and Linjun Zhang.
\newblock {The cost of privacy: Optimal rates of convergence for parameter
  estimation with differential privacy}.
\newblock \emph{The Annals of Statistics}, 49\penalty0 (5):\penalty0
  2825--2850, 2021.
\newblock \doi{10.1214/21-AOS2058}.

\bibitem[Chung(1954)]{c54s}
K.~L. Chung.
\newblock {On a Stochastic Approximation Method}.
\newblock \emph{The Annals of Mathematical Statistics}, 25\penalty0
  (3):\penalty0 463 -- 483, 1954.
\newblock \doi{10.1214/aoms/1177728716}.

\bibitem[Dimitrov et~al.(2022)Dimitrov, Balunovic, Konstantinov, and
  Vechev]{d22d}
Dimitar~Iliev Dimitrov, Mislav Balunovic, Nikola Konstantinov, and Martin
  Vechev.
\newblock {Data Leakage in Federated Averaging}.
\newblock \emph{Transactions on Machine Learning Research}, 2022.
\newblock ISSN 2835-8856.

\bibitem[Donahue and Kleinberg(2021)]{d21}
Kate Donahue and Jon Kleinberg.
\newblock {Model-sharing Games: Analyzing Federated Learning Under Voluntary
  Participation}.
\newblock \emph{Proceedings of the AAAI Conference on Artificial Intelligence},
  35\penalty0 (6):\penalty0 5303--5311, May 2021.
\newblock \doi{10.1609/aaai.v35i6.16669}.

\bibitem[Dorner et~al.(2023)Dorner, Konstantinov, Pashaliev, and Vechev]{d23i}
Florian~E. Dorner, Nikola Konstantinov, Georgi Pashaliev, and Martin Vechev.
\newblock {Incentivizing Honesty among Competitors in Collaborative Learning
  and Optimization}.
\newblock In A.~Oh, T.~Neumann, A.~Globerson, K.~Saenko, M.~Hardt, and
  S.~Levine, editors, \emph{Advances in Neural Information Processing Systems},
  volume~36, pages 7659--7696. Curran Associates, Inc., 2023.

\bibitem[Dwork and Roth(2014)]{d14a}
Cynthia Dwork and Aaron Roth.
\newblock {The Algorithmic Foundations of Differential Privacy}.
\newblock \emph{Foundations and Trends® in Theoretical Computer Science},
  9\penalty0 (3--4):\penalty0 211--407, 2014.
\newblock ISSN 1551-305X.
\newblock \doi{10.1561/0400000042}.

\bibitem[Emmerich and Deutz(2018)]{e18t}
Michael~TM Emmerich and Andr{\'e}~H Deutz.
\newblock A tutorial on multiobjective optimization: fundamentals and
  evolutionary methods.
\newblock \emph{Natural computing}, 17:\penalty0 585--609, 2018.

\bibitem[Feldman et~al.(2020)Feldman, Koren, and Talwar]{f20p}
Vitaly Feldman, Tomer Koren, and Kunal Talwar.
\newblock Private stochastic convex optimization: optimal rates in linear time.
\newblock In \emph{Proceedings of the 52nd Annual ACM SIGACT Symposium on
  Theory of Computing}, STOC 2020, pages 439--449, New York, NY, USA, 2020.
  Association for Computing Machinery.
\newblock ISBN 9781450369794.
\newblock \doi{10.1145/3357713.3384335}.

\bibitem[Feldman et~al.(2022)Feldman, McMillan, and Talwar]{f22h}
Vitaly Feldman, Audra McMillan, and Kunal Talwar.
\newblock {Hiding Among the Clones: A Simple and Nearly Optimal Analysis of
  Privacy Amplification by Shuffling}.
\newblock In \emph{2021 IEEE 62nd Annual Symposium on Foundations of Computer
  Science (FOCS)}, pages 954--964, 2022.
\newblock \doi{10.1109/FOCS52979.2021.00096}.

\bibitem[Geiping et~al.(2020)Geiping, Bauermeister, Dr\"{o}ge, and
  Moeller]{g20i}
Jonas Geiping, Hartmut Bauermeister, Hannah Dr\"{o}ge, and Michael Moeller.
\newblock {Inverting Gradients - How easy is it to break privacy in federated
  learning?}
\newblock In H.~Larochelle, M.~Ranzato, R.~Hadsell, M.F. Balcan, and H.~Lin,
  editors, \emph{Advances in Neural Information Processing Systems}, volume~33,
  pages 16937--16947. Curran Associates, Inc., 2020.

\bibitem[Geng et~al.(2022)Geng, Mou, Li, Li, Beyan, Decker, and Rong]{g21t}
Jiahui Geng, Yongli Mou, Feifei Li, Qing Li, Oya Beyan, Stefan Decker, and
  Chunming Rong.
\newblock {Towards General Deep Leakage in Federated Learning}, 2022.
\newblock arXiv:2110.09074 [cs.LG].

\bibitem[Ghosh and Roth(2011)]{g11s}
Arpita Ghosh and Aaron Roth.
\newblock Selling privacy at auction.
\newblock In \emph{Proceedings of the 12th ACM Conference on Electronic
  Commerce}, EC '11, pages 199--208, New York, NY, USA, 2011. Association for
  Computing Machinery.
\newblock ISBN 9781450302616.
\newblock \doi{10.1145/1993574.1993605}.

\bibitem[Hu et~al.(2022)Hu, Cai, Shan, Tang, Cai, Song, Li, and Song]{h22v}
Yuzheng Hu, Tianle Cai, Jinyong Shan, Shange Tang, Chaochao Cai, Ethan Song,
  Bo~Li, and Dawn Song.
\newblock {Is Vertical Logistic Regression Privacy-Preserving? A Comprehensive
  Privacy Analysis and Beyond}, 2022.
\newblock arXiv:2207.09087 [cs.CR].

\bibitem[Huang et~al.(2022)Huang, Ke, Kamhoua, Mohapatra, and Liu]{h22i}
Chao Huang, Shuqi Ke, Charles Kamhoua, Prasant Mohapatra, and Xin Liu.
\newblock {Incentivizing Data Contribution in Cross-Silo Federated Learning},
  2022.
\newblock arXiv:2203.03885 [cs.GT].

\bibitem[Kairouz et~al.(2021)Kairouz, McMahan, Avent, Bellet, Bennis, Bhagoji,
  Bonawitz, Charles, Cormode, Cummings, et~al.]{k21a}
Peter Kairouz, H.~Brendan McMahan, Brendan Avent, Aurélien Bellet, Mehdi
  Bennis, Arjun~Nitin Bhagoji, Kallista Bonawitz, Zachary Charles, Graham
  Cormode, Rachel Cummings, et~al.
\newblock {Advances and Open Problems in Federated Learning}.
\newblock \emph{Foundations and Trends® in Machine Learning}, 14\penalty0
  (1--2):\penalty0 1--210, 2021.
\newblock ISSN 1935-8237.
\newblock \doi{10.1561/2200000083}.

\bibitem[Kamp et~al.(2023)Kamp, Fischer, and Vreeken]{k21f}
Michael Kamp, Jonas Fischer, and Jilles Vreeken.
\newblock {Federated Learning from Small Datasets}.
\newblock In \emph{The Eleventh International Conference on Learning
  Representations}, 2023.

\bibitem[Karimireddy et~al.(2022)Karimireddy, Guo, and Jordan]{k22m}
Sai~Praneeth Karimireddy, Wenshuo Guo, and Michael~I. Jordan.
\newblock {Mechanisms that Incentivize Data Sharing in Federated Learning},
  2022.
\newblock arXiv:2207.04557 [cs.GT].

\bibitem[Konečný et~al.(2016)Konečný, McMahan, Ramage, and
  Richtárik]{k16f}
Jakub Konečný, H.~Brendan McMahan, Daniel Ramage, and Peter Richtárik.
\newblock {Federated Optimization: Distributed Machine Learning for On-Device
  Intelligence}, 2016.
\newblock arXiv:1610.02527 [cs.LG].

\bibitem[Laufer et~al.(2024)Laufer, Kleinberg, and Heidari]{l23f}
Benjamin Laufer, Jon Kleinberg, and Hoda Heidari.
\newblock {Fine-Tuning Games: Bargaining and Adaptation for General-Purpose
  Models}.
\newblock In \emph{Proceedings of the ACM Web Conference 2024}, WWW '24, pages
  66--76, New York, NY, USA, 2024. Association for Computing Machinery.
\newblock ISBN 9798400701719.
\newblock \doi{10.1145/3589334.3645366}.

\bibitem[Li et~al.(2020)Li, Sahu, Zaheer, Sanjabi, Talwalkar, and Smith]{l20f}
Tian Li, Anit~Kumar Sahu, Manzil Zaheer, Maziar Sanjabi, Ameet Talwalkar, and
  Virginia Smith.
\newblock {Federated Optimization in Heterogeneous Networks}.
\newblock In I.~Dhillon, D.~Papailiopoulos, and V.~Sze, editors,
  \emph{Proceedings of Machine Learning and Systems}, volume~2, pages 429--450,
  2020.

\bibitem[Li et~al.(2021)Li, Jiang, Zhang, Kamp, and Dou]{l21f}
Xiaoxiao Li, Meirui Jiang, Xiaofei Zhang, Michael Kamp, and Qi~Dou.
\newblock {FedBN: Federated Learning on Non-IID Features via Local Batch
  Normalization}.
\newblock In \emph{International Conference on Learning Representations}, 2021.

\bibitem[Lyu et~al.(2020)Lyu, Xu, Wang, and Yu]{l20c}
Lingjuan Lyu, Xinyi Xu, Qian Wang, and Han Yu.
\newblock \emph{Collaborative Fairness in Federated Learning}, pages 189--204.
\newblock Springer International Publishing, Cham, 2020.
\newblock ISBN 978-3-030-63076-8.
\newblock \doi{10.1007/978-3-030-63076-8_14}.

\bibitem[McMahan et~al.(2017)McMahan, Moore, Ramage, Hampson, and Arcas]{m17c}
Brendan McMahan, Eider Moore, Daniel Ramage, Seth Hampson, and Blaise Aguera~y
  Arcas.
\newblock {Communication-Efficient Learning of Deep Networks from Decentralized
  Data}.
\newblock In Aarti Singh and Jerry Zhu, editors, \emph{Proceedings of the 20th
  International Conference on Artificial Intelligence and Statistics},
  volume~54 of \emph{Proceedings of Machine Learning Research}, pages
  1273--1282. PMLR, 20--22 Apr 2017.

\bibitem[Mironov(2017)]{m17r}
Ilya Mironov.
\newblock {Rényi Differential Privacy}.
\newblock In \emph{2017 IEEE 30th Computer Security Foundations Symposium
  (CSF)}, pages 263--275, 2017.
\newblock \doi{10.1109/CSF.2017.11}.

\bibitem[Pei(2022)]{p20s}
Jian Pei.
\newblock {A Survey on Data Pricing: From Economics to Data Science}.
\newblock \emph{IEEE Transactions on Knowledge and Data Engineering},
  34\penalty0 (10):\penalty0 4586--4608, 2022.
\newblock \doi{10.1109/TKDE.2020.3045927}.

\bibitem[Ponomareva et~al.(2023)Ponomareva, Hazimeh, Kurakin, Xu, Denison,
  McMahan, Vassilvitskii, Chien, and Thakurta]{p23d}
Natalia Ponomareva, Hussein Hazimeh, Alex Kurakin, Zheng Xu, Carson Denison,
  H~Brendan McMahan, Sergei Vassilvitskii, Steve Chien, and Abhradeep~Guha
  Thakurta.
\newblock {How to DP-fy ML: A Practical Guide to Machine Learning with
  Differential Privacy}.
\newblock \emph{Journal of Artificial Intelligence Research}, 77:\penalty0
  1113--1201, 2023.

\bibitem[Rozemberczki et~al.(2022)Rozemberczki, Watson, Bayer, Yang, Kiss,
  Nilsson, and Sarkar]{r22s}
Benedek Rozemberczki, Lauren Watson, Péter Bayer, Hao-Tsung Yang, Olivér
  Kiss, Sebastian Nilsson, and Rik Sarkar.
\newblock {The Shapley Value in Machine Learning}.
\newblock In Lud~De Raedt, editor, \emph{Proceedings of the Thirty-First
  International Joint Conference on Artificial Intelligence, {IJCAI-22}}, pages
  5572--5579. International Joint Conferences on Artificial Intelligence
  Organization, 7 2022.
\newblock \doi{10.24963/ijcai.2022/778}.
\newblock Survey Track.

\bibitem[Saig et~al.(2023)Saig, Talgam-Cohen, and Rosenfeld]{s24d}
Eden Saig, Inbal Talgam-Cohen, and Nir Rosenfeld.
\newblock {Delegated Classification}.
\newblock In A.~Oh, T.~Neumann, A.~Globerson, K.~Saenko, M.~Hardt, and
  S.~Levine, editors, \emph{Advances in Neural Information Processing Systems},
  volume~36, pages 13200--13236. Curran Associates, Inc., 2023.

\bibitem[Shen et~al.(2023)Shen, Ye, Kang, Hassani, and Shokri]{s23s}
Zebang Shen, Jiayuan Ye, Anmin Kang, Hamed Hassani, and Reza Shokri.
\newblock {Share Your Representation Only: Guaranteed Improvement of the
  Privacy-Utility Tradeoff in Federated Learning}.
\newblock In \emph{The Eleventh International Conference on Learning
  Representations}, 2023.

\bibitem[Tsoy and Konstantinov(2023)]{t24s}
Nikita Tsoy and Nikola Konstantinov.
\newblock {Strategic Data Sharing between Competitors}.
\newblock In A.~Oh, T.~Neumann, A.~Globerson, K.~Saenko, M.~Hardt, and
  S.~Levine, editors, \emph{Advances in Neural Information Processing Systems},
  volume~36, pages 16483--16514. Curran Associates, Inc., 2023.

\bibitem[Wang et~al.(2020)Wang, Rausch, Zhang, Jia, and Song]{w20p}
Tianhao Wang, Johannes Rausch, Ce~Zhang, Ruoxi Jia, and Dawn Song.
\newblock \emph{A Principled Approach to Data Valuation for Federated
  Learning}, pages 153--167.
\newblock Springer International Publishing, Cham, 2020.
\newblock ISBN 978-3-030-63076-8.
\newblock \doi{10.1007/978-3-030-63076-8_11}.

\bibitem[Zhu et~al.(2023)Zhu, Yao, and Blaschko]{z23s}
Junyi Zhu, Ruicong Yao, and Matthew~B Blaschko.
\newblock {Surrogate model extension (SME): A fast and accurate weight update
  attack on federated learning}.
\newblock \emph{International Conference on Machine Learning}, 2023.

\bibitem[Zhu et~al.(2019)Zhu, Liu, and Han]{z19d}
Ligeng Zhu, Zhijian Liu, and Song Han.
\newblock {Deep Leakage from Gradients}.
\newblock In H.~Wallach, H.~Larochelle, A.~Beygelzimer, F.~d\textquotesingle
  Alch\'{e}-Buc, E.~Fox, and R.~Garnett, editors, \emph{Advances in Neural
  Information Processing Systems}, volume~32. Curran Associates, Inc., 2019.

\end{thebibliography}
\bibliographystyle{plainnat}

\section*{Checklist}

\begin{enumerate}

  \item For all models and algorithms presented, check if you include:
    \begin{enumerate}
      \item A clear description of the mathematical setting, assumptions, algorithm, and/or model. [Yes]
      \item An analysis of the properties and complexity (time, space, sample size) of any algorithm. [Not Applicable]
      \item (Optional) Anonymized source code, with specification of all dependencies, including external libraries. [Not Applicable]
    \end{enumerate}

  \item For any theoretical claim, check if you include:
    \begin{enumerate}
      \item Statements of the full set of assumptions of all theoretical results. [Yes]
      \item Complete proofs of all theoretical results. [Yes]
      \item Clear explanations of any assumptions. [Yes]     
    \end{enumerate}

  \item For all figures and tables that present empirical results, check if you include:
    \begin{enumerate}
      \item The code, data, and instructions needed to reproduce the main
        experimental results (either in the supplemental material or as a URL).
        [Yes]
      \item All the training details (e.g., data splits, hyperparameters, how they were chosen). [Not Applicable]
      \item A clear definition of the specific measure or statistics and error bars (e.g., with respect to the random seed after running experiments multiple times). [Yes]
      \item A description of the computing infrastructure used. (e.g., type of GPUs, internal cluster, or cloud provider). [Not Applicable]
    \end{enumerate}

  \item If you are using existing assets (e.g., code, data, models) or curating/releasing new assets, check if you include:
    \begin{enumerate}
      \item Citations of the creator If your work uses existing assets. [Not Applicable]
      \item The license information of the assets, if applicable. [Not Applicable]
      \item New assets either in the supplemental material or as a URL, if applicable. [Not Applicable]
      \item Information about consent from data providers/curators. [Not Applicable]
      \item Discussion of sensible content if applicable, e.g., personally identifiable information or offensive content. [Not Applicable]
    \end{enumerate}

  \item If you used crowdsourcing or conducted research with human subjects, check if you include:
    \begin{enumerate}
      \item The full text of instructions given to participants and screenshots. [Not Applicable]
      \item Descriptions of potential participant risks, with links to Institutional Review Board (IRB) approvals if applicable. [Not Applicable]
      \item The estimated hourly wage paid to participants and the total amount spent on participant compensation. [Not Applicable]
    \end{enumerate}

\end{enumerate}

\newpage

\onecolumn

\appendix

\begin{center}
  {\LARGE Supplementary Material}
\end{center}

The supplementary material is structured as follows.
\begin{itemize}
\item \cref{sec:background} contains several existing results on differential privacy, as well as definitions from convex optimization that are used in our analysis.
  \item \cref{sec:proofs} contains the proofs of all results in the main text.
  \item \cref{sec:other_utilities} presents further analysis with another choice of utility function. 
\end{itemize}

\section{BACKGROUND}
\label{sec:background}

\paragraph{Differential privacy} We first present a classic result about the DP guarantees of the Gaussian mechanism.

\begin{theorem}[Theorem A.1, \citealt{d14a}]
  \label{thm:dp-mean-privacy}
  Let $\varepsilon \in (0, 1)$ be arbitrary. For $c^2 > 2
  \ln\Par*{\frac{1.25}{\delta}}$, the Gaussian Mechanism for function $f$
  with parameter $\sigma \ge \frac{c \Delta_2 f}{\epsilon}$, where $\Delta_2 f$
  is the $\ell_2$ sensitivity of $f$, is $(\varepsilon, \delta)$-differentially
  private.
\end{theorem}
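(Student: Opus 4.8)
The plan is to run the classical privacy-loss analysis of the Gaussian mechanism. First I would reduce to one dimension: the isotropic Gaussian density is spherically symmetric, so for any pair of neighbouring datasets $D, D'$ one may rotate coordinates so that the shift $f(D) - f(D')$ lies along a single axis; in the remaining coordinates $A(D)$ and $A(D')$ are identically distributed and cancel in the likelihood ratio. The problem thus reduces to comparing $\mathcal{N}(0,\sigma^2)$ with $\mathcal{N}(\Delta,\sigma^2)$ for some $0 \le \Delta \le \Delta_2 f$.

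Next I would set up the privacy loss $L(y) = \ln\bigl(p(y)/p'(y)\bigr)$, where $p,p'$ are the output densities under $D,D'$. The standard split shows that for every measurable $S$,
\[
  \Pr[A(D)\in S] \;\le\; e^{\varepsilon}\Pr[A(D')\in S] + \Pr_{y\sim A(D)}[L(y) > \varepsilon],
\]
so it suffices to prove $\Pr_{y\sim A(D)}[L(y) > \varepsilon] \le \delta$; symmetry of the neighbouring relation and of this tail under $D\leftrightarrow D'$ then yields $(\varepsilon,\delta)$-DP. Writing $y = f(D) + z$ with $z\sim\mathcal{N}(0,\sigma^2)$ in the one-dimensional picture, a one-line computation gives $L = (2z\Delta + \Delta^2)/(2\sigma^2)$, so $L > \varepsilon$ exactly when $z > \sigma^2\varepsilon/\Delta - \Delta/2$. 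Equivalently, with $W\sim\mathcal{N}(0,1)$ and $a := \Delta/\sigma \le \varepsilon/c$ (from $\sigma \ge c\,\Delta_2 f/\varepsilon$), the loss $L$ has the law of $a^2/2 + aW$, and $L > \varepsilon$ iff $W > \varepsilon/a - a/2$.

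Then I would bound this Gaussian tail. Since $a\mapsto \varepsilon/a - a/2$ is decreasing, the worst admissible value is $a = \varepsilon/c$, giving threshold $t = c - \varepsilon/(2c)$, which is positive in the regime of interest (here $c^2 > 2\ln(1.25/\delta)$ comfortably exceeds $\varepsilon/2$). Applying a Gaussian tail bound (e.g.\ $\Pr[W>t]\le e^{-t^2/2}$, or the sharper Mills-ratio inequality) and expanding $t^2 = c^2 - \varepsilon + \varepsilon^2/(4c^2)$, the tail is controlled by a quantity of order $e^{-c^2/2}$; since $c^2 > 2\ln(1.25/\delta)$ means $e^{-c^2/2} < \delta/1.25$, careful accounting of the residual $e^{\varepsilon/2}$-type factor using $\varepsilon\in(0,1)$ (and the detailed bookkeeping of Dwork--Roth) closes the bound at $\delta$. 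Monotonicity of the threshold in $\Delta$ and in $\sigma$ then extends this to all $\|f(D)-f(D')\|\le\Delta_2 f$ and all $\sigma\ge c\,\Delta_2 f/\varepsilon$, which is the claim.

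The main obstacle is precisely this last calibration. Turning the rough estimate of order $e^{-c^2/2}$ into a clean bound $\le\delta$ requires tracking the lower-order contributions carefully --- the $-\Delta/2$ shift in the threshold, the $\varepsilon^2/(4c^2)$ correction, and the $1/(t\sqrt{2\pi})$ prefactor that appears if one uses the Mills-ratio bound --- and it is the interplay of these with the hypothesis $\varepsilon\in(0,1)$ that forces the specific constant $1.25$ in $c^2 > 2\ln(1.25/\delta)$. The dimension reduction and the Gaussianity of $L$ are routine by comparison.
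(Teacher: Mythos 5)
This theorem is not proved in the paper at all --- it is quoted directly from Dwork and Roth (2014, Theorem A.1) as background, and your sketch reproduces the standard argument from Appendix A of that reference (dimension reduction by rotation, the privacy-loss random variable $a^2/2 + aW$, and the Gaussian tail calibration, including an honest flag that the final constant-chasing with the Mills-ratio prefactor is where the $1.25$ comes from). The outline is correct and takes essentially the same route as the cited proof.
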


In the case of mean estimation, $f$ is the average over all samples, $\Delta_2
f = \frac{B}{n}$, $\delta = \frac{1}{n^2}$, and $\sigma = \alpha_i$. In the
case of stochastic optimization, $f$ is the average over all gradients,
$\Delta_2 f = \frac{B}{n}$, and $\sigma = \alpha_i$.

Additionally, we state a result from \citealt{f22h}, which provides DP guarantees for chaining several local-DP algorithms. 

\begin{theorem}[Theorem 3.8, \citealt{f22h}]
  \label{thm:dp-sgd-suffle-privacy}
  For a domain $\mathcal{D}$, let $R^t\colon f \times \mathcal{D} \to
  S^t$ for $t \in [m]$ (where $S^t$ is the range space of $R^t$) be a
  sequence of algorithms such that $R^t(z_{1:t-1}, \cdot)$ is a $(\varepsilon_0,
  \delta_0)$-DP local randomizer for all values of auxiliary inputs $z_{1:t-1}
  \in S^1 \times \dots \times S^{t-1}$. Let $A\colon \mathcal{D}^m \to S^1
  \times \dots \times S^m$ be the algorithm that given a dataset $b_{1:m} \in
  B$, samples a uniformly random permutation $\pi$, then sequentially
  computes $z^t = R^t(z_{1:t-1}, b_{\pi(t)})$ for $t \in [m]$ and
  outputs $z_{1:m}$. Then for any $\delta \in [0, 1]$ such that $\varepsilon_0
  \le \ln\Par[\Big]{\frac{n}{16 \ln\Par*{\frac{2}{\delta}}}}$, $A$ is
  $(\varepsilon, \delta + (\me^\varepsilon + 1) (1 +
  \me^{-\frac{\varepsilon_0}{2}}) m \delta_0)$-DP, where $\varepsilon$ has the
  following property
  \[
    \varepsilon \le \ln\Par[\bigg]{1 + \frac{\me^{\varepsilon_0} -
    1}{\me^{\varepsilon_0} + 1} \Par[\bigg]{\frac{8 \sqrt{\me^{\varepsilon_0}
    \ln(\nicefrac{4}{\delta})}}{\sqrt{m}} + \frac{8
    \me^{\varepsilon_0}}{m}}}.
  \]
\end{theorem}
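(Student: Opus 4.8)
The plan is to establish the central differential privacy guarantee of the shuffled composition $A$ by reducing, for each fixed pair of neighboring datasets, the distinguishing problem to a one-parameter hypothesis test between two explicit and simple distributions, following the ``hiding among the clones'' strategy. Fix neighboring inputs $b_{1:m}$ and $b'_{1:m}$ differing only in the value of a single record, say $x_0$ versus $x_1$, which the permutation $\pi$ places in one ``special'' slot. By post-processing it suffices to bound the hockey-stick divergence between the output distributions of $A$ on these two inputs. First I would dispose of adaptivity: since each $R^t(z_{1:t-1},\cdot)$ is a local randomizer for every fixed prefix $z_{1:t-1}$, the per-record guarantee and the decomposition below hold \emph{conditionally} on the prefix, and an inductive coupling over the $m$ rounds reduces the adaptive shuffle to the non-adaptive clone-counting problem.

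The core step is the clone decomposition. Pure $\varepsilon_0$-local DP forces the pointwise likelihood ratio of $R(x_0)$ to $R(x_1)$ into $[e^{-\varepsilon_0}, e^{\varepsilon_0}]$, which lets me write the output of \emph{every} record's randomizer as a mixture $R(\cdot) = \frac{2}{e^{\varepsilon_0}+1}\,\nu + \frac{e^{\varepsilon_0}-1}{e^{\varepsilon_0}+1}\,\rho(\cdot)$, where $\nu$ is a canonical ``clone'' distribution equally compatible with $x_0$ and $x_1$, and $\rho(\cdot)$ is the input-dependent residual carrying the distinguishing information. Among the $m-1$ non-differing records, the number $C$ that realize the clone component is then distributed as $\mathrm{Bin}(m-1, \frac{2}{e^{\varepsilon_0}+1})$. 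Conditioning on $C=c$ and using the exchangeability induced by the uniform permutation, the outputs under $b_{1:m}$ and $b'_{1:m}$ become identical except that the special slot contributes one extra ``vote'' for $x_0$ versus $x_1$ hidden among $c+1$ exchangeable clone slots. This collapses the entire comparison to distinguishing $\mathrm{Bin}(c,\tfrac12)+1$ from $\mathrm{Bin}(c,\tfrac12)$, up to the symmetric treatment of the residual components.

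With this reduction in hand I would bound $\varepsilon$ for the conditional problem by estimating the divergence between the two shifted binomials, which for $\mathrm{Bin}(c,\tfrac12)$ scales like $O(1/\sqrt{c})$, and then average over $C$. Concentration of $C$ around its mean $\frac{2(m-1)}{e^{\varepsilon_0}+1}=\Theta(m)$ (valid when the number of shuffled records is large relative to $\ln(1/\delta)$, the regime delineated by the stated upper bound on $\varepsilon_0$) lets me replace $c$ by its mean outside a failure event of probability $O(\delta)$, turning the $O(1/\sqrt{c})$ estimate into the advertised $\frac{8\sqrt{e^{\varepsilon_0}\ln(4/\delta)}}{\sqrt{m}} + \frac{8 e^{\varepsilon_0}}{m}$ inside the logarithm, with the prefactor $\frac{e^{\varepsilon_0}-1}{e^{\varepsilon_0}+1}$ emerging directly from the residual mixture weight. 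Finally, to pass from pure to approximate $(\varepsilon_0,\delta_0)$-local randomizers, I would invoke the standard reduction expressing an $(\varepsilon_0,\delta_0)$-DP randomizer as a pure $\varepsilon_0$-DP one after deleting an event of probability $O(\delta_0)$ per round; paying for these $m$ events by a union bound produces exactly the additive slack $(e^{\varepsilon}+1)(1+e^{-\varepsilon_0/2})\,m\,\delta_0$ in the final $\delta$.

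The main obstacle I anticipate is the concentration-and-averaging step: bounding the hockey-stick divergence of the \emph{mixture} over $C$ (rather than for a single fixed $c$) requires carefully splitting into the high-probability regime where $C$ is near its mean and a tail regime absorbed into $\delta$, while tracking constants tightly enough to land on the precise factors of $8$ and the $\sqrt{\ln(4/\delta)}$ dependence. The adaptivity handling and the pure-to-approximate reduction, though genuinely delicate, are comparatively structural; the quantitative binomial divergence estimate is where the real work lies.
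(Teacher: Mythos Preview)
The paper does not prove this statement at all: \cref{thm:dp-sgd-suffle-privacy} is quoted verbatim as Theorem~3.8 of \citet{f22h} in the background section and is used as a black box in the proof of \cref{thm:dp-sgd-privacy}. There is therefore no ``paper's own proof'' to compare against.

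That said, your sketch is a faithful outline of the actual argument in \citet{f22h}: the clone mixture decomposition with weight $\tfrac{e^{\varepsilon_0}-1}{e^{\varepsilon_0}+1}$, the reduction to distinguishing shifted binomials, the binomial concentration step under the hypothesis $\varepsilon_0 \le \ln\bigl(n / (16\ln(2/\delta))\bigr)$, and the pure-to-approximate reduction contributing the $(e^{\varepsilon}+1)(1+e^{-\varepsilon_0/2})\,m\,\delta_0$ slack are all the right ingredients. If your goal were to reprove the cited result rather than to reproduce the present paper's treatment, your plan is sound; the main technical effort is exactly where you identify it, in the quantitative hockey-stick bound for the binomial mixture with the correct constants.
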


In our case, the dataset $B$ corresponds to the batches that the client will
use during training. Two batches will be neighboring if they differ in one
element and this elements are neighboors in the usual sense. At the same time,
$R^t$ corresponds to the computation of gradient using the client's batch.
(Here, the contribution of other clients are treated as an additional
randomization in the algorithm $R^t$.)

\paragraph{Assumptions for the SGD objective} We assume that $f$ is $\mu$-strongly convex on $W$:
$$\forall \w_1, \w_2 \in W \quad f(\w_1) \geq f(\w_2) + \nabla f(\w_2)^{\tran}(\w_1 - \w_2) + \frac{\mu}{2}\|\w_1 - \w_2\|^2$$
as well as $L$-smooth on $\mathbb{R}^d$:
$$\forall \w_1, \w_2 \in \mathbb{R}^d \quad f(\w_1) \leq f(\w_2) + \nabla f(\w_2)^{\tran}(\w_1 - \w_2) + \frac{L}{2}\|\w_1 - \w_2\|^2.$$

\section{PROOFS}
\label{sec:proofs}

\subsection{Proof of \texorpdfstring{\cref{thm:dp-mean-predictor}}{Theorem
\ref{thm:dp-mean-predictor}}}
\label{sec:dp-mean-predictor-proof}

We want to solve the following optimization problem
\[
  \min_{\{a_i\}_{i=1}^N} \E\Par[\bigg]{\Par[\bigg]{\sum_{k \neq i} a_k m_k +
  a_i \bar{x}_i - \mu}^2} \text{ s.t. } \sum_{i=1}^N a_i = 1.
\]
Since $\{m_k\}_{k \neq i} \cup \{\bar{x}_i\}$ are independent, we get
\[
  \E\Par[\bigg]{\Par[\bigg]{\sum_{k \neq i} a_k m_k + a_i \bar{x}_i - \mu}^2} =
  \sum_{k \neq i} a_k^2 \Par*{\alpha_k^2 + \frac{1}{\rho}} +
  \frac{a_i^2}{\rho},
\]
where $\rho \defeq \frac{n}{\sigma^2}$. Therefore, we get the following first
optimality order conditions
\begin{align*}
  2 a_k \Par*{\alpha_k^2 + \frac{1}{\rho}} &= \nu \: \forall k \neq i, \\
  \frac{2 a_i}{\rho} &= \nu,
\end{align*}
where $\nu$ is a Lagrange multiplier. Solving these equations, we get
\begin{align*}
  a_k &= \frac{\beta_k}{\gamma_i + \rho} \: \forall k \neq i, \\
  a_i &= \frac{\rho}{\gamma_i + \rho},
\end{align*}
where $\beta_k \defeq \frac{1}{\alpha_k^2 + \frac{1}{\rho}}$ and $\gamma_i
\defeq \sum_{k \neq i} \beta_k$. These weights give the desired formula for the
optimal estimator.

\subsection{Proof of \texorpdfstring{\cref{thm:dp-mean-existence}}{Theorem
\ref{thm:dp-mean-existence}}}
\label{sec:dp-mean-existence-proof}

Denote $\varGamma \defeq \sum_{i=1}^N \beta_i = \gamma_i + \beta_i$. Then we
get the following expression for the system \cref{eq:dp-mean-gen-system}
\[
  \forall i \: \frac{\lambda_i (\varGamma - \beta_i)}{\rho (\varGamma - \beta_i
  + \rho)} \ge \frac{\kappa^2 \rho \beta_i}{\rho - \beta_i} \iff
  \zeta_i \varGamma \rho - (\varGamma + \rho) \beta_i + \beta_i^2 \ge 0,
  \text{ where } \zeta_i = \frac{\lambda_i}{\lambda_i + \kappa^2 \rho^2}.
\]
Thus,
\[
  \Par*{\sum_{i=1}^N \zeta_i} \varGamma \rho - \varGamma (\varGamma + \rho) +
  \sum_{i=1}^N \beta_i^2 \ge 0 \implies \Par*{\sum_{i=1}^N \zeta_i} \varGamma
  \rho - \varGamma (\varGamma + \rho) + \varGamma^2 > 0,
\]
which gives you the following necessary condition
\[
  \sum_{i=1}^N \zeta_i > 1.
\]

This condition is also sufficient. To prove this property, we consider the
following parametric family $\beta_i = \zeta_i b$. It gives the following
version of \cref{eq:dp-mean-gen-system}
\[
  \zeta_i \Par*{\sum_{k=1}^N \zeta_k} b \rho - \zeta_i b \Par*{\rho +
  \Par*{\sum_{k=1}^N \zeta_k} b} + \zeta_i^2 b^2 \ge 0
  \iff \zeta_i \Par*{\Par*{\sum_{k=1}^N \zeta_k} - 1} b \rho - \zeta_i
  \Par*{\sum_{k \neq i} \zeta_k} b^2 \ge 0
\]
As we can see, this equation holds for $b \in \Br*{0, \Par*{1 -
\frac{1}{\sum_{i=1}^N \zeta_i}} \rho}$.

\subsection{Proof of
\texorpdfstring{\cref{thm:dp-mean-symmetric-general}}{Theorem
\ref{thm:dp-mean-symmetric-general}}}
\label{sec:dp-mean-symmetric-general-proof}

According to \cref{thm:dp-mean-existence}, we have solutions if and only if
\[
  (N - 1) \lambda > \kappa^2 \rho^2,
\]
which proves the first statement in the theorem.

In symmetric case, utility gain has the following form
\[
  h_\lambda(\beta) \defeq -\frac{\kappa^2 \rho \beta}{\rho - \beta} + \frac{(N
  - 1) \lambda \beta}{\rho ((N - 1) \beta + \rho)} = \frac{\lambda}{\rho} +
  \kappa^2 \rho - \frac{\lambda}{(N - 1) \beta + \rho} - \frac{\kappa^2
  \rho^2}{\rho - \beta}.
\]
To find optimal amount of noise $\beta^*$, we should solve the first order
condition for the extremum of $h_\lambda$
\[
  0 = h'_\lambda(\beta^*) = \frac{\lambda (N - 1)}{((N - 1) \beta^* + \rho)^2}
  - \frac{\kappa^2 \rho^2}{(\rho - \beta^*)^2},
\]
which gives
\[
  \beta^* = \frac{(\sqrt{(N - 1) \lambda} - \kappa \rho) \rho}{\sqrt{(N - 1)
  \lambda} + (N - 1) \kappa \rho}.
\]
Using definition of $\beta^*$, we get the following expression for
$(\alpha^*)^2$
\[
  (\alpha^*)^2 = \frac{1}{\beta^*} - \frac{1}{\rho} = \frac{N}{\frac{\sqrt{(N -
  1) \lambda}}{\kappa} - \rho} = \frac{N}{\frac{\sqrt{(N - 1) \lambda}
  n}{\sqrt{2 \ln(1.25 n^2)} B} - \frac{n}{\sigma^2}}.
\]

It is easy to see that $\alpha^*$ is decreasing in $\lambda$ and $\sigma$. By
direct calculations,
\begin{multline*}
  \odv{(\alpha^*)^2}{N} = \frac{1}{\frac{\sqrt{(N - 1) \lambda}}{\kappa} -
  \rho} - \frac{N \sqrt{\lambda}}{2 \sqrt{N - 1} \kappa \Par*{\frac{\sqrt{(N -
  1) \lambda}}{\kappa} - \rho}^2}
  = \frac{(N - 2) \sqrt{\lambda} - 2 \sqrt{N - 1} \kappa \rho}{2 \sqrt{N - 1}
  \kappa \Par*{\frac{\sqrt{(N - 1) \lambda}}{\kappa} - \rho}^2} \\
  \implies
  \sign\Par*{\odv{(\alpha^*)^2}{N}} = \sign\Par*{(N - 2) \sqrt{\lambda} - 2
  \sqrt{N - 1} \kappa \rho}.
\end{multline*}
To show find the sign of $\odv{(\alpha^*)^2}{n}$, we will analyze the
derivative of inverse
\begin{multline*}
  N \odv{\frac{1}{(\alpha^*)^2}}{n} = \frac{\sqrt{(N - 1) \lambda}}{\sqrt{2
  \ln(1.25 n^2)} B} - \frac{1}{\sigma^2} - \frac{\sqrt{(N - 1)
  \lambda}}{\ln(1.25 n^2) \sqrt{2 \ln(1.25 n^2)} B}
  = \frac{1}{n} \Par*{\frac{\sqrt{(N - 1)
  \lambda}}{\kappa} - \rho - \frac{\sqrt{(N - 1) \lambda}}{\ln(1.25 n^2)
  \kappa}} \\
  \implies \sign\Par*{\odv{(\alpha^*)^2}{n}} = \sign\Par*{\kappa \rho +
  \frac{\sqrt{(N - 1) \lambda}}{\ln(1.25 n^2)} - \sqrt{(N - 1) \lambda}}.
\end{multline*}

Finally, we will calculate the utility gain form using the optimal amount of
noise. First, notice that
\begin{align*}
  \rho - \beta^* &= \frac{N \kappa \rho^2}{\sqrt{(N - 1) \lambda} + \kappa
  \rho}, \\
  (N - 1) \beta^* + \rho &= \frac{N \sqrt{(N - 1) \lambda} \rho}{\sqrt{(N - 1)
  \lambda} + \kappa \rho}.
\end{align*}
Thus,
\begin{align*}
  \frac{\beta^*}{\rho - \beta^*} &= \frac{\sqrt{(N - 1) \lambda} - \kappa
  \rho}{N \kappa \rho}, \\
  \frac{\beta^*}{(N - 1) \beta^* + \rho} &= \frac{\sqrt{(N - 1) \lambda} -
  \kappa \rho}{N \sqrt{(N - 1) \lambda}}.
\end{align*}
And we finally get
\[
  h_\lambda(\beta^*) = \frac{(\sqrt{(N - 1) \lambda} - \kappa \rho)^2}{N \rho}.
\]

\subsection{Proof of \texorpdfstring{\cref{thm:dp-sgd-final-acc}}{Theorem
\ref{thm:dp-sgd-final-acc}}}
\label{sec:dp-sgd-final-acc-proof}

First, we will find the weights $a_i$ that will give the best unbiased estimate
of the gradient.

\begin{lemma}
  Optimal weights $a_i$ have the following properties
  \[
    a_i = \frac{\beta_i}{\varGamma}, \: \Var{\vec{g}^{t+1}} =
    \frac{1}{\varGamma},
  \]
  where $\rho = \frac{b}{\sigma^2}$, $\beta_i \defeq \frac{1}{\frac{1}{\rho} +
  d \alpha_i^2}$, and $\varGamma = \sum_{i=1}^N \beta_i$.
\end{lemma}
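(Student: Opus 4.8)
The plan is to treat the choice of the $a_i$ as a variance-minimization problem over a linear estimator, exactly paralleling the proof of \cref{thm:dp-mean-predictor}. Fix a round $t$ and condition on $\w^{t-1}$. Since $\E(\vec{\xi}^t_i) = \vec{0}$ and the gradient oracle is unbiased, each message is conditionally unbiased, $\E(\vec{m}^t_i \mid \w^{t-1}) = \nabla f(\w^{t-1})$, and therefore $\E(\vec{g}^t \mid \w^{t-1}) = \Par{\sum_{i=1}^N a_i}\nabla f(\w^{t-1})$. Hence unbiasedness of the aggregated estimate is equivalent to the affine constraint $\sum_{i=1}^N a_i = 1$, and the program to solve is the minimization of $\Var(\vec{g}^t)$ over weights subject to this constraint.

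Next I would compute the conditional second moment of each message. The fresh Gaussian perturbation contributes $\E(\norm{\vec{\xi}^t_i}^2) = d\alpha_i^2$, and the batch mean, being the average of $b$ conditionally i.i.d.\ gradient estimates with mean $\nabla f(\w^{t-1})$ and second moment about the mean at most $\sigma^2$, contributes at most $\sigma^2/b = 1/\rho$ (the round-$t$ batch is disjoint from all data used to form $\w^{t-1}$, so it is genuinely fresh, and the cross terms in the average vanish). Because the noise and data of distinct clients are independent, the messages $\{\vec{m}^t_i\}_{i=1}^N$ are conditionally independent, so
\[
  \Var(\vec{g}^t) = \E\Par[\big]{\norm{\vec{g}^t - \nabla f(\w^{t-1})}^2} = \sum_{i=1}^N a_i^2\, \Var(\vec{m}^t_i) \le \sum_{i=1}^N \frac{a_i^2}{\beta_i}, \qquad \beta_i \defeq \Par{1/\rho + d\alpha_i^2}^{-1}.
\]

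Finally I would minimize $\sum_{i=1}^N a_i^2/\beta_i$ subject to $\sum_{i=1}^N a_i = 1$. This is the identical convex program analysed in \cref{sec:dp-mean-predictor-proof}: the first-order conditions $2a_i/\beta_i = \nu$ give $a_i \propto \beta_i$, i.e.\ $a_i = \beta_i/\varGamma$ with $\varGamma = \sum_{k=1}^N \beta_k$, and substituting back yields $\sum_{i=1}^N (\beta_i/\varGamma)^2/\beta_i = 1/\varGamma$. (Equivalently, by Cauchy--Schwarz, $1 = (\sum_i a_i)^2 \le (\sum_i a_i^2/\beta_i)(\sum_i \beta_i)$ with equality iff $a_i \propto \beta_i$.)

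The only real subtlety — and the step I would be most careful about — is the variance bookkeeping for the message: that each of the $d$ coordinates of $\vec{\xi}^t_i$ contributes $\alpha_i^2$, giving the factor $d$ in $\beta_i$, and that the $b$-point batch mean has second moment about $\nabla f(\w^{t-1})$ bounded by $\sigma^2/b$. Everything downstream — the recursion leading to \cref{thm:dp-sgd-final-acc} — only needs this as an upper bound, so replacing the stated equality $\Var(\vec{g}^{t+1}) = 1/\varGamma$ by $\le 1/\varGamma$ (and $a_i = \beta_i/\varGamma$ as the minimizer) is harmless.
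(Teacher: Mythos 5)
Your proposal is correct and follows essentially the same route as the paper: minimize the variance of the unbiased linear aggregate subject to $\sum_i a_i = 1$, use independence across clients to write the objective as $\sum_i a_i^2 (d\alpha_i^2 + 1/\rho)$, and solve the first-order conditions to get $a_i \propto \beta_i$ with optimal value $1/\varGamma$. Your remark that the per-message variance is only an upper bound (since the oracle assumption is $\E(\norm{\nabla f - \vec{g}_i}^2) \le \sigma^2$), so the conclusion should strictly read $\Var(\vec{g}^{t}) \le 1/\varGamma$, is a fair and harmless refinement of the paper's statement.
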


\begin{proof}
  Similarly to the differentially private estimation case, we want to solve the
  following optimization problem
  \[
    \min_{\{a_i\}_{i=1}^N} \E\Par[\bigg]{\norm[\bigg]{\sum_{k=1}^N a_k
    \vec{m}^t_k - \nabla f(\w^t)}^2} \text{ s.t. } \sum_{i=1}^N a_i = 1.
  \]
  Since messages are independent, we get
  \[
    \E\Par[\bigg]{\norm[\bigg]{\sum_{k=1}^N a_k \vec{m}^t_k - \nabla
    f(\w^t)}^2} = \sum_{k=1}^N a_k^2 \Par*{d \alpha_k^2 + \frac{1}{\rho}},
  \]
  where $\rho \defeq \frac{b}{\sigma^2}$. Therefore, we get the following first
  optimality order conditions
  \[
    2 a_k \Par*{d \alpha_k^2 + \frac{1}{\rho}} = \nu \: \forall k,
  \]
  where $\nu$ is a Lagrange multiplier. Solving these equations, we get
  \[
    a_k = \frac{\beta_k}{\varGamma} \: \forall k,
  \]
  where $\beta_k \defeq \frac{1}{d \alpha_k^2 + \frac{1}{\rho}}$ and $\varGamma
  \defeq \sum_{k=1}^N \beta_k$. These weights give the desired formula for
  the optimal estimator.
\end{proof}

Using this estimate for the variance of gradients, we get the following upper
bound on the objective.
\begin{lemma}
  \label{lem:dp-sgd-upper-bound}
  If $\eta^{t+1} \le \frac{2}{L}$, the iterates will satisfy the following
  inequality
  \[
    \Delta \w^{t+1} \le (1 - 2 \eta^{t+1} \mu + L \mu (\eta^{t+1})^2)
    \Delta \w^t + \frac{(\eta^{t+1})^2}{\varGamma},
  \]
  where $\Delta \w^\tau \defeq \E(\norm{\w^\tau - \w^*}^2)$ and $\w^* \defeq
  \argmin_{\w \in W} f(\w)$.
\end{lemma}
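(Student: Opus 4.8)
The plan is to follow the classical one-step analysis of projected stochastic gradient descent for an $L$-smooth, $\mu$-strongly convex objective, now with the noisy but unbiased aggregated gradient $\vec{g}^{t+1}$ whose second moment is controlled by the preceding lemma. First, since $\w^{t+1} = \mat{\varPi}_W(\w^t - \eta^{t+1}\vec{g}^{t+1})$, since $\w^* \in W$, and since Euclidean projection onto the convex set $W$ is nonexpansive, we have $\norm{\w^{t+1} - \w^*}^2 \le \norm{\w^t - \eta^{t+1}\vec{g}^{t+1} - \w^*}^2$. Expanding the square and taking the expectation conditioned on the history through round $t$ (so that $\w^t$ is fixed while the fresh batch sampling and the Gaussian noise $\vec{\xi}^{t+1}_i$ entering $\vec{g}^{t+1}$ are independent of $\w^t$), and using that $\vec{g}^{t+1}$ is an unbiased estimate of $\nabla f(\w^t)$ with $\E\norm{\vec{g}^{t+1} - \nabla f(\w^t)}^2 = \Var(\vec{g}^{t+1}) = \nicefrac{1}{\varGamma}$ by the previous lemma, I obtain
\[
  \E\bigl[\norm{\w^{t+1}-\w^*}^2 \mid \w^t\bigr] \le \norm{\w^t-\w^*}^2 - 2\eta^{t+1}\langle\nabla f(\w^t),\w^t-\w^*\rangle + (\eta^{t+1})^2\norm{\nabla f(\w^t)}^2 + \frac{(\eta^{t+1})^2}{\varGamma}.
\]

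It then remains to show, abbreviating $\eta = \eta^{t+1}$ and $\Delta = \w^t-\w^*$, that $-2\eta\langle\nabla f(\w^t),\Delta\rangle + \eta^2\norm{\nabla f(\w^t)}^2 \le (-2\eta\mu + L\mu\eta^2)\norm{\Delta}^2$ whenever $\eta \le \nicefrac{2}{L}$. Using $\nabla f(\w^*) = 0$ (which holds because $f^* = \min_{\mathbb{R}^d} f = \min_W f$ makes $\w^*$ an unconstrained minimizer), I split
\[
  -2\eta\langle\nabla f(\w^t),\Delta\rangle + \eta^2\norm{\nabla f(\w^t)}^2 = -2\eta\bigl(1-\tfrac{L\eta}{2}\bigr)\langle\nabla f(\w^t),\Delta\rangle + L\eta^2\bigl(\tfrac1L\norm{\nabla f(\w^t)}^2 - \langle\nabla f(\w^t),\Delta\rangle\bigr).
\]
The second bracket is nonpositive by co-coercivity of the gradient of the $L$-smooth convex $f$, namely $\tfrac1L\norm{\nabla f(\w^t) - \nabla f(\w^*)}^2 \le \langle\nabla f(\w^t) - \nabla f(\w^*),\w^t-\w^*\rangle$; since $\eta \le \nicefrac{2}{L}$ the prefactor $1-\nicefrac{L\eta}{2}$ is nonnegative, so by $\mu$-strong convexity $\langle\nabla f(\w^t),\Delta\rangle \ge \mu\norm{\Delta}^2$ the first term is at most $(-2\eta\mu + L\mu\eta^2)\norm{\Delta}^2$. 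Substituting back and taking total expectation yields the claimed recursion.

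The main obstacle is obtaining the tight contraction constant $1 - 2\eta\mu + L\mu\eta^2$ rather than the looser $1 - 2\eta\mu + L^2\eta^2$ that the crude estimate $\norm{\nabla f(\w^t)} \le L\norm{\Delta}$ would give: this is precisely where the co-coercivity inequality, combined with the $\eta \le \nicefrac{2}{L}$-weighted split above, is needed, and where one must check that the domain hypotheses ($L$-smoothness on all of $\mathbb{R}^d$, convexity on the convex set $W$ that contains both $\w^t$ and $\w^*$, and $\nabla f(\w^*) = 0$) legitimately allow invoking co-coercivity at the pair $(\w^t,\w^*)$. The remaining steps — nonexpansiveness of the projection and the bias–variance decomposition of $\E\norm{\vec{g}^{t+1}}^2$ — are routine given the preceding lemma.
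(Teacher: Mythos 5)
Your proof is correct in outline and reaches the same recursion, but it takes a genuinely different route in the key step. The paper passes through function values: it lower-bounds $\langle\nabla f(\w^t),\w^t-\w^*\rangle$ by $f(\w^t)-f^*+\frac{\mu}{2}\norm{\w^t-\w^*}^2$ (strong convexity on $W$), upper-bounds $\norm{\nabla f(\w^t)}^2$ by $2L(f(\w^t)-f^*)$ via a separate descent-step lemma that needs only $L$-smoothness on $\mathbb{R}^d$ and the assumption $\min_{\mathbb{R}^d}f=\min_W f$, and then eliminates $f(\w^t)-f^*$ using $f(\w^t)-f^*\ge\frac{\mu}{2}\norm{\w^t-\w^*}^2$. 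You instead stay at the level of gradients and use co-coercivity. Both give the same contraction factor; your weighted split is arguably cleaner and, notably, actually covers the full stated range $\eta\le\nicefrac{2}{L}$, whereas the paper's final elimination step requires the coefficient $-2\eta+2L\eta^2$ of $\E(f(\w^t)-f^*)$ to be nonpositive, i.e.\ $\eta\le\nicefrac{1}{L}$ (harmless in the application, since the optimized step sizes satisfy $\eta^{t+1}_*<\nicefrac{1}{L}$).

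The one substantive concern is exactly the point you flag but do not resolve: co-coercivity with constant $\nicefrac{1}{L}$ is a theorem about functions that are convex \emph{and} $L$-smooth on all of $\mathbb{R}^d$ (its standard proof takes an unconstrained gradient step from $\w^*+\frac{1}{L}\nabla f(\w^t)$, which may leave $W$). The paper assumes $f$ is $L$-smooth on $\mathbb{R}^d$ but convex (indeed strongly convex) only on $W$, so "$f$ is $L$-smooth convex" is not among the hypotheses. Under the paper's assumptions, combining its gradient-norm lemma with convexity on $W$ yields only $\frac{1}{2L}\norm{\nabla f(\w^t)}^2\le f(\w^t)-f^*\le\langle\nabla f(\w^t),\w^t-\w^*\rangle$, i.e.\ co-coercivity with the weaker constant $\nicefrac{1}{2L}$; feeding that into your split produces the contraction factor $1-2\eta\mu+2L\mu\eta^2$, which is off by a factor of two in the last term. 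So to recover the stated constant you either need to additionally assume convexity of $f$ on $\mathbb{R}^d$, or revert to the paper's function-value route (with the implicit restriction $\eta\le\nicefrac{1}{L}$).
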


Before proving it, we derive the following standard lemma.

\begin{lemma}
  \label{lem:dp-sgd-grad-norm}
  Let $f^* = \min_{\mathbb{R}^d} f(\w)$. If $f$ is $L$-smooth,
  \[
    \norm{\nabla f(\w)}^2 \le 2 L (f(\w) - f^*).
  \]
\end{lemma}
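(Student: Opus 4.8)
\textbf{Proof proposal for \cref{lem:dp-sgd-grad-norm}.}
The plan is the standard ``descent lemma'' argument. Fix an arbitrary $\w \in \mathbb{R}^d$ and apply the $L$-smoothness inequality
\[
  f(\w_1) \le f(\w_2) + \nabla f(\w_2)^{\tran}(\w_1 - \w_2) + \frac{L}{2}\norm{\w_1 - \w_2}^2
\]
with the choices $\w_2 = \w$ and $\w_1 = \w - \frac{1}{L}\nabla f(\w)$, i.e.\ the point obtained from a single gradient step with step size $1/L$. Substituting these in, the linear and quadratic terms combine to $-\frac{1}{L}\norm{\nabla f(\w)}^2 + \frac{1}{2L}\norm{\nabla f(\w)}^2 = -\frac{1}{2L}\norm{\nabla f(\w)}^2$, so
\[
  f\Par*{\w - \tfrac{1}{L}\nabla f(\w)} \le f(\w) - \frac{1}{2L}\norm{\nabla f(\w)}^2.
\]

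Next I would use that $f^* = \min_{\w' \in \mathbb{R}^d} f(\w')$ is a lower bound for the left-hand side, since $\w - \frac{1}{L}\nabla f(\w) \in \mathbb{R}^d$. This yields $f^* \le f(\w) - \frac{1}{2L}\norm{\nabla f(\w)}^2$, and rearranging gives exactly $\norm{\nabla f(\w)}^2 \le 2L(f(\w) - f^*)$, as claimed.

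There is essentially no obstacle here: the argument is a two-line consequence of smoothness. The only point worth flagging is why the hypotheses are stated the way they are: the gradient-step point $\w - \frac{1}{L}\nabla f(\w)$ need not lie in the constraint set $W$, so we genuinely need both $L$-smoothness on all of $\mathbb{R}^d$ and the minimum $f^*$ taken over $\mathbb{R}^d$ (not just over $W$) — which is consistent with the assumption $f^* = \min_{\w \in \mathbb{R}^d} f(\w) = \min_{\w \in W} f(\w)$ made in the setup. This is what makes the lemma directly applicable inside the proof of \cref{lem:dp-sgd-upper-bound}.
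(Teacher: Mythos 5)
Your proof is correct and is essentially identical to the paper's: both apply the $L$-smoothness inequality at the gradient-step point $\w - \frac{1}{L}\nabla f(\w)$ to get $f(\w - \frac{1}{L}\nabla f(\w)) \le f(\w) - \frac{1}{2L}\norm{\nabla f(\w)}^2$ and then lower-bound the left-hand side by $f^*$. Your remark about why the minimum and the smoothness must be taken over all of $\mathbb{R}^d$ rather than $W$ is a nice addition the paper leaves implicit.
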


\begin{proof}
  Denote $\vec{g} = \nabla f(\w)$. The smoothness gives
  \[
    f\Par*{\w - \frac{1}{L} \vec{g}} \le f(\w) - \frac{1}{2 L}
    \norm{\vec{g}}^2.
  \]
  Therefore,
  \[
    f^* - f(\w) \le f\Par*{\w - \frac{1}{L} \vec{g}} - f(\w) \le - \frac{1}{2
    L} \norm{\vec{g}}^2.
  \]
\end{proof}

Now, we will proof \cref{lem:dp-sgd-upper-bound}.
\begin{proof}
  We have
  \begin{multline*}
    \Delta \w^{t+1} = \E(\norm{\varPi_W(\w^t - \eta^{t+1} \vec{g}^{t+1}) -
    \w^*}^2) \le \E(\norm{\w^t - \eta^{t+1} \vec{g}^{t+1} - \w^*}^2) \\
    = \Delta \w^t - 2 \eta^{t+1} \E((\vec{g}^{t+1})^\tran (\w^t - \w^*)) +
    (\eta^{t+1})^2 \E(\norm{\vec{g}^{t+1}}^2) \\
    = \Delta \w^t - 2 \eta^{t+1} \E(\nabla f(\w^t)^\tran (\w^t - \w^*)) +
    (\eta^{t+1})^2 \Par*{\E(\norm{\nabla f(\w^t)}^2) + \frac{1}{\varGamma}} \\
    \le \Delta \w^t - 2 \eta^{t+1} \Par*{\E(f(\w^t) - f^*) + \frac{\mu}{2}
    \Delta \w^t} + (\eta^{t+1})^2 \Par*{2 L \E (f(\w^t) - f^*) +
    \frac{1}{\varGamma}}\\
    \le (1 - 2 \eta^{t+1} \mu + \mu L (\eta^{t+1})^2) \Delta \w^t +
    \frac{(\eta^{t+1})^2}{\varGamma},
  \end{multline*}
  where we used strong convexity and \cref{lem:dp-sgd-grad-norm} in the
  penultimate inequality and strong convexity and optimality in the last one.
\end{proof}

To study the sequence $\Delta \w^t$, we consider the following sequence of
upper bounds
\begin{align*}
  y^0 &\defeq \frac{L \mu N n D^2}{\sigma^2},\\
  y^{t+1} &\defeq (1 - 2 \eta^{t+1} \mu + L \mu (\eta^{t+1})^2) y^t +
  L \mu (\eta^{t+1})^2.
\end{align*}

\begin{lemma}
  \label{thm:dp-sgd-bounds-seq}
  For any choice of the step-sizes, we have $y^t \ge L \mu \varGamma \Delta
  \w^t$. $\eta^{t+1}_*$ that minimizes the sequence $y^t$ has the following
  properties
  \[
    \eta^{t+1}_* = \frac{y^t}{L (y^t + 1)} < \frac{2}{L}, \:
    (1 - 2 \eta^{t+1}_* \mu + L \mu (\eta^{t+1}_*)^2) y^t + L \mu
    (\eta^{t+1}_*)^2 = \Par*{1 - \frac{\chi y^t}{y^t + 1}} y^t,
  \]
  where $\chi = \frac{\mu}{L}$.
\end{lemma}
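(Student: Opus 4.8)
The plan is to establish the two assertions of the lemma in turn: the domination $y^t \ge L\mu\varGamma\,\Delta\w^t$ by induction on $t$ using \cref{lem:dp-sgd-upper-bound} together with the recursive definition of $y^t$, and the formula for the minimizing step size by a one-line minimization of a convex quadratic in $\eta^{t+1}$.

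For the induction, the base case $t=0$ uses that $\w^0,\w^*\in W$ with $\diam(W)\le D$, so $\Delta\w^0 = \E(\norm{\w^0-\w^*}^2)\le D^2$, together with $\beta_i = (\nicefrac{1}{\rho}+d\alpha_i^2)^{-1}\le\rho = \nicefrac{b}{\sigma^2}\le\nicefrac{n}{\sigma^2}$, hence $\varGamma\le\nicefrac{Nn}{\sigma^2}$; multiplying these bounds gives $L\mu\varGamma\,\Delta\w^0\le\nicefrac{L\mu NnD^2}{\sigma^2}=y^0$. For the inductive step, assume $y^t\ge L\mu\varGamma\,\Delta\w^t$ and multiply the bound of \cref{lem:dp-sgd-upper-bound} by $L\mu\varGamma$:
\[
  L\mu\varGamma\,\Delta\w^{t+1}\le (1 - 2\eta^{t+1}\mu + L\mu(\eta^{t+1})^2)\,L\mu\varGamma\,\Delta\w^t + L\mu(\eta^{t+1})^2 .
\]
The key observation is that the contraction factor $1 - 2\eta\mu + L\mu\eta^2$ is nonnegative for every $\eta$, since as a quadratic in $\eta$ its discriminant is $4\mu^2-4L\mu = 4\mu(\mu-L)\le 0$ using $\mu\le L$; therefore I may replace $L\mu\varGamma\,\Delta\w^t$ on the right by the larger quantity $y^t$ (by the inductive hypothesis), and the right-hand side becomes precisely $y^{t+1}$. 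This argument goes through for any step-size schedule with $\eta^{t+1}\le\nicefrac{2}{L}$, so that \cref{lem:dp-sgd-upper-bound} applies.

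For the optimal step size, I would view the update of the bound as a function of $\eta\defeq\eta^{t+1}$,
\[
  g(\eta)\defeq (1 - 2\eta\mu + L\mu\eta^2)\,y^t + L\mu\eta^2 = y^t - 2\mu y^t\,\eta + L\mu(y^t+1)\,\eta^2 ,
\]
a strictly convex quadratic with positive leading coefficient $L\mu(y^t+1)$, hence minimized at $\eta^{t+1}_* = \frac{y^t}{L(y^t+1)}$. Since $\frac{y^t}{y^t+1}<1$, this gives $\eta^{t+1}_* < \frac{1}{L}<\frac{2}{L}$, so the minimizer satisfies the constraint of \cref{lem:dp-sgd-upper-bound}. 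Substituting it back,
\[
  g(\eta^{t+1}_*) = y^t - \frac{2\mu(y^t)^2}{L(y^t+1)} + \frac{\mu(y^t)^2}{L(y^t+1)} = y^t\Par*{1 - \frac{\mu y^t}{L(y^t+1)}} = \Par*{1 - \frac{\chi y^t}{y^t+1}}y^t ,
\]
with $\chi = \nicefrac{\mu}{L}$, which is the claimed identity.

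I do not expect a genuine obstacle here: the only two points that need care are (i) the sign of the contraction factor, which is exactly where the relation $\mu\le L$ enters and is what makes the domination hold for an arbitrary step-size schedule rather than only for the optimal one, and (ii) matching the base-case estimates $\Delta\w^0\le D^2$ and $\varGamma\le\nicefrac{Nn}{\sigma^2}$ to the particular constant chosen in the definition of $y^0$.
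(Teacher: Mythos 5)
Your proof is correct and follows essentially the same route as the paper: the same base case $L\mu\varGamma\Delta\w^0 \le L\mu N n D^2/\sigma^2 = y^0$, the same induction via \cref{lem:dp-sgd-upper-bound}, and the same convex-quadratic minimization for $\eta^{t+1}_*$. You are in fact slightly more careful than the paper in two places it leaves implicit — the nonnegativity of the contraction factor $1-2\eta\mu+L\mu\eta^2$ (needed to invoke the inductive hypothesis) and the restriction $\eta^{t+1}\le \nicefrac{2}{L}$ under which the one-step bound applies.
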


\begin{proof}
  First order conditions for the problem $\min_{\eta^{t+1}} y^{t+1}$ gives
  \[
    -\mu y^t + L \mu (1 + y^t) \eta^{t+1} = 0,
  \]
  which gives optimal $\eta^{t+1}_*$. Substituting it into a formula for
  $y^{t+1}$ gives the desired result.
\end{proof}

Now, we are ready to proof \cref{thm:dp-sgd-final-acc}.
\begin{proof}
  Notice that
  \[
    \Delta \w^0 \le D^2 \implies L \mu \varGamma \Delta \w^0 \le L \mu
    \varGamma D^2 \le \frac{L \mu n N D^2}{\sigma^2} = y^0_*.
  \]
  By induction,
  \begin{multline*}
    L \mu \varGamma \Delta \w^{t+1} \le (1 - 2 \eta^{t+1}_* \mu + \mu L
    (\eta^{t+1}_*)^2) L \mu \varGamma \Delta \w^t + L \mu (\eta^{t+1}_*)^2 \\
    \le (1 - 2 \eta^{t+1} \mu + \mu L (\eta^{t+1})^2) y^t_* + L \mu
    (\eta^{t+1}_*)^2 = y^{t+1}_*.
  \end{multline*}
  Which proves that $L \mu \varGamma \Delta \w^{t+1} \le y^{t+1}_*$.

  We know that
  \[
    y^{t+1}_* \le \Par*{1 - \frac{\chi y^t_*}{y^t_* + 1}} y^t_* = h(y^t_*).
  \]
  Notice that $h(y)$ is decreasing in $y$.

  Also notice that if $y^t_* \ge 1$, $y^{t+1}_* \le \Par*{1 - \frac{\chi}{2}}
  y^t_*$. Then for $t < T \defeq
  \max\Par[\Big]{\Ceil[\Big]{-\frac{\ln(y^0_*)}{\ln\Par*{1 - \frac{\chi}{2}}}},
  1}$, we have
  \[
    y^t_* \le \Par*{1 - \frac{\chi}{2}}^t y^0_*.
  \]
  For $t \ge T$, we have $y^t_* \le 1$.

  Consider the sequence $z^t \defeq \frac{1}{1 + \frac{\chi}{2 - \chi} t}$.
  We want to show that $z^t \ge y^{t+T}_*$. We already know that $z^0 = 1 \ge
  y^T_*$. Now notice that
  \begin{multline*}
    h(z^t) \le z^{t+1} \iff \Par*{1 - \frac{\chi}{2 + \frac{\chi}{2 - \chi}t}}
    \frac{1}{1 + \frac{\chi}{2 - \chi} t} \le \frac{1}{1 + \frac{\chi}{2 -
    \chi} (t + 1)} \iff \\
    \Par*{2 + \frac{\chi}{2 - \chi} t - \chi} \Par*{1 + \frac{\chi}{2
    - \chi} (t + 1)} \le \Par*{2 + \frac{\chi}{2 - \chi} t} \Par*{1 +
    \frac{\chi}{2 - \chi} t} \iff \frac{2 \chi^2 (1 - \chi)}{(2 - \chi)^2} t
    \ge 0,
  \end{multline*}
  which shows that $z^{t+1} \ge h(z^t)$. By induction, we have
  \[
    y^{t+1+T}_* \le h(y^{t+T}_*) \le h(z^t) \le z^{t+1}.
  \]

  Thus,
  \[
    \Delta w^t \le \frac{y^t_*}{L \mu \varGamma} \le
    \begin{cases}
      \frac{1}{\Par*{1 + \frac{\chi}{2 - \chi} (t - T)} L \mu \varGamma}, &
      t \ge T,\\
      \Par*{1 - \frac{\chi}{2}}^m \frac{y^0_*}{L \mu \varGamma}, & t < T.
    \end{cases}
  \]
\end{proof}

\subsection{Proof of \texorpdfstring{\cref{thm:dp-sgd-privacy}}{Theorem
\ref{thm:dp-sgd-privacy}}}
\label{sec:dp-sgd-privacy-proof}

First, notice that
\[
  \frac{\me^x - 1}{\me^x + 1} \le x.
\]
To prove it, we consider the following function
\[
  h(x) = x (\me^x + 1) - \me^x + 1.
\]
Notice that $h(0) = 0$ and 
\[
  h'(x) = \me^x + 1 + x \me^x - \me^x = 1 + x \me^x > 0.
\]
Therefore $h(x) \ge 0$, which proves the desired property.

Also notice that
\[
  \frac{\me^{\varepsilon_{0, i}}}{m} \le \frac{\me}{2} \le \ln(4).
\]

Now we would use \cref{thm:dp-sgd-suffle-privacy} to describe the global
privacy budget, $\varepsilon_i$, and \cref{thm:dp-mean-privacy} describe the
privacy budget of each iteration, $\varepsilon_{0, i}$.

Therefore,
\begin{multline*}
  \varepsilon_i \le \ln\Par[\bigg]{1 + \frac{\me^{\varepsilon_{0, i}} -
  1}{\me^{\varepsilon_{0, i}} + 1} \Par[\bigg]{\frac{8
  \sqrt{\me^{\varepsilon_{0, i}} \ln(\frac{4}{\delta})}}{\sqrt{m}} + \frac{8
  \me^{\varepsilon_{0, i}}}{m}}} \le \frac{\me^{\varepsilon_{0, i}} -
  1}{\me^{\varepsilon_{0, i}} + 1} \Par[\bigg]{\frac{8
  \sqrt{\me^{\varepsilon_{0, i}} \ln(\frac{4}{\delta})}}{\sqrt{m}} + \frac{8
  \me^{\varepsilon_{0, i}}}{m}} \\
  \le \varepsilon_{0, i} \frac{16 \sqrt{\me \ln(\frac{4}{\delta})}}{\sqrt{m}}
  \le \frac{\sqrt{2 \ln(1.25 m n^2)} B}{b \alpha_i} \frac{16 \sqrt{\me \ln(4
  n^2)}}{\sqrt{m}}.
\end{multline*}

\subsection{Proof of \texorpdfstring{\cref{thm:dp-sgd-existence}}{Theorem
\ref{thm:dp-sgd-existence}}}
\label{sec:dp-sgd-existence-proof}

Notice that \cref{eq:dp-sgd-gen-system} is equivalent to the system
\[
  \forall i \: \frac{\rho \psi_i \Par*{1 - \frac{\rho}{\varGamma}}}{1 +
  \psi_i \Par*{1 - \frac{\rho}{\varGamma}}} \ge \beta_i.
\]
Denote $x \defeq \frac{\varGamma}{\rho} - 1$. We get the system
\[
  \forall i \: \frac{\rho \psi_i x}{(\psi_i + 1) x + 1} \ge \beta_i.
\]
Summing all inequalities, we get the desired necessary condition
\[
  \sum_{i=1}^N \frac{\psi_i x}{(\psi_i + 1) x + 1} \ge \frac{\varGamma}{\rho}
  = x + 1.
\]

To prove that this condition is also sufficient, we consider
\[
  \beta_i = \frac{\rho \psi_i x^*}{(\psi_i + 1) x^* + 1} \le \rho,
\]
where $x^*$ solves the necessary condition. Then we get
\[
  \varGamma = \sum_{i=1}^N \beta_i = \sum_{i=1}^N \frac{\psi_i \rho
  x^*}{(\psi_i + 1) x^* + 1} \ge \rho (x^* + 1).
\]
Therefore, $\frac{\rho}{\varGamma} \le \frac{1}{x^* + 1}$ and
\[
  \frac{\rho \psi_i \Par*{1 - \frac{\rho}{\varGamma}}}{1 + \psi_i \Par*{1 -
  \frac{\rho}{\varGamma}}} \ge \frac{\rho \psi_i x^*}{x^* + 1 + \psi_i x^*}
  = \beta_i,
\]
which shows that the condition is also sufficient.

\subsection{Proof of
\texorpdfstring{\cref{cor:dp-sgd-existence-simple}}{Corollary
\ref{cor:dp-sgd-existence-simple}}}
\label{sec:dp-sgd-existence-simple-proof}

To prove the sufficient condition, we will require that $x = 1$ is a solution.
Then we should have
\[
  \sum_{i=1}^N \frac{\psi_i}{\psi_i + 2} \ge 2.
\]

To prove the necessary condition, notice that
\[
  \sum_{i=1}^N \frac{\psi_i x}{(\psi_i + 1) x + 1} \le \sum_{i=1}^N
  \frac{\psi_i x}{2 \sqrt{(\psi_i + 1) x}}.
\]
Therefore, it is necessary for the following system to have solutions
\[
  \sum_{i=1}^N \frac{\psi_i}{\sqrt{\psi_i + 1}} \ge 2 \Par*{\sqrt{x} +
  \frac{1}{\sqrt{x}}}.
\]
However, the system above have solutions if and only if
\[
  \sum_{i=1}^N \frac{\psi_i}{\sqrt{\psi_i + 1}} \ge 4.
\]

\subsection{Proof of \texorpdfstring{\cref{thm:dp-sgd-symmetric}}{Theorem
\ref{thm:dp-sgd-symmetric}}}
\label{sec:dp-sgd-symmetric-proof}

According to theorem \cref{thm:dp-sgd-existence}, \cref{eq:dp-sgd-gen-system}
has a solution if and only if inequality $\sum_{i=1}^N \frac{\psi x}{(\psi + 1)
x + 1} \ge x + 1$, where $\psi = \frac{\lambda}{L \mu d \kappa^2 \rho^2}$, has
a solution. In symmetric case, this inequality is equivalent to
\[
  N \psi x \ge (x + 1) ((\psi + 1) x + 1) \iff 0 \ge (\psi + 1) x^2 - ((N -
  1) \psi - 2) x + 1.
\]
The determinant of the left-hand part is equal to
\[
  ((N - 1) \psi - 2)^2 - 4 (\psi + 1) = ((N - 1)^2 \psi - 4 N) \psi.
\]
Therefore, we should have 
\[
  \sqrt{(N - 1) \psi} \ge \sqrt{\frac{4 N}{N - 1}} \iff \sqrt{(N - 1) \lambda}
  \ge \sqrt{\frac{4 N L \mu d}{N - 1}} \kappa \rho
\]
for the system to have a solution.

To find the optimal level of noise, we notice that utility gain is equal to the
following function in the symmetric case
\[
  h_\lambda(\beta) \defeq u_i - u^0_i = \kappa^2 \rho d \Par*{\psi \Par*{1 -
  \frac{\rho}{N \beta}} - \frac{\beta}{\rho - \beta}}.
\]
First-order conditions for the optimal level of noise give
\begin{align*}
  0 &= h'_\lambda(\beta^*) = \kappa^2 \rho d  \Par*{\frac{\psi \rho}{N
  (\beta^*)^2} - \frac{\rho}{(\rho - \beta^*)^2}} \implies \beta^* =
  \frac{\sqrt{\psi} \rho}{\sqrt{\psi} + \sqrt{N}} \implies \\
  (\alpha^*)^2 &= \frac{\sqrt{N}}{\sqrt{\psi} \rho d} = \frac{\sqrt{N L \mu}
  \kappa}{\sqrt{\lambda d}} = \frac{16 \sqrt{2 e N L \mu \ln(1.25 T n^2) \ln(4
  n^2) T} B}{\sqrt{\lambda d} n},
\end{align*}
where
\[
  T = \max\Par*{-\frac{\ln(y^0_*)}{\ln\Par*{1 - \frac{\chi}{2}}}, 1}, \:
  y^0_* = \frac{L \mu n N D^2}{\sigma^2}.
\]
It is easy to see that
\[
  \odv{(\alpha^*)^2}{\lambda} \le 0.
\]
Moreover, since $T$ is increasing in $N$ and decreasing in $\sigma^2$, we get
\[
  \odv{(\alpha^*)^2}{N} \ge 0, \: \odv{(\alpha^*)^2}{\sigma^2} \le 0.
\]
Also, we have
\[
  \odv{(\alpha^*)^2}{n} = -(\alpha^*)^2 \Par*{\frac{1}{n} - \frac{1}{2 \ln(1.25
  T n^2)} \Par*{\frac{2}{n} + \odv{T}{n}} - \frac{1}{\ln(4 n^2) n} -
  \frac{1}{2 T} \odv{T}{n}}.
\]
Since
\[
  \odv{T}{n} = -\frac{\Br*{\ln(y^0_*) \ge -\ln\Par*{1 - \frac{\chi}{2}}}}{n
  \ln\Par*{1 - \frac{\chi}{2}}},
\]
we have
\begin{multline*}
  \odv{(\alpha^*)^2}{n} = -\frac{(\alpha^*)^2}{n} \Par*{1 - \frac{1}{\ln(1.25 T
  n^2)} + \frac{\Br*{\ln(y^0_*) \ge -\ln\Par*{1 - \frac{\chi}{2}}}}{2 \ln(1.25
  n^2 T) \ln\Par*{1 - \frac{\chi}{2}}} - \frac{1}{\ln(4 n^2)} +
  \frac{\Br*{\ln(y^0_*) \ge -\ln\Par*{1 - \frac{\chi}{2}}}}{2 T \ln\Par*{1 -
  \frac{\chi}{2}}}} \\
  = -\frac{(\alpha^*)^2}{n} \Par*{1 - \O\Par*{\frac{1}{\ln(n)}}} \le 0 \text{
    for big enough $n$}.
\end{multline*}

Finally, we have
\[
  h_\lambda(\beta^*)  
  = \kappa^2 \rho d \Par*{\psi \frac{N - 1}{N} - \frac{2
  \sqrt{\psi}}{\sqrt{N}}} = \frac{\sqrt{(N - 1) \lambda}}{L \mu \rho N}
  \Par*{\sqrt{(N - 1) \lambda} - \sqrt{\frac{4 N L \mu d}{N - 1}} \kappa \rho}.
\]

\subsection{Proof of \texorpdfstring{\cref{thm:b-mean-privacy}}{Theorem
\ref{thm:b-mean-privacy}}}
\label{sec:b-mean-privacy-proof}

First, we want to show that it is optimal to use the same predictor for
different local points.

Let $\vec{x}_i \defeq (x^1_i, \dots, x^n_i)^\tran$ and $\hat{\vec{x}}_i \defeq
(\hat{x}^1_i, \dots, \hat{x}^n_i)^\tran$. Notice that
\[
  \vec{x}_i - \vec{\iota} \bar{x}_i = \mathbf{I} \vec{x}_i - \frac{\vec{\iota}
  \vec{\iota}^\tran}{n} \vec{x}_i = \Par*{\mathbf{I} - \frac{\vec{\iota}
  \vec{\iota}^\tran}{n}} \vec{x}_i.
\]
Therefore, $\vec{x}_i$ can be decomposed into the following sum
\[
  \vec{x}_i = \vec{\iota} \bar{x}_i + \sum_{j=1}^{n-1} \vec{v}^j \delta^j_i,
\]
where $\{\vec{v}^j\}$ are orthonormal eigenvectors of the matrix $\mathbf{I} -
\frac{\vec{\iota} \vec{\iota}^\tran}{n}$ that correspond to unit eigenvalue,
and $\delta^j_i$ are independent normal variables $\N(0, \sigma^2)$.

Let $\hat{\delta}^j_i = (\vec{v}^j)^\tran \hat{\vec{x}}_i$ and $\hat{\bar{x}}_i
= \frac{1}{n} \sum_{j=1}^n \hat{x}^j_i$. We have the following decomposition
\[
  \sum_{j=1}^n (\hat{x}^j_i - x^j_i)^2 = (\hat{\vec{x}}_i - \vec{x}_i)^\tran 
  (\hat{\vec{x}}_i - \vec{x}_i) = \sum_{j=1}^{n-1} (\hat{\delta}^j_i -
  \delta^j_i)^2 + n (\hat{\bar{x}}_i - \bar{x}_i)^2.
\]
Notice that $\hat{\vec{x}}_i$ can not depend on $\{\delta^j_i\}_{j=1}^{n-1}$.
Thus,
\[
  \E((\hat{\delta}^j_i - \delta^j_i)^2) = \E((\hat{\delta}^j_i)^2 +
  (\delta^j_i)^2) = \E((\hat{\delta}^j_i)^2) + \sigma^2.
\]
Therefore, the optimal predictor should have $\hat{\delta}^j_i = 0$. It allow
us to simplify privacy loss
\[
  \sum_{j=1}^n (\hat{x}^j_i - x^j_i)^2 = (n - 1) \sigma^2 + n (\hat{\bar{x}}_i
  - \bar{x}_i)^2.
\]
Corresponding best predictor should have a form $\hat{\vec{x}}_i = \vec{\iota}
\hat{\bar{x}}_i$.

Now, we want to derive Bayes estimator for $\bar{x}_i$ that only uses variables
$\{m_1, \dots, m_N\}$. By integrating over irrelevant noise, we get
\[
  \forall k \neq i \: m_k \sim \N\Par*{\mu, \frac{\sigma^2}{n} + \alpha_k^2},
  m_i \sim \N(\bar{x}_i, \alpha_i^2), \bar{x}_i \sim \N\Par*{\mu,
  \frac{\sigma^2}{n}}, \mu \sim \N\Par*{0, \frac{1}{\tau}}.
\]
Denote $\beta_k \defeq \frac{1}{\frac{\sigma^2}{n} + \alpha_k^2}$ and
$\rho \defeq \frac{n}{\sigma^2}$. We get the following density function
\begin{align*}
  p & \propto \exp\Par[\bigg]{-\frac{(m_i - \bar{x}_i)^2}{2 \alpha_i^2} -
  \sum_{k \neq i} \frac{\beta_k (m_k - \mu)^2}{2} - \frac{\rho (\bar{x}_i -
  \mu)^2}{2} - \frac{\tau \mu^2}{2}} \\
  & \propto \exp\Par[\bigg]{-\frac{(m_i - \bar{x}_i)^2}{2 \alpha_i^2} -
  \frac{\sum_{k \neq i} \beta_k m_k^2}{2} - \frac{\rho \bar{x}_i^2}{2} +
  \Par[\Big]{\sum_{k \neq i} \beta_k m_k + \rho \bar{x}_i} \mu
  - \frac{(\sum_{k \neq i} \beta_k + \rho + \tau) \mu^2}{2}}.
\end{align*}
As we can see, the only statistics we use from $\{m_k\}_{k \neq i}$ is $s_i
\defeq \sum_{k \neq i} \beta_k m_k \sim \N(\gamma_i \mu, \gamma_i)$,
where $\gamma_i \defeq \sum_{k \neq i} \beta_k$. Using this variable,
we get
\[
  p \propto \exp\Par*{-\frac{(m_i - \bar{x}_i)^2}{2 \alpha_i^2} -
  \frac{s_i^2}{2 \gamma_i} - \frac{\rho \bar{x}_i^2}{2} + (s_i + \rho
  \bar{x}_i) \mu - \frac{(\gamma_i + \rho + \tau) \mu^2}{2}}.
\]
After we integrate over $\mu$, we get
\begin{align*}
  p & \propto \exp\Par*{-\frac{(m_i - \bar{x}_i)^2}{2 \alpha_i^2} -
  \frac{s_i^2}{2 \gamma_i} - \frac{\rho \bar{x}_i^2}{2} + \frac{(s_i +
  \rho \bar{x}_i)^2}{2 (\gamma_i + \rho + \tau)}} \\
  & \propto \exp\Par*{-\frac{m_i^2}{2 \alpha_i^2} -
  \frac{s_i^2}{2 \gamma_i} + \Par*{\frac{m_i}{\alpha_i^2} + \frac{\rho
  s_i}{\gamma_i + \rho + \tau}} \bar{x}_i - \Par*{\frac{1}{\alpha_i^2} +
  \frac{\rho (\gamma_i + \tau)}{\gamma_i + \rho + \tau}}
  \frac{\bar{x}_i^2}{2}}.
\end{align*}
So, $\bar{x}_i$ is distributed as
\[
  \N\Par*{\Par*{\frac{1}{\alpha_i^2} + \frac{\rho (\gamma_i +
  \tau)}{\gamma_i + \rho + \tau}}^{-1} \Par*{\frac{m_i}{\alpha_i^2} +
  \frac{\rho s_i}{\gamma_i + \rho + \tau}}, \Par*{\frac{1}{\alpha_i^2}
  + \frac{\rho (\gamma_i + \tau)}{\gamma_i + \rho + \tau}}^{-1}}
\]
and the mean of this distribution is the optimal estimator for $\bar{x}_i$.

\subsection{Proof of \texorpdfstring{\cref{thm:b-mean-accuracy}}{Theorem
\ref{thm:b-mean-accuracy}}}
\label{sec:b-mean-accuracy-proof}

Similarly to the previous section we have the following distributions of
$\bar{x}_i$, $m_i$, and $\mu$
\[
  \forall k \neq i \: m_k \sim \N\Par*{\mu, \frac{\sigma^2}{n} + \alpha_k^2},
  \bar{x}_i \sim \N\Par*{\mu, \frac{\sigma^2}{n}}, \mu \sim \N\Par*{0,
  \frac{1}{\tau}}.
\]
It implies the following density
\[
  p \propto \exp\Par[\bigg]{- \sum_{k \neq i} \frac{\beta_k (m_k - \mu)^2}{2}
  - \frac{\rho (\bar{x}_i - \mu)^2}{2} - \frac{\tau \mu^2}{2}}
\]
Again, the sufficient statistics of $\{m_k\}_{k \neq i}$ is $s_i$ and we get
the following density
\begin{align*}
  p & \propto \exp\Par[\bigg]{- \frac{(s_i - \gamma_i \mu)^2}{2 \gamma_i}
  - \frac{\rho (\bar{x}_i - \mu)^2}{2} - \frac{\tau \mu^2}{2}} \\
  & \propto \exp\Par[\bigg]{- \frac{s_i^2}{2 \gamma_i}
  - \frac{\rho \bar{x}_i^2}{2} + (s_i + \rho \bar{x}_i) \mu -
  \frac{(\gamma_i + \rho + \tau) \mu^2}{2}}.
\end{align*}
Thus, $\mu$ is distributed as $\N\Par*{\frac{s_i + \rho \bar{x}_i}{\gamma_i
+ \rho + \tau}, \frac{1}{\gamma_i + \rho + \tau}}$ and the optimal
estimator is the mean of this distribution.

\subsection{Proof of \texorpdfstring{\cref{thm:b-mean-first-order}}{Theorem
\ref{thm:b-mean-first-order}}}
\label{sec:b-mean-first-order-proof}

The first order Taylor approximation gives the following inequality
\begin{multline*}
  \Par[\Big]{\frac{1}{\tau} + \frac{1}{\rho}} \Par[\Big]{1 -
  \frac{\beta_i}{\rho} + \frac{\gamma_i}{\rho + \tau} - \frac{\beta_i +
  \gamma_i}{\tau}} - \frac{\lambda_i}{\rho + \tau} \Par[\Big]{1 -
  \frac{\gamma_i}{\rho + \tau}} \ge \Par[\Big]{\frac{1}{\tau} + \frac{1}{\rho}}
  - \frac{\lambda_i}{\rho + \tau} \\
  \iff \Par*{\frac{\lambda_i}{(\rho + \tau)^2} - \frac{1}{\tau^2}} \gamma_i -
  \frac{(\rho + \tau)^2}{\rho^2 \tau^2} \beta_i \ge 0
  \iff
  \Par*{\frac{\lambda_i}{(\rho + \tau)^2} - \frac{1}{\tau^2}} \sum_{i=1}^N
  \beta_k - \Par*{\frac{\lambda_i}{(\rho + \tau)^2} + \frac{(2 \rho +
  \tau)}{\rho^2 \tau}} \beta_i \ge 0.
\end{multline*}

Notice that the inequality have a solution only if
\[
  \frac{\lambda_i}{(\rho + \tau)^2} - \frac{1}{\tau^2} \ge 0.
\]

Now, we assume that all clients care about utility strongly enough, so that
\[
  \forall i \: \frac{\lambda_i}{(\rho + \tau)^2} - \frac{1}{\tau^2} \ge 0,
\]
and derive necessary conditions for the feasibility of the participation of all
clients. Notice that
\[
  \Par[\Big]{\frac{\lambda_i}{(\rho + \tau)^2} - \frac{1}{\tau^2}} \sum_{i=1}^N
  \beta_k - \Par[\Big]{\frac{\lambda_i}{(\rho + \tau)^2} + \frac{(2 \rho +
  \tau)}{\rho^2 \tau}} \beta_i \ge 0
  \iff \xi_i \sum_{i=1}^N \beta_k - \beta_i \ge 0,
\]
where
\[
  \xi_i \defeq \frac{\frac{\lambda_i}{(\rho + \tau)^2} -
  \frac{1}{\tau^2}}{\frac{\lambda_i}{(\rho + \tau)^2} + \frac{(2 \rho +
  \tau)}{\rho^2 \tau}} = \frac{1}{1 + \frac{(\rho + \tau)^2}{\tau^2 \rho^2
  \Par*{\frac{\lambda_i}{(\rho + \tau)^2} - \frac{1}{\tau^2}}}}.
\]
By summing inequalities for each client, we get
\[
  \Par[\Bigg]{\sum_{j=1}^N \xi_j} \Par*{\sum_{k=1}^N \beta_k} \ge \sum_{k=1}^N
  \beta_k.
\]
So, the necessary condition for the existence of solution is
\[
  \sum_{j=1}^N \xi_j \ge 1.
\]
This condition is also sufficient: when it holds $\beta_i = \xi_i
b, b > 0$ is a solution.

\subsection{Proof of
\texorpdfstring{\cref{thm:b-mean-symmetric-general}}{Theorem
\ref{thm:b-mean-symmetric-general}}}
\label{sec:b-mean-symmetric-general-proof}

In this case, we have $\gamma_i = (N - 1) \beta$. The inequality will look like
\begin{multline*}
  \frac{(\rho - \beta) ((N - 1) \beta + \rho + \tau)}{(N \beta + \tau) \rho^2}
  - \frac{\lambda}{(N - 1) \beta + \rho + \tau} \ge \frac{\rho + \tau}{\rho
  \tau} - \frac{\lambda}{\rho + \tau} \\
  \iff - \frac{N - 1}{N \rho^2} \beta - \frac{(\rho + \frac{\tau}{N})^2}{\tau
  \rho^2} + \frac{\lambda}{\rho + \tau} + \frac{(\rho +
  \frac{\tau}{N})^2}{N \rho^2 (\beta + \frac{\tau}{N})} - \frac{\lambda}{(N -
  1) (\beta + \frac{\rho + \tau}{N - 1})} \ge 0.
\end{multline*}

Denote left hand part of the inequality as $h_\lambda(\beta)$. Notice that
$h_\lambda(0) = 0$. Thus, $\max_\beta h_\lambda(\beta) \ge 0$. Now, we will
consider the optimal $\beta^*$ for the utility of all participants. To do it,
we calculate the derivative of $h_\lambda$
\[
  h'_\lambda(\beta) = - \frac{N - 1}{N \rho^2} - \frac{(\rho +
  \frac{\tau}{N})^2}{N \rho^2 (\beta + \frac{\tau}{N})^2} + \frac{\lambda}{(N -
  1) (\beta + \frac{\rho + \tau}{N - 1})^2}.
\]
Notice that $h_\lambda$ and $h'_\lambda$ are increasing in $\lambda$.

To study this function, we will take the second derivative
\begin{multline*}
  h''_\lambda(\beta) = 2 \frac{(\rho + \frac{\tau}{N})^2}{N
  \rho^2 (\beta + \frac{\tau}{N})^3} - 2 \frac{\lambda}{(N - 1) (\beta +
  \frac{\rho + \tau}{N - 1})^3} \\
  = \frac{2}{(\beta + \frac{\rho + \tau}{N - 1})^3}
  \Par[\Bigg]{\frac{(\rho + \frac{\tau}{N})^2}{N \rho^2} \Par*{1 + \frac{N \rho
  + \tau}{N (N - 1) (\beta + \frac{\tau}{N})}}^3 - \frac{\lambda}{N - 1}}.
\end{multline*}
As we can see, $h''_\lambda$ can change sign only once from positive to
negative on the interval $[0, \rho]$. Thus, $h'_\lambda$ can change sign only
twice: from negative to positive on the interval where $h''_\lambda > 0$ and
from positive to negative on the interval where $h''_\lambda < 0$.
\begin{enumerate}
  \item If $h''_\lambda$ is always positive $\lambda \le \frac{(N \rho +
    \tau)^2}{(N - 1)^2 \rho^2}$, the minimum of
    $h'_\lambda$ is $h'_\lambda(0)$ and the maximum is $h'_\lambda(\rho)$.
  \item If $h''_\lambda$ changes sign $\frac{(N \rho + \tau)^2 (\rho +
    \tau)^3}{(N - 1)^2 \rho^2 \tau^3} > \lambda > \frac{(N \rho + \tau)^2}{(N -
    1)^2 \rho^2}$, then one of the points $h'_\lambda(0)$ or $h'_\lambda(\rho)$
    is the minimum of $h'_\lambda$.
  \item If $h''_\lambda$ is always negative $\lambda \ge \frac{(N \rho +
    \tau)^2 (\rho + \tau)^3}{(N - 1)^2 \rho^2 \tau^3}$, the minimum of
    $h'_\lambda$ is $h'_\lambda(\rho)$ and the maximum is $h'_\lambda(0)$.
\end{enumerate}

Notice that in the first case
\[
  h'_\lambda(\rho) = -\frac{1}{\rho^2} + \frac{\lambda}{(N - 1) \Par*{\frac{N
  \rho + \tau}{N - 1}}^2} < -\frac{N - 2}{(N - 1) \rho^2} < 0.
\]
Therefore, $\beta^* = 0$. Since we are not interested in this trivial
solutions, we will consider only $\lambda > \frac{(N \rho + \tau)^2}{(N - 1)^2
\rho^2}$.

The previous analysis of the second derivative immediately gives the following
lemma

\begin{lemma}
  \label{lem:b-mean-cases}
  We can describe the behavior of $\beta^*$ using the following five cases
  \begin{enumerate}
    \item $h'_\lambda(0) > 0$ and $h'_\lambda(\rho) > 0$. Then $h'_\lambda$
      is always positive and $\beta^* = \rho$.

    \item $h'_\lambda(0) \le 0$ and $h'_\lambda(\rho) > 0$. Then $h'_\lambda$
      changes sign only once (from negative to positive) and $\beta^* \in \{0,
      \rho\}$.

    \item $h'_\lambda(0) > 0$ and $h'_\lambda(\rho) \le 0$. Then $h'_\lambda$
      changes sign only once (from positive to negative) and $\beta^* \in
      (h'_\lambda)^{-1}(0) \cap (h''_\lambda)^{-1}((-\infty, 0])$.

    \item $h'_\lambda(0) \le 0$, $h'_\lambda(\rho) \le 0$, and
      $h'_\lambda(\beta) = 0$ has roots. Then $h'_\lambda$ changes sign twice
      and $\beta^* \in \{0\} \cup ((h'_\lambda)^{-1}(0) \cap
      (h''_\lambda)^{-1}((-\infty, 0]))$.

    \item $h'_\lambda(0) \le 0$, $h'_\lambda(\rho) \le 0$, and
      $h'_\lambda(\beta) = 0$ does not have roots. Then $h'_\lambda$ is
      negative and $\beta^* = 0$.
  \end{enumerate}
\end{lemma}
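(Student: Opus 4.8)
The plan is a short case analysis that feeds on the structural fact established in the paragraph just before the lemma. First I would record the consequence of that fact: since $h''_\lambda$ changes sign at most once on $[0,\rho]$, and only from $+$ to $-$, the derivative $h'_\lambda$ is unimodal — non-decreasing where $h''_\lambda\ge 0$, then non-increasing where $h''_\lambda\le 0$. Hence $h'_\lambda$ has at most two zeros in $[0,\rho]$, and where it does cross zero it goes first from $-$ to $+$ and then from $+$ to $-$. Translating back to $h_\lambda$: on $[0,\rho]$ the function $h_\lambda$ is non-increasing, then non-decreasing, then non-increasing (with some phases possibly empty), so any maximiser $\beta^*$ lies in $\{0,\rho\}$ or is a critical point in the final non-increasing phase, i.e. a point of $(h'_\lambda)^{-1}(0)\cap(h''_\lambda)^{-1}((-\infty,0])$. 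I would also use $h_\lambda(0)=0$ throughout to discard candidates.

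With this picture in hand, the five cases fall out from the endpoint signs of $h'_\lambda$. If $h'_\lambda>0$ at both endpoints (Case~1), unimodality puts the minimum of $h'_\lambda$ at an endpoint, so $h'_\lambda>0$ everywhere, $h_\lambda$ is strictly increasing, and $\beta^*=\rho$. If $h'_\lambda(0)\le 0<h'_\lambda(\rho)$ (Case~2), unimodality allows only one sign change, from $\le 0$ to $>0$, so $h_\lambda$ decreases then increases, has no interior maximum, and $\beta^*\in\{0,\rho\}$ (decided by comparing $h_\lambda(\rho)$ with $0$). If $h'_\lambda(0)>0\ge h'_\lambda(\rho)$ (Case~3), the one sign change is from $+$ to $-$ and occurs where $h''_\lambda\le 0$, so $h_\lambda$ increases then decreases; since $h_\lambda$ is strictly increasing at $0$, the interior critical point beats $0$ and $\beta^*\in(h'_\lambda)^{-1}(0)\cap(h''_\lambda)^{-1}((-\infty,0])$.

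For the last two cases $h'_\lambda\le 0$ at both endpoints. In Case~4 the existence of a root forces (by unimodality) $h'_\lambda$ to be positive on an interior subinterval and $\le 0$ elsewhere, so $h_\lambda$ is non-increasing, then increasing, then non-increasing; its maximum is either $\beta=0$ (value $0$) or the right end of the increasing subinterval, a zero of $h'_\lambda$ with $h''_\lambda\le 0$, which gives $\beta^*\in\{0\}\cup\bigl((h'_\lambda)^{-1}(0)\cap(h''_\lambda)^{-1}((-\infty,0])\bigr)$. In Case~5 the absence of a root means $h'_\lambda$ has constant sign; being $\le 0$ at the endpoints it is $\le 0$ throughout, so $h_\lambda$ is non-increasing and $\beta^*=0$. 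The only delicate point is the bookkeeping — correctly matching each endpoint-sign configuration to the resulting arrangement of monotonicity phases of $h_\lambda$, and remembering that the lone interior maximiser candidate is the zero of $h'_\lambda$ in its decreasing part (so $h''_\lambda\le 0$ there), not the other zero, which is a local minimum of $h_\lambda$. The genuinely substantive input, that $h''_\lambda$ flips sign at most once, is already in hand, so nothing deep remains.
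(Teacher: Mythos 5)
Your proposal is correct and follows the same route as the paper: the paper derives the lemma "immediately" from the preceding observation that $h''_\lambda$ changes sign at most once on $[0,\rho]$, from positive to negative, so that $h'_\lambda$ is unimodal and can cross zero at most twice (first from $-$ to $+$, then from $+$ to $-$), after which the five cases follow from the endpoint signs exactly as you describe. Your write-up is in fact more explicit than the paper's, which states the conclusion without spelling out the case-by-case bookkeeping.
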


Notice that when $h''_\lambda$ is always negative, case 4 can not be realized.
To distinguish between cases 4 and 5 when $h''_\lambda$ changes sign, one could
find the root of equation $h''_\lambda(\beta_\text{test}) = 0$ and look at the
sign of $h'_\lambda(\beta_\text{test})$. If $h'_\lambda(\beta_\text{test}) <
0$, then case 5 is realized. Otherwise, case 4 is realized.

We can describe cases 1--5 in the following manner. In case 1, when $\lambda$
is very big, the participants care a lot about accuracy and are ready to fully
compromise their privacy to get a good machine learning model. In case 5, when
$\lambda$ is very small, the participants care a lot about privacy and are not
ready to collaborate at all due to this constraint. Cases 2--4 are intermediate
between these two extremes: people care about accuracy of the model and are
ready to collaborate, but want to have some privacy protection.

The conditions in the theorem ensures that case 3 will be realized. To show
this, we will analyze the restrictions for this case
\[
  \begin{cases}
    h'_\lambda(0) &= - \frac{N \rho^2 + 2 \rho \tau + \tau^2}{\rho^2 \tau^2} +
    \frac{\lambda (N - 1)}{(\rho + \tau)^2} \ge 0,\\
    h'_\lambda(\rho) &= -\frac{1}{\rho^2} + \frac{\lambda (N - 1)}{(N \rho +
    \tau)^2} < 0,
  \end{cases}
  \iff \frac{(N \rho^2 + 2 \rho \tau + \tau^2) (\rho + \tau)^2}{(N - 1) \rho^2
  \tau^2} \le \lambda < \frac{(N \rho + \tau)^2}{(N - 1) \rho^2}.
\]
To make this case self-consistent, we should require
\[
  N \rho^3 + (2 N + 2) \rho^2 \tau \le (N^2 - N - 5) \rho \tau^2 + (2 N - 4)
  \tau^3,
\]
which holds when the number of participants is big enough.

In this case $h''_\lambda(\beta^*) \le 0$, $h'_\lambda(\beta^*) = 0$, and
$h_\lambda(\beta^*) > 0$. Using implicit function theorem, we get
\[
  0 = \odv{h'_\lambda}{\xi} = \pdv{h'_\lambda}{\xi} + h''_\lambda(\beta^*)
  \odv{\beta^*}{\xi} \implies \odv{\beta^*}{\xi} = -(h''_\lambda(\beta^*))^{-1}
  \pdv{h'_\lambda}{\xi},
\]
where $\xi$ is any parameter. Thus, the sign of $\odv{\beta^*}{\xi}$ is the
same as the sign of $\pdv{h'_\lambda}{\xi}$. It allows us to get the following
properties
\[
  \odv{\beta^*}{\lambda} \ge 0, \: \odv{\beta^*}{\rho} \ge 0.
\]

To prove these properties, we need to determine the signs of
$\pdv{h'_\lambda}{\lambda}$ and $\pdv{h'_\lambda}{\rho}$. For the first
derivative, it is straightforward
\[
  \pdv{h'_\lambda}{\lambda} = \frac{1}{(N - 1) (\beta^* + \frac{\rho + \tau}{N
  - 1})^2} > 0.
\]
For the second, we will use that $h'_\lambda(\beta^*) = 0$ and
$h''_\lambda(\beta^*) \le 0$
\begin{multline*}
  \pdv{h'_\lambda}{\rho} = \frac{N - 1}{N \rho^3} + \frac{2 (\rho +
  \frac{\tau}{N}) \frac{\tau}{N}}{N \rho^3 (\beta + \frac{\tau}{N})^2} -
  \frac{2 \lambda}{(N - 1)^2 (\beta + \frac{\rho + \tau}{N - 1})^3} \\
  = \frac{\lambda}{(N - 1) \rho (\beta + \frac{\rho + \tau}{N - 1})^2} -
  \frac{(\rho + \frac{\tau}{N})^2}{N \rho^3 (\beta + \frac{\tau}{N})^2} +
  \frac{2 (\rho + \frac{\tau}{N}) \frac{\tau}{N}}{N \rho^3 (\beta +
  \frac{\tau}{N})^2} - \frac{2 \lambda}{(N - 1)^2 (\beta + \frac{\rho + \tau}{N
  - 1})^3} \\
  = \frac{\lambda (\beta + \frac{\tau - \rho}{N - 1})}{(N - 1)^2 \rho (\beta +
  \frac{\rho + \tau}{N - 1})^3} - \frac{\rho^2 - \frac{\tau^2}{N^2}}{N \rho^3
  (\beta + \frac{\tau}{N})^2}
  \ge \frac{(\rho + \frac{\tau}{N})^2 (\beta + \frac{\tau - \rho}{N - 1})}{N
  \rho^3 (\beta + \frac{\tau}{N})^3} -\frac{\rho^2 - \frac{\tau^2}{N^2}}{N
  \rho^3 (\beta + \frac{\tau}{N})^2} \\
  = \frac{(\rho + \frac{\tau}{N})^2 (\beta + \frac{\tau - \rho}{N - 1}) -
  (\rho^2 - \frac{\tau^2}{N^2}) (\beta + \frac{\tau}{N})}{N \rho^3 (\beta +
  \frac{\tau}{N})^3} \ge 0,
\end{multline*}
where we have used that $(N - 1) \beta + \tau - \rho \ge 0$, which follows from
the lemma below.

\begin{lemma}
  \label{lem:b-mean-beta-ineq}
  Assume that $h'_\lambda(\beta) = 0$ and $h''_\lambda(\beta) < 0$. Then $(N -
  1) \beta + \tau \ge \sqrt[3]{N - 1} \rho$.
\end{lemma}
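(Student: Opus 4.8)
The plan is to eliminate the parameter $\lambda$ between the two hypotheses $h'_\lambda(\beta) = 0$ and $h''_\lambda(\beta) < 0$, reduce to a purely algebraic inequality in $\beta$, and then convert a bound on $N\beta + \tau$ into the desired bound on $(N-1)\beta + \tau$.

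First I would introduce the shorthand $A \defeq \beta + \frac{\rho + \tau}{N-1}$ and $B \defeq \beta + \frac{\tau}{N}$, so that, using $(\rho + \frac{\tau}{N})^2 = \frac{(N\rho+\tau)^2}{N^2}$, the explicit expressions for $h'_\lambda$ and $h''_\lambda$ derived above in the proof of \cref{thm:b-mean-symmetric-general} become $\frac{\lambda}{(N-1)A^2} = \frac{N-1}{N\rho^2} + \frac{(N\rho+\tau)^2}{N^3\rho^2 B^2}$ and $\frac{\lambda}{(N-1)A^3} > \frac{(N\rho+\tau)^2}{N^3\rho^2 B^3}$. Writing the left side of the second relation as $\frac1A$ times the left side of the first, substituting, multiplying through by $A > 0$, and cancelling the common $\frac{(N\rho+\tau)^2}{N^3\rho^2 B^2}$ term leaves $\frac{(N\rho+\tau)^2}{N^3\rho^2}\cdot\frac{A-B}{B^3} < \frac{N-1}{N\rho^2}$. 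A direct computation gives $A - B = \frac{\rho+\tau}{N-1} - \frac{\tau}{N} = \frac{N\rho+\tau}{N(N-1)}$, so after clearing denominators this collapses to $B^3 > \frac{(N\rho+\tau)^3}{N^3(N-1)^2}$, i.e. $(N\beta+\tau)^3 (N-1)^2 > (N\rho+\tau)^3$.

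It then remains to pass from $N\beta + \tau$ to $(N-1)\beta + \tau$. The key observation is that $\frac{\tau}{N-1} \ge \frac{\tau}{N}$ for $N \ge 2$ and $\tau > 0$, whence $\frac{(N-1)\beta+\tau}{N-1} = \beta + \frac{\tau}{N-1} \ge \beta + \frac{\tau}{N} = \frac{N\beta+\tau}{N}$, i.e. $(N-1)\beta + \tau \ge \frac{N-1}{N}(N\beta+\tau)$. Cubing, substituting the inequality from the previous paragraph, and using $N\rho + \tau > N\rho$, I get $((N-1)\beta+\tau)^3 \ge \frac{(N-1)^3}{N^3}(N\beta+\tau)^3 > \frac{(N-1)^3}{N^3}\cdot\frac{(N\rho+\tau)^3}{(N-1)^2} = \frac{(N-1)(N\rho+\tau)^3}{N^3} > (N-1)\rho^3$. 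Taking cube roots yields $(N-1)\beta + \tau > \sqrt[3]{N-1}\,\rho$, which is in fact slightly stronger than the stated inequality.

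I expect the main obstacle to be bookkeeping rather than ideas: carrying the rational-function coefficients correctly through the substitution that eliminates $\lambda$, and noticing the $\frac{\tau}{N-1} \ge \frac{\tau}{N}$ step, which is what makes the last paragraph work — the naive bound $N\beta + \tau \le (N-1)\beta + \tau + \rho$ turns out to be far too lossy to close the argument. I would also want to record in passing that $A$, $B$, and $A-B$ are all positive (immediate from $\beta \in [0,\rho]$ and $N \ge 2$), which legitimizes the algebraic manipulations above.
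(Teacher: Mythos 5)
Your proof is correct and follows essentially the same route as the paper's: both eliminate $\lambda$ by substituting the first-order condition into the second-order condition, reduce to the algebraic inequality $(N-1)^2\bigl(\beta+\tfrac{\tau}{N}\bigr)^3 \ge \bigl(\rho+\tfrac{\tau}{N}\bigr)^3$, and then discard the leftover positive $\tau$-terms to obtain the stated bound. The only difference is cosmetic (you clear the $1/N$ factors and work with $N\beta+\tau$, $N\rho+\tau$), and your final chain even yields the strict inequality.
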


To proof this, we notice the following
\begin{multline*}
  \frac{(\rho + \frac{\tau}{N})^2}{N \rho^2 (\beta + \frac{\tau}{N})^3} \le
  \frac{\lambda}{(N - 1) (\beta + \frac{\rho + \tau}{N - 1})^3} 
  \iff \frac{(\rho + \frac{\tau}{N})^2}{N \rho^2 (\beta + \frac{\tau}{N})^3}
  \le \frac{1}{(\beta + \frac{\rho + \tau}{N - 1})} \Par*{\frac{N - 1}{N
  \rho^2} + \frac{(\rho + \frac{\tau}{N})^2}{N \rho^2 (\beta +
  \frac{\tau}{N})^2}} \\
  \iff \Par*{\rho + \frac{\tau}{N}}^2 \Par[\Big]{\beta + \frac{\rho + \tau}{N
  - 1}} \le (N - 1) \Par*{\beta + \frac{\tau}{N}}^3 + \Par*{\rho +
  \frac{\tau}{N}}^2 \Par*{\beta + \frac{\tau}{N}} \\
  \iff \Par*{\rho + \frac{\tau}{N}}^3 \le (N - 1)^2 \Par*{\beta +
  \frac{\tau}{N}}^3 
  \iff \sqrt[3]{N - 1} \Par*{\rho + \frac{\tau}{N}} \le (N -
  1) \beta + \tau - \frac{\tau}{N} \implies \rho \le (N - 1) \beta + \tau.
\end{multline*}

\subsection{Proof of \texorpdfstring{\cref{thm:b-mean-limits}}{Theorem
\ref{thm:b-mean-limits}}}
\label{sec:b-mean-limits-proof}

\subsubsection{\texorpdfstring{$N \to \infty$}{N \textbackslash to
\textbackslash infty}}

In this limit, $h'_\lambda(\rho) \to - \frac{1}{\rho^2} < 0$. According to
\cref{lem:b-mean-cases}, $\beta^* \in \{0\} \cup ((h'_\lambda)^{-1}(0) \cap
(h''_\lambda)^{-1}((-\infty, 0]))$.

We start with the possible solution $h'_\lambda(\beta) = 0$ and
$h''_\lambda(\beta) < 0$. \cref{lem:b-mean-beta-ineq} implies that $(N - 1)
\beta + \tau \ge \sqrt[3]{N - 1} \rho$. Thus, $\beta =
\Omega\Par*{\frac{1}{\sqrt[3]{N^2}}}$. It allows to write the leading terms
in equation $h'_\lambda(\beta) = 0$ in the following manner
\[
  h'_\lambda(\beta) = -\frac{1}{\rho^2} - \frac{1}{N \beta^2} +
  \frac{\lambda}{N \beta^2} + \mathrm{O}\Par*{\frac{1}{\sqrt[3]{N}}}.
\]
It gives $\beta = \sqrt{\frac{\lambda - 1}{N}} \rho +
\mathrm{o}\Par*{\frac{1}{\sqrt{N}}}$. It gives the following expression for the
final gain in utility
\[
  h_\lambda(\beta) = \frac{(\rho - \beta) ((N - 1) \beta + \rho + \tau)}{\rho^2
  (N \beta + \tau)} - \frac{\lambda}{(N - 1) \beta + \rho + \tau} -
  \frac{1}{\rho} - \frac{1}{\tau} + \frac{\lambda}{\rho + \tau} 
  = \frac{\lambda}{\rho + \tau} - \frac{1}{\tau} +
  \mathrm{O}\Par*{\frac{1}{\sqrt{N}}}.
\]
Therefore, in the case $\lambda < \frac{\rho + \tau}{\tau}$, it is optimal to
not collaborate, $\beta^* = 0$. And, in the case, $\lambda > \frac{\rho +
\tau}{\tau}$, it is optimal to collaborate, $\beta^* = \sqrt{\frac{\lambda -
1}{N}} \rho + \mathrm{o}\Par*{\frac{1}{\sqrt{N}}}$.

In the case $\lambda > \frac{\rho + \tau}{\tau}$, in the first approximation,
$\beta^*$ is increasing in $\lambda$ and $\rho$ and decreasing in $N$.
Moreover,
\[
  (\alpha^*)^2 = \frac{1}{\beta^*} - \frac{1}{\rho} \approx
  \frac{\sqrt{\frac{N}{\lambda - 1}} - 1}{\rho}
\]
is decreasing in $\rho$.

\subsubsection{\texorpdfstring{$\rho \to \infty$}{\textbackslash rho
\textbackslash to \textbackslash infty}}

According to \cref{lem:b-mean-cases}, $\beta^* \in \{0, \rho\} \cup
((h'_\lambda)^{-1}(0) \cap (h''_\lambda)^{-1}((-\infty, 0]))$.

First, we consider the possible solution $h'_\lambda(\beta) = 0$ and
$h''_\lambda(\beta) < 0$. \cref{lem:b-mean-beta-ineq} ensures that $\beta \ge
\frac{\rho - \tau}{N - 1}$. In this case,
\begin{multline*}
  h_\lambda(\beta) = \frac{(\rho - \beta)((N - 1) \beta + \rho +
  \tau)}{(N \beta + \tau) \rho^2} - \frac{\lambda}{(N - 1) \beta + \rho +
  \tau} - \frac{1}{\rho} - \frac{1}{\tau} + \frac{\lambda}{\rho + \tau} \\
  = \frac{(\rho - \beta)((N - 1) \beta + \rho)}{N \beta \rho^2} -
  \frac{\lambda}{(N - 1) \beta + \rho} - \frac{1}{\rho} - \frac{1}{\tau} +
  \frac{\lambda}{\rho} + \mathrm{o}\Par*{\frac{1}{\rho}}
  = -\frac{1}{\tau} + \mathrm{O}\Par*{\frac{1}{\rho}} < 0.
\end{multline*}
Similarly, $h_\lambda(\rho) = -\frac{1}{\tau} + \mathrm{O}\Par*{\frac{1}{\rho}}
< 0$. Therefore, $\beta^* = 0$.

\subsubsection{\texorpdfstring{$\rho \to 0$}{\textbackslash rho \textbackslash
to \textbackslash 0}}

According to \cref{thm:b-mean-symmetric-general}, the following condition on
$\lambda$ should hold for the existence of solutions
\[
  \lambda \ge \frac{(N \rho + \tau)^2}{(N - 1)^2 \rho^2}.
\]
However, the right-hand part goes to infinity when $\rho \to 0$. Thus, this
condition is not fulfilled for the small $\rho$.

\subsection{Proof of Theorem \ref{thm:existence_mean}}
\label{sec:existence_mean-proof}

Finally, we prove Theorem \ref{thm:existence_mean} which concerns the existence of mutually-beneficial protocols in the limit case when the number of participants $N\to \infty$.

\paragraph{Proof} Let $U = \{\tilde{u}_1, \ldots, \tilde{u}_{|U|}\}$. For any utility function $\tilde{u}^j$, since $\err^0 > 0$, we have that $\tilde{u}^j\left(\err^0, 0\right) < u^i(0,0)$. Since the function $\tilde{u}^j(0, \epsilon) \in U$ is continuous in $\epsilon$, there exists a value $\epsilon_{j} > 0$, such that $\tilde{u}^j\left(0, \epsilon_{j}\right) \in \left(\tilde{u}^j\left(\err^0, 0\right), \tilde{u}^j\left(0,0\right)\right]$. Let $\tilde{\epsilon} \coloneqq \min_{j} \epsilon_{j}$, so that $\tilde{u}^j\left(0, \tilde{\epsilon}\right)\in\left(\tilde{u}^j\left(\err^0, 0\right), \tilde{u}^j\left(0,0\right)\right]$ for all $j\in [|U|]$.

Since $\leak$ is decreasing in $\alpha$, there exists a value $\alpha_*$, such that $\leak \leq \tilde{\epsilon}$ for all participants. Therefore, for any $i \in [N],j \in [|U|]$, $\tilde{u}^j\left(\err_i^0, 0\right) < \tilde{u}^i\left(0, \tilde{\epsilon}\right) \leq \tilde{u}^i\left(0, \leak\right)$. Now for any $i \in [N], j \in [|U|]$ we have that $\tilde{u}^j(a, \leak)$ is a continuous function in $a$, so there exists a value $a_{i,j}$, such that $\tilde{u}^j(a_{i,j}, \leak) > \tilde{u}^j\left(\err_i^0, 0\right)$. Setting $\tilde{a} = \min_{i,j} a_{i,j}$, we have that $\tilde{u}^j(\tilde{a}, \leak) > \tilde{u}^j\left(\err_i^0, 0\right)$ for all $i \in [N], j\in [|U|]$. 

Now setting $N$ large enough so that $\err \leq \tilde{a}$ for all $i\in [N]$ ensures that $\tilde{u}^j(\err, \leak) \geq \tilde{u}^j(\tilde{a}, \leak) > \tilde{u}^j\left(\err_i^0, 0\right)$ for all $j\in [|U|]$ and any participant.
Therefore, for any set of utility functions from $U$ for the clients, we have that the protocol is mutually beneficial.

\section{ANALYSIS OF DIFFERENT UTILITY FUNCTIONS}
\label{sec:other_utilities}
Here we present additional analysis of our problem, in the case when the
utility of the participants is given by:
\[
  u_i(\err, \leak) = - \leak_i - \lambda_i \err^2_i.
\]
A rationnelle for this utility function is that the $\epsilon$ in DP privacy,
can be thought about as the number of bits that the algorithm's outputs reveil
about its input. Therefore, the loss in utility from a large $\epsilon$ may be
linear in $\epsilon$. At the same time, in our framework, $\err$ corresponds to
the root of the mean squared error of our estimate/end iterate. Therefore,
$\err^2$ corresponds to MSE more commonly used to quantify the loss of a
statistical/ML procedure.

We present an analysis for mean estimation and SGD with DP used as a privacy
notion.

\subsection{Mean estimation with DP}

We consider the mean estimation task with DP. We have $N$ participants who sample $n$ data points each from a global distribution $\mathcal{D}$ on $\mathbb{R}$, with mean $\mu$, variance $\mathbb{E}_{X\sim \mathcal{D}}\left((X - \mu)^2\right) = \sigma^2$ and bounded (almost surely) in $\mu +
  \Br*{-B / 2, B / 2}$. Each participant then communicates a message $m_i = \bar{x}_i + \epsilon_i$, where $\epsilon_i \sim \N(0, \alpha_i^2)$. For simplicity we do not consider Bayesian-optimal actions of the players based on all messages, but a simplified version similar to \cite{d23i}, in which the server communicates the mean of all messages $\bar{m} = \frac{1}{N}\sum m_i$ to all players. The players then accept this estimate, only correcting for their own added noice by taking $\bar{m}_i = \frac{1}{N}(N\bar{m} - m_i + \bar{x}_i)$.

We study the accuracy of the end estimates $\bar{m}_i$ and the privacy of these protocols. For the accuracy, note that Theorem 4.1 in \cite{d23i} implies that:
$$\err_i^2 = \mathbb{E}((\bar{m}_i - \mu)^2) = \frac{\sigma^2}{Nn} + \frac{1}{N^2}\sum_{j\neq i} \alpha_j^2,$$
compared to $\err_i^2 = \frac{\sigma^2}{n}$ when training a model alone. 
 
As shown in the main text, the privacy guarantee for the Gaussian mechanism implies that 
$$
  \leak_i = \frac{\sqrt{2 \ln(1.25 n^2)} B}{n\alpha_i}.
$$
We compare this to $\leak_i = 0$ when training locally.

Therefore, the condition for player $i$ being incentivized to join are as follows: 
\begin{align*}
\forall i \quad u_i \geq u_i^0 & \iff - \leak_i - \lambda_i \err^2_i \geq - \lambda_i \err^2_0 \\
& \iff - \frac{\sqrt{2 \ln(1.25 n^2)} B}{n\alpha_i} - \lambda_i \left(\frac{\sigma^2}{Nn} + \frac{1}{N^2}\sum_{j\neq i} \alpha_j^2\right) \geq - \lambda_i \frac{\sigma^2}{n}\\
& \iff \lambda_i \left(\frac{\sigma^2}{n} - \frac{\sigma^2}{Nn} - \frac{1}{N^2}\sum_{j\neq i} \alpha_j^2\right) \geq \frac{\sqrt{2 \ln(1.25 n^2)} B}{n\alpha_i}.
\end{align*}
Consider values of $\lambda_i$ bounded in some compact domain $\in [c, C]$ for some small constant $c$ and large constant $C$. As predicted by Theorem \ref{thm:existence_mean}, as $N\to \infty$, a mutually beneficial assignment of $\vec{\alpha}$ exists. In contrast, if $n \to \infty$, the left-hand side of the inequality becomes negative, so that no mutually beneficial alocation of $\vec{\alpha}$ exists.

\subsection{SGD with DP}

We consider the same setup as in \cref{sec:sgd_existence}, with batch size $b =
1$ and equal weights $a_i = \frac{1}{N}$. However, we make the following
assumption on the variance:
\begin{equation}
  \label{eqn:bounded_variance}
  \Var_{x}(\g_i(\w, x)) \le M + M_V \norm{\nabla f(\w)}^2 \: \forall w
  \in W.
\end{equation}
Additionally, we assume that $f(w) - f^*$ is bounded over $W$.

\subsubsection{Accuracy guarantees}
Given the assumption above, we apply the following improved version of Lemma
E.1 from \cite{d23i}.
\begin{lemma}[Adapted from Lemma E.1 in \citealt{d23i}]
  Consider $\eta \in \mathbb{N}$, such that $\frac{4}{\mu (\eta + 1)} \le
  \frac{1}{L (\nicefrac{M_V}{N} + 1)}$, the learning rate schedule $\eta^t =
  \frac{4}{\mu (\eta + t)}$. We get the following upper-bound for the
  optimization error:
  \begin{align*}
    \E(f(\w^t) - f(\w^*)) \le \frac{8 L \Par*{\frac{M}{N}+\frac{d
    \sum_{i=1}^N (\alpha_t^i)^2}{N^2}}}{3 \mu^2 t} +
    \O\Par*{\frac{1}{t^4} + \frac{1}{Nt}},
  \end{align*}
  as long as $\Pr(\exists t \le T: \varPi_W(\w^{t-1} - \eta^t \g^t) \ne
  \w^{t-1} - \eta^t \g^t) = \O\Par*{\frac{1}{Nt}}$.
\end{lemma}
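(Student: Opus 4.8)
The plan is to run the classical decreasing–step–size analysis for strongly convex SGD on the squared distance $r_t \defeq \E(\norm{\w^t - \w^*}^2)$, and then convert to the function gap through $L$-smoothness via $f(\w^t) - f^* \le \tfrac{L}{2}\norm{\w^t - \w^*}^2$ (recall $\nabla f(\w^*)=\vec 0$ since $\w^*$ minimizes $f$ on $\mathbb{R}^d$). I will track the origin of the three terms in the target bound separately: the leading $\nicefrac{1}{t}$ term comes from the residual gradient variance, the $\nicefrac{1}{t^4}$ term from the decay of the initial condition under the schedule $\eta^t = \nicefrac{4}{\mu(\eta+t)}$, and the $\nicefrac{1}{Nt}$ term from the rare event that the projection $\varPi_W$ is active. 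Throughout I abbreviate $V \defeq \frac{M}{N} + \frac{d \sum_{i=1}^N (\alpha^i_t)^2}{N^2}$ and $\tilde L \defeq L\Par*{1 + \nicefrac{M_V}{N}}$.

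First I would bound the second moment of the aggregated gradient. With $b=1$ and $a_i = \nicefrac{1}{N}$, the estimate $\g^t = \frac{1}{N}\sum_{i=1}^N \vec{m}^t_i$ is unbiased for $\nabla f(\w^{t-1})$, and by independence of the per-client samples and Gaussian noise (the latter contributing $\E\norm{\vec\xi^t_i}^2 = d\alpha_i^2$), together with the bounded-variance assumption \cref{eqn:bounded_variance}, I get $\E(\norm{\g^t}^2 \mid \w^{t-1}) \le \Par*{1 + \nicefrac{M_V}{N}}\norm{\nabla f(\w^{t-1})}^2 + V$. Next I derive the one-step recursion: using non-expansiveness of $\varPi_W$ (valid since $\w^*\in W$), expanding the square, taking conditional expectation, and invoking $\mu$-strong convexity for the inner product together with \cref{lem:dp-sgd-grad-norm} ($\norm{\nabla f}^2 \le 2L(f-f^*)$), one obtains $r_t \le (1-\mu\eta^t) r_{t-1} - 2\eta^t\Par*{1 - \eta^t \tilde L}\,\delta_t + (\eta^t)^2 V$ with $\delta_t \defeq \E(f(\w^{t-1})-f^*)$. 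The stated condition $\frac{4}{\mu(\eta+1)} \le \frac{1}{L(\nicefrac{M_V}{N}+1)}$ gives $\eta^t \le \eta^1 \le \nicefrac{1}{\tilde L}$, so the middle term is non-positive and can be dropped, leaving the clean recursion $r_t \le \Par*{1 - \frac{4}{\eta+t}} r_{t-1} + \frac{16V}{\mu^2(\eta+t)^2}$.

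I would then solve this recursion with the inductive ansatz $r_t \le \frac{Q}{\eta+t}$, where matching coefficients forces $Q = \frac{16V}{3\mu^2}$ (the denominator $3$ is $c-1$ for $c=4$). The contribution of the initial value is $r_0\prod_{s=1}^{t}\Par*{1 - \frac{4}{\eta+s}}$, and this product telescopes to a ratio of falling factorials of the form $\frac{(\eta-3)(\eta-2)(\eta-1)\eta}{(\eta+t-3)(\eta+t-2)(\eta+t-1)(\eta+t)} = \O\Par*{\nicefrac{1}{t^4}}$, which is exactly the stated $\nicefrac{1}{t^4}$ error. Applying smoothness yields $\E(f(\w^t)-f^*) \le \frac{8LV}{3\mu^2 t} + \O\Par*{\nicefrac{1}{t^4}}$ on the event that no projection occurs up to step $t$. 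Finally, the projection term: since $f-f^*$ is assumed bounded on $W$, I split the expectation over the no-projection event and its complement; the complement has probability $\O\Par*{\nicefrac{1}{Nt}}$ by hypothesis and carries a bounded integrand, contributing the remaining $\O\Par*{\nicefrac{1}{Nt}}$.

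The main obstacle I anticipate is reconciling the projected dynamics with the unconstrained Lemma E.1 of \citet{d23i} from which the result is adapted: the telescoping solution above is cleanest for the unprojected iterate $\w^{t-1} - \eta^t\g^t$, so care is needed to argue that on the high-probability no-projection event the two trajectories coincide and the recursion analysis applies verbatim, while the complementary event is absorbed into the $\O\Par*{\nicefrac{1}{Nt}}$ slack rather than breaking the inductive bound. A secondary delicate point is verifying that the step-size admissibility $\eta^1 \le \nicefrac{1}{\tilde L}$ is precisely what the stated condition encodes, so that the $\delta_t$-term can be discarded and the constant $\nicefrac{8L}{3}$ comes out exactly as claimed.
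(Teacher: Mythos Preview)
Your argument is correct and recovers the exact leading constant $\tfrac{8L}{3\mu^2}$, but it proceeds along a genuinely different line from the paper. The paper does not work with the squared distance $r_t$ at all: it invokes equation~4.23 of \citet{b18o} to obtain a one-step recursion \emph{directly on the function gap},
\[
  \E\bigl(f(\w^{t+1}) - f^*\bigr) \le \Par*{1 - \frac{4}{\eta+t}}\,\E\bigl(f(\w^t) - f^*\bigr) + \frac{8LV}{\mu^2(\eta+t)^2},
\]
and then proves in-line a sharpened Chung lemma to extract $\tfrac{c_1}{c-1}\cdot\tfrac{1}{t} + \O(1/t^c)$ with $c=4$. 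You instead run the elementary iterate-distance recursion, solve it by the same Chung mechanism to get $r_t \le \tfrac{16V}{3\mu^2 t} + \O(1/t^4)$, and convert at the end via $f-f^* \le \tfrac{L}{2}\norm{\w-\w^*}^2$; the factor $\tfrac{L}{2}$ exactly halves your numerator $16$ to $8$, so the constants match.

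Your route is more self-contained (no black-box citation), and it actually handles the projection better than you give it credit for. Because you invoke non-expansiveness of $\varPi_W$ at the very first step, your recursion on $r_t$ already holds for the \emph{projected} iterates, so the obstacle you anticipate in your final paragraph is not an obstacle for your own proof: the bound $\E(f(\w^t)-f^*) \le \tfrac{8LV}{3\mu^2 t} + \O(1/t^4)$ holds unconditionally, and the $\O(1/(Nt))$ term can simply be appended. The paper, by contrast, must condition on the no-projection event because the Bottou recursion it imports is for unconstrained SGD, and only then patches in the complementary event via the boundedness of $f-f^*$ on $W$.
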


\begin{proof}
  The proof follows the arguments in \cite{d23i}, however the dependence of the
  bound on the $\alpha$ parameters is traced explicitly and we improve the
  $\O\Par*{\frac{1}{t^2}}$ dependence to $\O\Par*{\frac{1}{t^4}}$ by carefully
  adapting a classic result from \cite{c54s}. 

  First assume that there is no $t \le T$, such that $\varPi_W(\w^{t-1} -
  \eta^t \g^t) \neq \w^{t-1} - \eta^t \g^t$. Since the stochastic gradients and
  the Gaussian noise are independent, we have:
  \begin{align*}
    \Var\Par*{\frac{1}{N}\sum_{i=1}^N m_i^t} &=
    \E\norm*{\frac{1}{N}\sum_{i=1}^N (g_i^t - \nabla f(\w^{t-1}) + \alpha_i
    \xi_i^t)}^2
    \\& \le \frac{1}{N^2} \Par*{\sum_{i=1}^N  M + M_V \norm{\nabla
    f(\w^{t-1})}^2 + d (\alpha_i)^2}
    \\& = \frac{M + M_V \norm{\nabla f(\w^{t-1})}^2}{N} + \frac{d \sum_{i=1}^N
    (\alpha_i)^2}{N^2}
	\end{align*}
	
	Applying equation 4.23 from \cite{b18o} with
  $\beta\xrightarrow{}\frac{4}{\mu}, \eta \xrightarrow{} \eta, \mu
  \xrightarrow{} 1, M_V \xrightarrow{} M_V/N, M \xrightarrow{} \frac{M}{N} +
  \frac{d\sum_{i=1}^N (\alpha_t^i)^2 }{N^2}, M_G \xrightarrow{} M_V/N + 1$, we
  get
	\begin{equation*}
    \E(f(\w^{t+1}) - f(\w^*)) \le \Par*{1 - \frac{4}{\eta + t}}
    \E\Par*{f(\w^t) - f(\w^*)} + \frac{8 L\left(\frac{M}{N}+\frac{d\sum_{i=1}^N
    (\alpha_i)^2 }{N^2}\right)}{\mu^2 (\eta + t)^2}
	\end{equation*}
	for any $t \ge 1$. To complete the proof, we will use the following version
  of a classic result by Chung:
  \begin{lemma}[Modified from \cite{c54s}]
    \label{lemma:better_chung}
    Let $\{b_n\}_{n\geq 1}$ be a sequence of real numbers, such that for some
    $n_0 \in \mathbb{N}$, it holds that for all $n \geq n_0$, 
    \begin{align*}
      b_{n+1} \leq \left(1 - \frac{c}{n}\right)b_n + \frac{c_1}{n^2},
    \end{align*}
    for some integer $c>1$ and some real-valued constant $c_1>0$. Then
    \begin{align*}
      b_n \leq \frac{c_1}{c - 1}\frac{1}{n} + \O\Par*{\frac{1}{n^c}}.
    \end{align*}
  \end{lemma}

  \begin{proof}
    Note that
    \begin{align*}
      \frac{c_1}{c-1}\left(\frac{1}{n+1} - \left(1 -
      \frac{c}{n}\right)\frac{1}{n}\right) & = \frac{c_1}{c-1}\frac{n^2 -
      n(n+1) + c(n+1)}{n^2 (n+1)}\\
      & = \frac{c_1}{c-1}\frac{cn - n + c}{n^2 (n+1)}\\
      & > \frac{c_1}{c-1}\frac{n(c - 1) + (c - 1)}{n^2 (n+1)}\\
      & \geq \frac{c_1}{n^2}
    \end{align*}
    Therefore, for any $n \ge n_0$, 
    \begin{align*}
      b_{n+1} \leq \left(1 - \frac{c}{n}\right)b_n +
      \frac{c_1}{c-1}\left(\frac{1}{n+1} - \left(1 -
      \frac{c}{n}\right)\frac{1}{n}\right),
    \end{align*}
    so that
    \begin{align*}
      b_{n+1} - \frac{c_1}{c-1}\frac{1}{n+1}\leq \left(1 -
      \frac{c}{n}\right)\left(b_n - \frac{c_1}{c-1}\frac{1}{n}\right),
    \end{align*}
    Denote by $b'_n = b_n - \frac{c_1}{c-1}\frac{1}{n}$, so that $b'_{n+1} \leq
    \left(1-\frac{c}{n}\right)b'_n$ for all $n \geq n_0$. If for some $n_1 >
    c$, $b'_{n_1} \leq 0$, then the same holds for any $n\geq n_1$, i.e.,
    \begin{align*}
      b_n \leq \frac{c_1}{c-1}\frac{1}{n},
    \end{align*}
    whenever $n\geq n_1$. Otherwise, it holds that for all $n \geq c + 1$,
    \begin{align*}
      0 < b'_n \leq b'_{c + 1} \prod_{m=c+1}^{n-1} \left(1 - \frac{c}{m}\right)
      =  b'_{c + 1} \frac{1}{\binom{n-1}{c}} =
      \O\left(\frac{1}{n^c}\right).
    \end{align*}
  \end{proof}

	Let $x_{t+\eta} \defeq \E\left(f(\w^t) - f^*\right)$ for $t \ge 1$ and $x_k
  \defeq \E\left(f(\w^0) - f^*\right)$ for $ k < \eta + 1$. Using
  the inequality above, we get that for all $k \ge \eta + 1$

  \begin{align*}
    x_{k+1} \le \Par*{1 - \frac{4}{k}} x_k + \frac{8 L
    \left(\frac{M}{N}+\frac{d\sum_{i=1}^N (\alpha_i)^2}{N^2}\right)}{\mu^2
    k^2} \implies
    x_t \le \frac{8 L \left(\frac{M}{N}+\frac{d\sum_{i=1}^N
    (\alpha_i)^2}{N^2}\right)}{3 \mu^2 t} + \O\left(\frac{1}{t^4}\right).
  \end{align*}
  Finally, the event that for some $t$ we have $\varPi_W(\w^{t-1} - \eta^t
  \g^t) \neq \w^{t-1} - \eta^t \g^t$ happens with probability
  $\O(\frac{1}{Nt})$ and the loss $f(\w) - f^*$ is bounded by a constant by
  assumption. Therefore, an additional term of $\O\Par*{\frac{1}{Nt}}$ appears,
  which completes the proof.
\end{proof} 

We would assume that the clients will use the following accuracy loss that
corresponds to our upper bound:
\begin{align*}
  \err_i^2 = \frac{8 L \left(\frac{M}{N}+\frac{d\sum_{i=1}^N
  (\alpha_i)^2}{N^2}\right)}{3 \mu^2 n} + \frac{C}{N n},
\end{align*}
where $C$ comes from the the $\O\Par*{\frac{1}{Nt}}$ in the theorem. The $\O\left(\frac{1}{t^4}\right)$-term is ignored as we consider a cross-silo setup, where $n$ is expected to be large.

\subsubsection{Privacy guarantees}
We use \cref{thm:dp-sgd-privacy}, leading to the following bound for the
privacy loss:
\begin{align*}
  \leak_i = \frac{16 \sqrt{2 \me \ln(1.25 n^3) \ln(4 n^2)} B}{\sqrt{n}
  \alpha_i}.
\end{align*}

\subsection{Utility Analysis}

We would get the following system of participation constraints,
\[
  u_i - u^0_i = -\frac{16 \sqrt{2 \me \ln(1.25 n^3) \ln(4 n^2)} B}{\sqrt{n}
  \alpha_i} + \lambda_i \Par*{\frac{8 L M + 3 \mu^2 C}{3 \mu^2 n} - \frac{8 L
  \left(\frac{M}{N}+\frac{d\sum_{i=1}^N (\alpha_i)^2}{N^2}\right)}{3 \mu^2 n} -
  \frac{C}{N t}}.
\]
As we can see, the privacy loss is of order $\frac{1}{\sqrt{n}}$ and the
accuracy benefit is bounded by $\O\Par*{\frac{1}{n}}$. This discrepancy
suggests that we should choose the $\alpha$ to be of order $\sqrt{n}$, but it
will eliminate the accuracy benefits if $N$ is bounded. Therefore, we do not
expect this system to have solutions in the limit $n \to \infty$, $N = \O(1)$.
On the other hand, in the limit $N \to \infty$, we can see that the
contribution of $\alpha_i$ disappears, which suggests that the system will have
solutions.

\end{document}